\newcommand{\algname}{ITLM }
\newcommand{\link}{\omega}
\newcommand{\estname}{TL }
\newcommand{\ratiotrue}{\alpha^\star}
\newcommand{\ratiolearn}{\alpha}
\newtheorem{theorem}{Theorem}
\newtheorem{lemma}[theorem]{Lemma}
\newtheorem{assumption}[theorem]{Assumption}
\newtheorem{definition}[theorem]{Definition}
\title{ Learning with Bad Training Data via Iterative Trimmed Loss Minimization }
\date{}
\author[1]{Yanyao Shen}
\author[1]{Sujay Sanghavi}
\affil[1]{Department of ECE,  The University of Texas at Austin}
\begin{document}

\maketitle

\begin{abstract}
In this paper, we study a simple and generic framework to tackle the problem of learning model parameters when a fraction of the training samples are corrupted. We first make a simple observation: in a variety of such settings, the evolution of training accuracy (as a function of training epochs) is different for clean and bad samples. 
Based on this we propose to iteratively minimize the {\em trimmed loss}, by alternating between (a) selecting  samples with lowest current loss, and (b)  retraining a model on only these samples. We prove that this process recovers the ground truth (with linear convergence rate) in generalized linear models with standard statistical assumptions. Experimentally, we demonstrate its effectiveness in three settings: (a) deep image classifiers with errors only in labels, (b) generative adversarial networks with bad training images, and (c) deep image classifiers with adversarial (image, label) pairs (i.e., backdoor attacks). 
For the well-studied setting of random label noise, our algorithm achieves  state-of-the-art performance without having access to any a-priori guaranteed clean samples. 
\end{abstract}

\section{Introduction}
\label{sec:intro}

State of the art accuracy in several machine learning problems now requires training very large models (i.e. with lots of parameters) using very large training data sets. Such an approach can be very sensitive to the quality of the training data used; this is especially so when the models themselves are expressive enough to fit all data (good and bad) in way that may generalize poorly if data is bad. We are interested both in {\bf poorly curated} datasets -- label errors\footnote{For example, a faulty CIFAR-10 dataset with 30\% of automobile images mis-labeled as ``airplane" (and so on for the other classes) leads to the accuracy of a neural architecture like WideResNet-16 of \cite{Zagoruyko2016WRN} to go from over 90\% to about 70\%.} in supervised settings, and irrelevant samples in unsupervised settings -- as well as  situations like {\bf backdoor attacks} \cite{gu2017badnets} where a small number of adversarially altered samples (i.e. labels and features changed) can compromise security. These are well-recognized issues, and indeed several approaches exist for each kind of training data error; we summarize these in the related work section. However, these approaches are quite different, and in practice selecting which one to apply would need us to know / suspect the form of training data errors a-priori. 

In this paper we provide a single, simple approach that can deal with several such tainted training data settings, based on a {\bf key observation}. We consider the (common) setup where training proceeds in epochs / stages, and inspect the {\em evolution} of the accuracy of the model on the training samples -- i.e. after each epoch, we take the model at that stage and see whether or not it makes an error for each of the training samples. Across several different settings with errors/corruptions in training data, we find that {\em the  accuracy on ``clean" samples is higher than on the ``bad" samples, especially in the initial epochs of training}. 
Figure \ref{fig:motivation} shows four different settings where this is the case. 
This observation suggests a natural approach: iteratively alternate between {\em (a)} filtering out samples with large (early) losses, and {\em (b)} re-training the model on the remaining samples. Both steps can be done in pretty much any machine learning setting: all that is needed is for  one to be able to evaluate losses on training samples, and re-train models from a new set of samples.

\begin{figure}
	\centering
	\includegraphics[width=\linewidth]{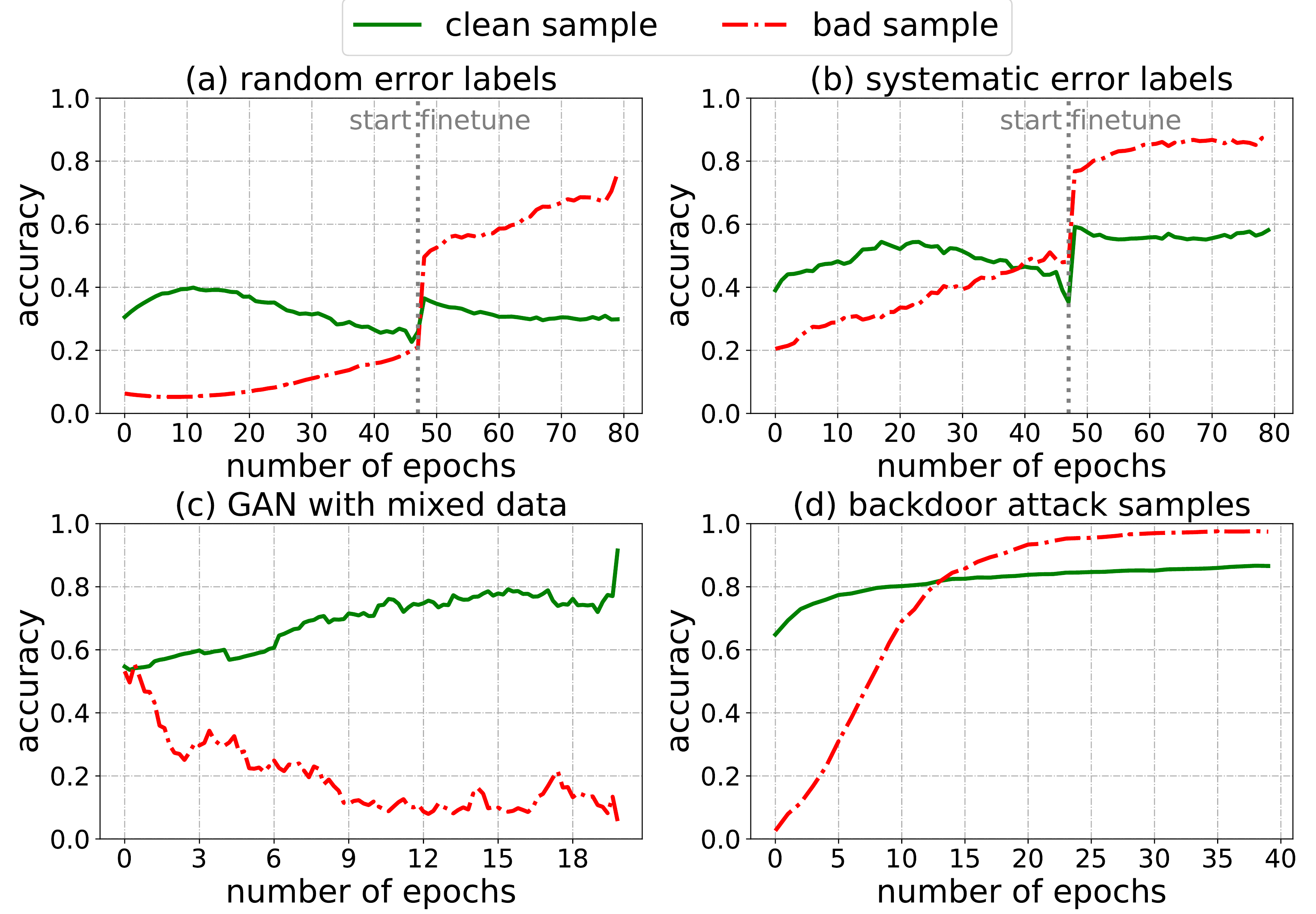}
	\caption{ {\bf Observation:} Evolution of model accuracy for clean and bad samples, as a function of training epochs, for four different tainted data settings: (a) classification for CIFAR-10 with  40\% random errors in labels, (b) classification for CIFAR-10 with 40\% systematic errors in labels, (c) DC-GAN trained on unlabeled mixture of 70\% MNIST images with 30\% Fashion-MNIST images, (d) backdoor attack on classification for CIFAR-10 with 250 watermarked backdoor attach samples as described in \cite{tran2018spectral}. The CIFAR-10 classifications are done using the WideResNet-16 of \cite{Zagoruyko2016WRN}. In all instances models are trained on the respective tainted data. {\bf Early on, models are more accurate on the good samples. }}
	\label{fig:motivation}
\end{figure}

Our approach is related to a classical statistical objective: minimizing the {\bf trimmed loss}. Specifically, given a set of $n$ samples, standard estimation / model-fitting involves choosing model parameters $\theta$ to minimize a loss function over all $n$ samples; in contrast, the trimmed loss estimator involves jointly choosing a subset of $\alpha n$ samples and $\theta$ such that the loss on the subset is minimum (over all choices of subset and parameters). This objective is intractable in general; our approach can be viewed as an iterative way to minimize trimmed loss. 
We describe trimmed loss\footnote{Our framework sounds initially similar to EM-style algorithms like $k$-means. Note however that EM needs to postulate a model for all the data points, while we search over a subset and do not worry about the loss on corrupted points. We are alternating between a simple search over subsets and a fitting problem on only the {\em selected} subset; this is not an instance of EM.}, its known properties, and some new results, in Section \ref{sec:estimator}. 
Our approach (in Section \ref{sec:itlm}) is thus an iterative way to find a trimmed loss estimator, hence we choose to call it Iterative Trimmed Loss Minimization (ITLM). 
 
We propose \algname as a generic approach to the problem of training with tainted data and 
investigate its performance both theoretically and empirically. More specifically, \textbf{our contributions} include: \\
{\bf (a)} In {\bf Section \ref{sec:itlm}}, we analyze \algname applied to a setting where the clean samples come from a ground-truth generalized linear model, and the bad samples have the response variables being (i) arbitrary corruption; (ii) random output; (iii) mixture output. We show \algname converges at least linearly to the ground truth model in all these settings. Our theoretic findings are further verified with synthetic experiments in {\bf Section \ref{sec:sub-synthetic}}.We also include a basic asymptotic property for general functions in {\bf Section \ref{sec:estimator}}; \\
{\bf (b)} In {\bf Section \ref{sec:classification}}, we show \algname can be applied to classification problems with bad labels. For CIFAR-10 classification with random labels, \algname performs better than previous state-of-the-art results, without using any identified clean sample;  \\
{\bf (c)} In {\bf Section \ref{sec:generation} and \ref{sec:backdoor}},  we succesfully apply \algname  to image generation task with bad images and classification task with adversarial (image, label) pairs (backdoor attacks). 

\paragraph{Notations} For integer $m$, $[m]$ denotes the set $\{0,\cdots, m-1\}$. For real number $a$, $\lfloor a\rfloor $ denotes the maximum integer no greater than $a$. $\sigma_{\min}$ and $\sigma_{\max}$ are the minimum/maximum eigenvalues. $a\wedge b$ and $a\vee b$ are shorthands for $\min\{a,b\}, \max\{a,b\}$. $|S|$ is the cardinality for set $S$. For two sets $S_1, S_2$, $S_1\backslash S_2$ is the set of elements in $S_1$ but not in $S_2$. The term \textit{w.h.p.} means \textit{with probability at least $1-n^{-c}$ where $c$ is an aribatrary constant}.

\section{Related Work}
There is a vast literature on bad training data problems. 
We classify the most related work from classic statistics to machine learning frontiers into the following four genres. 

\paragraph{Robust regression} 
There are several classes of robust estimators \cite{huber2011robust}. Among them, Least Trimmed Square (LTS) estimator~\cite{rousseeuw1984least} has high breakdown point and is sample efficient. Following the idea of LTS, several recent works provide algorithmic solutions and analyze their theoretical guarantees \cite{bhatia2015robust,vainsencher2017ignoring,yang2018general}. 
Different from previous works, we provide a fine characterization of the convergence in several settings, which connects to the problems of noisy labels/adversarial backdoor attack in practice. 
We also experimentally explore the overall approach for more complex tasks with deep neural network models. 
Notice that our approach is certainly not the only algorithm solution to finding least trimmed estimators. For example, see \cite{hossjer1995exact,rousseeuw2006computing,shen2013approximate} for algorithm solutions finding the least trimmed loss estimator in linear regression setting. However, compared to other works, our approach is more scalable, and not sensitive to the selection of loss functions.
Another line of recent work on robust regression consider strong robustness where the adversary poisons both the inputs and outputs, in both low-dimensional \cite{diakonikolas2018sever,prasad2018robust,klivans2018efficient} and high dimensional \cite{chen2013robust, balakrishnan2017computationally, liu2018high} settings. 
These algorithms usually require much more computation compared with, e.g., the algorithm we consider in this paper. 

\paragraph{Mixed linear regression}
Alternating minimization type algorithms are used for mixed linear regression with convergence guarantee~\cite{yi2014alternating, balakrishnan2017statistical}, in the setting of two mixtures. 
For  multiple mixture setting, techniques including tensor decomposition are  used~\cite{yi2016solving,zhong2016mixed,sedghi2016provable,li2018learning}, but require either high sample complexity or high computation complexity (especially when number of mixtures is large). 
On the other hand, ~\cite{ray2018search} studies finding a single component in mixture problems using a particular type of side information. 

\paragraph{Noisy label problems}
Classification tasks with noisy labels are also  of wide interest.  \cite{frenay2014comprehensive} gives an overview of the related methods. Theoretical guarantee for noisy binary classification has been studied under different settings ~ \cite{scott2013classification,natarajan2013learning,menon2016learning}. 
More recently, noisy label problem has been studied for DNNs.  
\cite{reed2014training} and \cite{malach2017decoupling} develop the idea of bootstrapping and query-by-committee for DNNs. 
On the other hand, 
\cite{khetan2017learning} and \cite{zhang2018generalized} provide new losses for training under the noise.  \cite{sukhbaatar2014learning} adds a noise layer into the training process, while 
\cite{ren2018learning} provides a meta-algorithm for learning the weights of all samples by heavily referencing to a clean validation data during training. 
\cite{jiang2017mentornet} proposes a data-driven curriculum learning approach. 

\paragraph{Defending backdoor attack} 
Several recent works defend against backdoor attack samples~\cite{gu2017badnets}. 
\cite{tran2018spectral} proposes using spectral signature, where they calculate the top singular vector of a certain layer's representation for all the samples. 
\cite{liu2018fine} proposes pruning  DNNs based on the belief that backdoor samples exploit spare capacity.  \cite{wangneural} uses a reverse engineering approach. \cite{chen2018detecting} detect by activation clustering. 
While the adversary is not allowed to train the model, these approaches do not exploit the evolution of the training accuracy for detecting backdoor samples.

\section{Setup and (Exact) Trimmed Loss Estimator}
\label{sec:estimator}

We now describe the least trimmed loss estimator in general. Let  $s_1,\ldots, s_n$ be the samples, $\theta$ the  model parameters to be learnt, and loss function $f_\theta(\cdot)$. With this setting, the standard approach is to minimize the total loss of all samples, i.e.  $\min_\theta \sum_i f_\theta(s_i)$. In contrast, the least trimmed loss estimator is given by
\begin{align*}
\hat{\theta}^{(\mathtt{TL})} =  \arg\min_{\theta \in \mathcal{B}} \min_{S :|S| = \lfloor \alpha n\rfloor } \sum_{i\in S} {f}_{\theta}(s_i). 
\end{align*}
For finding $\hat{\theta}^{(\mathtt{TL})}$ we need to minimize over {\em both} the set $S$ of size $\lfloor \alpha n \rfloor $ -- where $\alpha\in(0,1)$ is the fraction of samples we want to fit -- and the set of parameters $\theta$. In general solving for the least trimmed loss estimator is hard, even in the linear regression setting~\cite{mount2014least}, i.e., even when $s = (x,y)$ and $f_{\theta}(x,y) = (y-\theta^\top x)^2$. Nevertheless, its statistical efficiency has been studied. In the linear setting, it has a breakdown point of $1/2$ asymptotically~\cite{huber2011robust}, and  is consistent~\cite{vivsek2006least}, i.e., $\hat{\theta}^{(\mathtt{\estname})} \rightarrow \theta^\star$ in probability as $n\rightarrow \infty$. \cite{vcivzek2008general} also shows this property for more general function classes. 

We now present a basic result for more general non-linear functions. Let $\mathcal{B}$ be a compact parametric space, and all samples are i.i.d. generated following certain distribution. 
For $\theta \in \mathcal{B}$, let $D_{\theta}, d_{\theta}$ be the distribution and density function of $f_{\theta}(s)$. 
Let $S(\theta) = \mathbb{E}_s[f_{\theta}(s)]$ be the population loss, and let $S_n(\theta) = \frac{1}{n}\sum_{i=1}^{n} f_{\theta}(s_i)$ be the empirical loss. 
Define $F(\theta) := \mathbb{E}\left[ f_{\theta}(s) \mathbf{I}(f_{\theta}(s)\le D_{\theta}^{-1}(\alpha)) \right]$ as the population trimmed loss. 
Let $\mathcal{U}(\theta, \epsilon) := \{\tilde{\theta} \mid |S(\tilde{\theta}) - S(\theta) | < \epsilon, \tilde{\theta} \in \mathcal{B} \}$  be the set of parameters with population loss close to $\theta$. 
We require the following two natural assumptions: 

\begin{assumption}[Identification condition for $\theta^\star$] \label{ass:ide}
For every $\epsilon>0$ there exists a $\delta>0$ such that if $\theta \in \mathcal{B}\backslash \mathcal{U}(\theta^\star, \epsilon)$,  
we have that $F(\theta) - F(\theta^\star)  > \delta$.
\end{assumption}

\begin{assumption}[Regularity conditions] \label{ass:reg}
	$D_{\theta}$ is absolutely continuous for any $\theta \in \mathcal{B}$. 
	$d_{\theta}$ is bounded uniformly in $\theta \in \mathcal{B}$, and is locally positive in a neighborhood of its $\alpha$-quantile. 
	$f_{\theta}(s)$ is differentiable in $\theta$ for  $\theta \in \mathcal{U}(\theta^\star, \epsilon)$,  for some $\epsilon >0$.
\end{assumption}

The identification condition identifies $\theta^\star$ as achieving the global minimum on the population trimmed loss. The regularity conditions are standard and very general. Based on these two assumptions, we show that \estname is consistent with $\theta^\star$ in empirical loss. 

\begin{lemma}
	Under Assumptions \ref{ass:ide} and \ref{ass:reg}, the estimator $\hat{\theta}^{(\mathtt{\estname})}$ satisfies:
	$
	\left\vert S_n(\hat{\theta}^{(\mathtt{\estname})}) - S_n(\theta^\star)\right\vert \rightarrow 0
	$
	with probability $1$, as $n\rightarrow \infty$.
\end{lemma}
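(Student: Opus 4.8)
The plan is to run a classical Wald-style consistency argument for M-estimators, except that the objective is the \emph{trimmed} loss rather than the full loss, and then to translate the resulting loss-value consistency from the population functional $S$ back to the empirical functional $S_n$. Write the empirical trimmed loss as $F_n(\theta) := \frac{1}{n}\min_{|S|=\lfloor\alpha n\rfloor}\sum_{i\in S} f_\theta(s_i)$, so that $\hat{\theta}^{(\mathtt{TL})}$ is exactly its minimizer; for fixed $\theta$ this is the average of the smallest $\lfloor\alpha n\rfloor$ order statistics of $\{f_\theta(s_i)\}$, i.e. an empirical trimmed mean whose empirical $\alpha$-quantile threshold $\hat{q}_n(\theta)$ plays the role of $D_\theta^{-1}(\alpha)$ in the definition of $F(\theta)$. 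I first note that Assumption~\ref{ass:ide} is precisely the statement that $\theta^\star$ minimizes $F$ with a separation gap: any $\theta$ outside the loss-neighborhood $\mathcal{U}(\theta^\star,\epsilon)$ satisfies $F(\theta)>F(\theta^\star)+\delta$.

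The first and central step is to establish the uniform law of large numbers for the trimmed loss,
\[ \sup_{\theta\in\mathcal{B}} \left| F_n(\theta) - F(\theta) \right| \to 0 \quad\text{a.s.} \]
I would prove this by (i) invoking a Glivenko--Cantelli result for the class $\{\mathbf{I}(f_\theta(\cdot)\le t):\theta\in\mathcal{B},\,t\in\mathbb{R}\}$, so that the empirical loss CDF $D_{\theta,n}$ converges to $D_\theta$ uniformly in both $\theta$ and $t$ --- here compactness of $\mathcal{B}$ together with differentiability (hence continuity) of $f_\theta$ in $\theta$ supplies the needed equicontinuity/bracketing --- and (ii) converting this into control of the empirical trimmed mean. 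For the conversion, local positivity of $d_\theta$ at its $\alpha$-quantile makes $D_\theta^{-1}$ locally Lipschitz, so uniform CDF convergence yields $\sup_\theta|\hat{q}_n(\theta)-D_\theta^{-1}(\alpha)|\to0$; an integration-by-parts (order-statistic) representation of the trimmed mean, bounded using the uniform density bound and the integrability of $f_\theta(s)$ (finiteness of $S(\theta)$), then controls $F_n(\theta)-F(\theta)$ uniformly.

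Given uniform convergence, the argmin step is routine. Since $\hat{\theta}^{(\mathtt{TL})}$ minimizes $F_n$ we have $F_n(\hat{\theta}^{(\mathtt{TL})})\le F_n(\theta^\star)$, and sandwiching with the uniform bound gives $F(\hat{\theta}^{(\mathtt{TL})})\le F(\theta^\star)+o(1)$. If $\hat{\theta}^{(\mathtt{TL})}$ were outside $\mathcal{U}(\theta^\star,\epsilon)$ infinitely often, Assumption~\ref{ass:ide} would force $F(\hat{\theta}^{(\mathtt{TL})})-F(\theta^\star)>\delta$, contradicting the $o(1)$ bound for large $n$; hence eventually $|S(\hat{\theta}^{(\mathtt{TL})})-S(\theta^\star)|<\epsilon$, and since $\epsilon$ is arbitrary, $|S(\hat{\theta}^{(\mathtt{TL})})-S(\theta^\star)|\to0$ a.s. Finally I would pass from $S$ to $S_n$ via
\[ \left| S_n(\hat{\theta}^{(\mathtt{TL})})-S_n(\theta^\star) \right| \le \left| S_n(\hat{\theta}^{(\mathtt{TL})})-S(\hat{\theta}^{(\mathtt{TL})}) \right| + \left| S(\hat{\theta}^{(\mathtt{TL})})-S(\theta^\star) \right| + \left| S(\theta^\star)-S_n(\theta^\star) \right|, \]
where the first term vanishes by the uniform LLN for the untrimmed loss (applied at the random point $\hat{\theta}^{(\mathtt{TL})}$), the middle by the previous step, and the last by the pointwise LLN at $\theta^\star$.

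The main obstacle is the uniform trimmed-loss LLN of the first step: the trimming threshold is a random, $\theta$-dependent empirical quantile, so one cannot merely apply a standard uniform LLN to a fixed function class. Controlling the empirical quantile process uniformly over $\theta$ and transferring that control to the trimmed mean is exactly what Assumption~\ref{ass:reg} is engineered to permit --- absolute continuity makes the population quantile well-defined, the uniform density bound gives Lipschitz control of $D_\theta$, and local positivity at the $\alpha$-quantile guarantees the inverse map behaves well under perturbation. Everything downstream (the argmin comparison and the $S$-to-$S_n$ conversion) is standard once this uniform convergence is in hand.
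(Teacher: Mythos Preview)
Your proposal is correct and follows essentially the same route as the paper: establish a uniform law of large numbers $\sup_{\theta\in\mathcal{B}}|F_n(\theta)-F(\theta)|\to 0$ for the trimmed loss (the paper does this via the same decomposition into an empirical-vs-population quantile term plus an LLN term, citing \v{C}\'{i}\v{z}ek), combine the minimizer property $F_n(\hat\theta^{(\mathtt{TL})})\le F_n(\theta^\star)$ with Assumption~\ref{ass:ide} to force $\hat\theta^{(\mathtt{TL})}\in\mathcal{U}(\theta^\star,\epsilon)$, and finish with a triangle inequality to pass from $S$ to $S_n$. Your write-up is, if anything, more explicit than the paper's about the mechanics of the uniform trimmed-loss LLN and about why the $S$-to-$S_n$ conversion requires a uniform (not just pointwise) LLN for the untrimmed loss at the random point $\hat\theta^{(\mathtt{TL})}$.
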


\section{Iterative Trimmed Loss Minimization} \label{sec:algo}

Our approach to (attempt to) minimize the trimmed loss, by alternating between minimizing over $S$ and $\theta$ is described below.

\begin{algorithm}
	\centering
	\caption{Iterative Trimmed Loss Minimization ($\mathtt{\algname}$)} 
	\label{alg:1}
	\begin{algorithmic}[1]
		\STATE \textbf{Input:}  Samples $\{s_i\}_{i=1}^n$, number of rounds $T$,  fraction of samples $\alpha$
		\STATE \textbf{(Optional) Initialize:}  
		$\theta_0 \leftarrow  \arg\min_{\theta} \sum_{i\in [n]} f_{\theta}(s_i) $
		\STATE \textbf{For} {$t=0, \cdots, T-1$} \textbf{do}
		\STATE  \quad \quad Choose samples with smallest current loss $f_{\theta_t}$:
		\[ S_t \leftarrow \arg\min_{S : |S|=\lfloor \alpha n\rfloor } \sum_{i \in S} {f}_{\theta_t}(s_i)\] \label{alg:step-4} 
		\STATE \quad \quad $\theta_{t+1} = \mathtt{ModelUpdate}(\theta_t, S_t, t)$
		
		\STATE \textbf{Return:} $\theta_T$
	\end{algorithmic}
\end{algorithm}

Here $\mathtt{ModelUpdate}(\theta_t, S_t, t)$ refers to the process of finding a new $\theta$ given sample set $S_t$, using $\theta_t$ as the initial value (if needed) in the update algorithm, and also the round number $t$. For example this could just be the (original, naive) estimator that minimizes the loss over all the samples given to it, which is now $S_t$. 

In this paper we will use batch stochastic gradient as our model update procedure, so we now describe this for completeness.

\begin{algorithm}
	\centering
	\caption{$\mathtt{BatchSGD\_ModelUpdate}(\theta, S, t)$ } 
	\label{alg:2}
	\begin{algorithmic}[1]
		\STATE \textbf{Input:} Initial parameter $\theta$, set $S$, round $t$
		\STATE \textbf{Choose: } Step size $\eta$, number $M$ of gradient steps, batch size $N$
		\STATE \textbf{(Optional)} Re-initialize $\theta^0$ randomly  
		\STATE  \textbf{For} {$j=1,\cdots, M$} \textbf{do}
		\STATE  \quad\quad  $B_j \leftarrow \mathtt{random\_subset}(S, N)$
		\STATE  \quad\quad $\theta^j \leftarrow \theta^{j-1} - \eta \left ( \frac{1}{N} \sum_{i\in B_j}  \nabla_\theta f_{\theta^{j-1}} (s_i) \right )$
		\STATE \textbf{Return: } $\theta^M$
	\end{algorithmic}
\end{algorithm}

Note that for different settings, we use the same procedure as described in Algorithm \ref{alg:1} and \ref{alg:2}, but may select different hyper-parameters. We will clarify the alternatives we use in each part.  

\section{Theoretical Guarantees for Generalized Linear Models} \label{sec:itlm}

We now analyze \algname for generalized linear models with errors in the outputs (but not in the features): we are given samples each of the form $(x,y)$ such that
\begin{equation}
\begin{split}
	y = &  \link( \phi(x)^\top \cdot \theta^\star )  + e, \quad \mbox{ (clean samples)}   \\
	y = &  r + e, \quad\quad \quad \quad \,\,\, \quad \quad \mbox{ (bad samples)} 
\end{split}
\label{eqt:formulation}
\end{equation}
Here $x$ represents the inputs, $y$ the output, embedding function $\phi$ and link function $w$ are known (and possibly non-linear) \footnote{In neural network models, for example, $\phi(x)$ would represent the output of the final representation layer, and we assume that the parameters of the previous layers are fixed.}, $e$ is random subgaussian noise with parameter $\sigma^2$~\cite{vershynin2010introduction}, and $\theta^\star$ is the ground truth. Thus there are errors in outputs $y$ of bad samples, but not the features. Let  $\alpha^\star$ be the fraction of clean samples in the dataset. 

For \algname, we use squared loss, i.e. $f_\theta(x,y) = (y -  \link( \phi(x)^\top \cdot \theta) )^2$. We will also assume the feature matrices are regular, which is defined below.
\begin{definition}[]
	Let $\Phi(X) \in \mathbb{R}^{n\times d}$ be the feature matrix for all  samples, where the $i$th row is  $\phi(x_i)^\top$. Let $\mathcal{W}_k=\{ W\in \mathbb{R}^{n\times n} \vert W_{i,j} =0, W_{i,i}\in \{0,1\}, \mathtt{Tr}(W) = k  \}$.  Define
	\begin{align*}
		\psi^{-}(k) = & \min_{W: W\in \mathcal{W}_k} \sigma_{\min} \left( \Phi(X)^\top W\Phi(X) \right),  \\
		\psi^{+}(k) = & \max_{W:  W\in \mathcal{W}_k} \sigma_{\max} \left(  \Phi(X)^\top W \Phi(X) \right).
	\end{align*}
	We say that $\Phi(X)$ is a regular feature matrix if for $k=\alpha n$, $\alpha \in [c,1]$, $\psi^{-}(k) = \psi^{+}(k) = {\Theta}(n)$ for  $n = {\Omega}(d\log d)$. 
\end{definition}
Regularity states that every large enough subset of samples results in a $\Phi(X)$ that is well conditioned. This holds under several natural settings, please see Appendix for more discussion. We now first present a one-step update lemma for the linear case.

\begin{lemma}[{\bf linear case}] \label{thm:alg}
	Assume $\link (x) = x$ and we are using \algname with $\alpha$. The (for large enough $M$ and small  $\eta$ in Algorithm \ref{alg:2}), the following holds per round update w.h.p.: 
	\begin{align*}
		&\| \theta_{t+1} - \theta^\star \|_2 \le \frac{\sqrt{2}\gamma_t}{\psi^{-}(\alpha n)}  \|\theta_{t} - \theta^\star\|_2  + \frac{
			\sqrt{2} \varphi_t + c \xi_t  \sigma }{\psi^{-}(\alpha n)},
	\end{align*}
	where	$
	\varphi_t =\left\| \sum_{i\in S_t\backslash S^\star} (\phi(x_i)^\top \theta_t - r_i - e_i) \phi(x_i) \right\|_2,$
	and
	$\gamma_t = \psi^{+}(|S_t\backslash S^\star|),  \,\,\xi_t= {\sqrt{\sum_{i=1}^n \|\phi(x_i)\|_2^2 \log n  } }.
	$
\end{lemma}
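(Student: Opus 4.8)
The plan is to reduce the Batch-SGD model update to ordinary least squares on the selected set $S_t$, and then control the estimation error through three contributions: the contraction coming from the clean samples retained in $S_t$, the bias injected by the bad samples that slip into $S_t$, and a noise term. Since $\link(x)=x$, the update minimizes the quadratic $\sum_{i\in S_t}(y_i-\phi(x_i)^\top\theta)^2$ whose Hessian $2\Phi_{S_t}^\top\Phi_{S_t}$ is well conditioned: because $\Phi(X)$ is regular and $|S_t|=\lfloor\ratiolearn n\rfloor$, its eigenvalues lie in $[\psi^{-}(\ratiolearn n),\psi^{+}(\ratiolearn n)]=[\Theta(n),\Theta(n)]$. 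Hence Batch-SGD with small $\eta$ and large $M$ converges linearly to the unique minimizer $\hat\theta_{t+1}=(\Phi_{S_t}^\top\Phi_{S_t})^{-1}\Phi_{S_t}^\top y_{S_t}$, and I would absorb the gap between the finite-step iterate $\theta_{t+1}$ and $\hat\theta_{t+1}$ into the constant factors (the $\sqrt 2$) that multiply the data-dependent terms.

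Next, from the normal equations I would write $\Phi_{S_t}^\top\Phi_{S_t}(\hat\theta_{t+1}-\theta^\star)=\sum_{i\in S_t}\phi(x_i)(y_i-\phi(x_i)^\top\theta^\star)$ and split $S_t$ into its clean part $S_t\cap S^\star$ and bad part $S_t\backslash S^\star$. On clean indices $y_i-\phi(x_i)^\top\theta^\star=e_i$, while on bad indices I would use the identity $r_i-\phi(x_i)^\top\theta^\star=-(\phi(x_i)^\top\theta_t-r_i-e_i)+\phi(x_i)^\top(\theta_t-\theta^\star)-e_i$ to reorganize the right-hand side into three pieces: a clean-noise vector $\sum_{i\in S_t\cap S^\star}\phi(x_i)e_i$; the vector whose norm is exactly $\varphi_t$; and a curvature term $\big(\sum_{i\in S_t\backslash S^\star}\phi(x_i)\phi(x_i)^\top\big)(\theta_t-\theta^\star)$. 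Inverting and using $\sigma_{\min}(\Phi_{S_t}^\top\Phi_{S_t})\ge\psi^{-}(\ratiolearn n)$, the curvature term is bounded by $\sigma_{\max}\big(\sum_{i\in S_t\backslash S^\star}\phi(x_i)\phi(x_i)^\top\big)\|\theta_t-\theta^\star\|_2\le\psi^{+}(|S_t\backslash S^\star|)\|\theta_t-\theta^\star\|_2=\gamma_t\|\theta_t-\theta^\star\|_2$, and the second term equals $\varphi_t$ by definition; only the clean-noise vector remains.

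The main obstacle is bounding $\big\|\sum_{i\in S_t\cap S^\star}\phi(x_i)e_i\big\|_2$ by $c\,\xi_t\sigma$: the set $S_t\cap S^\star$ depends on the data through $\theta_t$, so a per-subset subgaussian concentration does not suffice, and a naive union bound over all $2^n$ subsets would inflate the rate to $\Theta(\sigma n)$. The resolution I would use is that $S_t$ is the bottom-$\lfloor\ratiolearn n\rfloor$ level set of the losses $(y_i-\phi(x_i)^\top\theta_t)^2$, hence is determined by the single vector $\theta_t\in\mathbb R^d$; the pairwise comparisons $(y_i-\phi(x_i)^\top\theta)^2-(y_j-\phi(x_j)^\top\theta)^2$ factor into a product of two affine functions of $\theta$ and are therefore quadratic in $\theta$, so the number of distinct subsets realizable as $\theta$ ranges over $\mathbb R^d$ is only $n^{O(d)}$. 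Union-bounding the subgaussian tail of $\sum_{i\in S}\phi(x_i)e_i$ (covariance proxy $\sigma^2\Phi_S^\top\Phi_S$, operator norm $\Theta(n)$) over these polynomially many subsets, together with an $\epsilon$-net over the unit sphere, produces only a $\log n$ factor and yields $c\,\sigma\sqrt{\sum_i\|\phi(x_i)\|_2^2\log n}=c\,\xi_t\sigma$ w.h.p.

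Finally I would combine the three bounds by the triangle inequality and divide by $\psi^{-}(\ratiolearn n)$, obtaining $\|\theta_{t+1}-\theta^\star\|_2\le\big(\gamma_t\|\theta_t-\theta^\star\|_2+\varphi_t+c\,\xi_t\sigma\big)/\psi^{-}(\ratiolearn n)$, with the $\sqrt 2$ on the two data-dependent terms accounting for the Batch-SGD-to-least-squares gap, which is the stated per-round inequality. I expect the controlled-cardinality concentration of the clean-noise vector to be the crux; the remaining algebra (normal equations, the reorganizing identity, and the spectral bounds from regularity) is routine.
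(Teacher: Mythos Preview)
Your route---reduce to OLS on $S_t$, write the normal equations, split $S_t$ into its clean and bad parts, and isolate a curvature piece, the $\varphi_t$ piece, and a noise piece---is exactly the paper's strategy, but two details diverge. First, the $\sqrt{2}$ has nothing to do with a Batch-SGD--to--OLS gap: the paper simply treats ``large $M$, small $\eta$'' as returning the exact least-squares solution on $S_t$ and analyzes that directly. The $\sqrt{2}$ appears because the paper groups the curvature and $\varphi_t$ contributions into one term $\mathcal{T}_2=\|\Phi^\top(W-WW^\star)(\Phi\theta^\star-r-e)\|_2$, inserts $\Phi\theta_t$, and then applies $\|a+b\|_2^2\le 2\|a\|_2^2+2\|b\|_2^2$; your purely additive triangle-inequality decomposition would in fact give constant $1$ in front of both terms, which is fine (even slightly sharper), so you should not invent an optimization-error explanation for the extra slack. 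Second, your handling of the clean-noise term is substantially more elaborate than the paper's. The paper does not union-bound over realizable subsets at all; it just writes $\|\Phi^\top WW^\star e\|_2^2\le e^\top\Phi\Phi^\top e$, i.e.\ drops the subset selection entirely, and then bounds the full-design quantity $\sum_{k=1}^d\big(\sum_{j=1}^n e_j\phi(x_j)_k\big)^2$ by coordinate-wise sub-exponential concentration, which immediately yields $c\,\xi_t\sigma$. Your arrangement-counting argument over $\theta\in\mathbb{R}^d$ is more scrupulous about the data dependence of $S_t$ and would also work, but it is not what the paper does and is not required for the lemma as stated.
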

This Lemma \ref{thm:alg} bounds the error in the next step based on the error in the current step, how mismatched the set $S_t$ is as compared to the true good set $S^\star$, and the regularity parameters. The following does the same for the more general non-linear case. 

\begin{lemma}[{\bf non-linear case}]\label{thm:nonlinear}
	Assume $\link: \mathbb{R}\rightarrow \mathbb{R}$  monotone and differentiable. Assume $\link'(u)\in [a,b]$ for all $u\in \mathbb{R}$, where $a,b$ are positive constants. Then, for \algname with $\alpha$ (and $M=1$ and $N=|S|$ in Algorithm \ref{alg:2}),  w.h.p., 
	\begin{align*}
	&\|\theta_{t+1} - \theta^\star \|_2 
	\le \left( 1\! -\! \frac{\eta}{\alpha n}\!{a^2 \psi^{-}(\alpha n)} \! \right) \|\theta_t\! -\! \theta^\star \|_2 
	\!+\! {\eta}\frac{  \tilde{\varphi}_t   + \xi_t b \sigma }{\alpha n}, 
	\end{align*}
	where $\xi_t$ is the same as in  Lemma  \ref{thm:alg}, and
	$\tilde{\varphi} =  \left\| \sum_{i\in S_t\backslash S^\star} \left( w(\phi(x_i)^\top \theta^\star) - r_i-e_i\right) w'(\phi(x_i)^\top \theta^\star) \phi(x_i)\right\|$.
\end{lemma}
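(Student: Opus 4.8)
The plan is to unfold the model update for the specific choice $M=1,\ N=|S_t|$ (with the re-initialization off, so the inner loop starts at $\theta_t$) used in the statement, so that one round of \algname is exactly a single full-batch gradient step on the selected set,
\[
\theta_{t+1} = \theta_t - \frac{\eta}{|S_t|}\sum_{i\in S_t}\nabla_\theta f_{\theta_t}(x_i,y_i),
\qquad
\nabla_\theta f_{\theta}(x_i,y_i) = -2\big(y_i-\link(\phi(x_i)^\top\theta)\big)\,\link'(\phi(x_i)^\top\theta)\,\phi(x_i),
\]
where I absorb the factor $2$ into $\eta$ and use $|S_t|=\lfloor\ratiolearn n\rfloor$ to match the $\ratiolearn n$ normalization in the claim.

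\textbf{Decomposition.} First I would subtract $\theta^\star$ and split off the component proportional to $\theta_t-\theta^\star$. For every $i\in S_t$ the mean value theorem gives $\link(\phi(x_i)^\top\theta_t)-\link(\phi(x_i)^\top\theta^\star)=\link'(\zeta_i)\,\phi(x_i)^\top(\theta_t-\theta^\star)$ for an intermediate $\zeta_i$, hence $y_i-\link(\phi(x_i)^\top\theta_t)=\big(y_i-\link(\phi(x_i)^\top\theta^\star)\big)-\link'(\zeta_i)\phi(x_i)^\top(\theta_t-\theta^\star)$. Substituting into the update and collecting the terms linear in $\theta_t-\theta^\star$ over all of $S_t$ yields
\[
\theta_{t+1}-\theta^\star=\Big(I-\tfrac{\eta}{\ratiolearn n}H\Big)(\theta_t-\theta^\star)+\tfrac{\eta}{\ratiolearn n}\,(z_{\mathrm{cln}}+z_{\mathrm{bad}}),
\]
with $H=\sum_{i\in S_t}\link'(\zeta_i)\link'(\phi(x_i)^\top\theta_t)\,\phi(x_i)\phi(x_i)^\top$. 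Here $z_{\mathrm{cln}}=\sum_{i\in S_t\cap S^\star}e_i\link'(\phi(x_i)^\top\theta_t)\phi(x_i)$ uses $y_i-\link(\phi(x_i)^\top\theta^\star)=e_i$ on clean samples, while $z_{\mathrm{bad}}=\sum_{i\in S_t\setminus S^\star}\big(r_i+e_i-\link(\phi(x_i)^\top\theta^\star)\big)\link'(\phi(x_i)^\top\theta_t)\phi(x_i)$ collects the bad residuals; $\|z_{\mathrm{bad}}\|$ is precisely $\tilde\varphi_t$, matching the claim up to evaluating the outer $\link'$ at $\theta_t$ rather than $\theta^\star$, which the uniform bound $\link'\in[a,b]$ lets me interchange.

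\textbf{The three bounds.} For the contraction factor I note that $H$ is symmetric positive semidefinite, and since $\link'\in[a,b]$ each scalar weight $\link'(\zeta_i)\link'(\phi(x_i)^\top\theta_t)\ge a^2$, so $H\succeq a^2\sum_{i\in S_t}\phi(x_i)\phi(x_i)^\top=a^2\,\Phi(X)^\top W_{S_t}\Phi(X)$ for the selection matrix $W_{S_t}\in\mathcal{W}_{\lfloor\ratiolearn n\rfloor}$; the regularity definition then gives $\sigma_{\min}(H)\ge a^2\psi^-(\ratiolearn n)$ and $\sigma_{\max}(H)\le b^2\psi^+(\ratiolearn n)=\Theta(n)$. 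Choosing $\eta$ small enough that $\tfrac{\eta}{\ratiolearn n}\sigma_{\max}(H)\le 1$ makes $I-\tfrac{\eta}{\ratiolearn n}H$ positive semidefinite with operator norm at most $1-\tfrac{\eta}{\ratiolearn n}a^2\psi^-(\ratiolearn n)$, the leading coefficient in the statement. The bad-sample term contributes the deterministic floor $\tfrac{\eta}{\ratiolearn n}\tilde\varphi_t$. For the noise term I would bound $\|z_{\mathrm{cln}}\|\le b\sigma\xi_t$ with high probability by subgaussian concentration, using $\link'\le b$ so that each summand is $e_i$ times a vector of norm at most $b\|\phi(x_i)\|$ and $\sum_i\|\phi(x_i)\|^2$ controls the variance proxy. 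Combining the three pieces with the triangle inequality gives the stated recursion.

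\textbf{Main obstacle.} The genuinely delicate step is the noise bound, because the selected set $S_t$ and the current iterate $\theta_t$ both depend on the noise $\{e_i\}$, so $z_{\mathrm{cln}}$ is not a sum of independent mean-zero vectors with fixed weights. I expect to handle this by making the estimate uniform over the data-dependent selection --- controlling the supremum over admissible index subsets and over $\theta$ in a cover of $\mathcal{B}$ --- which is where the $\log n$ factor inside $\xi_t$ and the ``\textit{w.h.p.}'' qualifier enter; the regularity assumption $n=\Omega(d\log d)$ ensures the resulting covering/union bound stays lower order than the $\Theta(n)$ contraction. The remaining subtlety is keeping the derivative factors $\link'(\zeta_i)$ and $\link'(\phi(x_i)^\top\theta_t)$ pinned in $[a^2,b^2]$ throughout, which is immediate from the monotonicity and boundedness assumptions on $\link'$.
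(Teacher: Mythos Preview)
Your proposal is correct and follows essentially the same route as the paper: write the single full-gradient step, apply the mean value theorem to linearize $\link(\phi(x_i)^\top\theta_t)-\link(\phi(x_i)^\top\theta^\star)$ via a diagonal weight matrix (the paper's $H_tC_t$, your $H$), and split into the contraction, bad-sample, and noise pieces with the same bounds $a^2\psi^-(\alpha n)$, $\tilde\varphi_t$, and $b\xi_t\sigma$. Your observation that the natural decomposition evaluates the outer $\link'$ at $\theta_t$ rather than $\theta^\star$ in $\tilde\varphi_t$ is apt --- the paper's own proof has exactly this mismatch with its statement --- and your discussion of the noise/selection dependence is, if anything, more careful than what the paper spells out.
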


\paragraph{Remarks:} A few comments \\
{\bf (1)} Lemma \ref{thm:alg} and Lemma \ref{thm:nonlinear} directly lead to the consistency of the algorithm in the clean data setting ($\alpha^\star=1$). This is because $|S_t\backslash S^\star| = 0$, which makes both $\alpha_t$ and $\psi_t$ become zero. Moreover, $\xi_t$ is sublinear in $n$, and can be treated as the statistical error. 
While this consistency property is not very surprising (remind that Section \ref{sec:estimator} shows \estname estimator has consistent performance for very general class of functions), the update property helps us better analyze convergence behavior in multiple corruption settings. \\
{\bf (2)} In \cite{bhatia2015robust}, convergence of  the parameter is characterized by the linear convergence of $\sum_{i\in S_t}r_i^2$ with constant rate. 
Here, by directly characterizing the convergence in parameter space, we gain several additional benefits: 
(a) generality: we can directly analyze our result under several settings, including arbitrary corruption, random output, and mixture output; 
(b) finer chracterization: we see that the rate depends on $\alpha_t/ \psi^{-}(\alpha n)$, which goes down as long as $|S_t\backslash S^\star|$ goes down. We can have a finer characterization of the convergence, e.g., super-linear convergence for the latter two corruption settings. 

Next, we specialize our analysis into several bad training data settings. In the main paper we state Theorem \ref{thm:random} and Theorem \ref{theorem:se} for the linear setting $\link(x) = x$, while Theorem \ref{thm:app-1} and Theorem \ref{thm:app-2} in the appendix represent the same for the non-linear setting. 

Our first result shows per-round convergence when a fraction $1-\alpha^\star$ samples are either arbitrarily or randomly corrupted, and we choose the $\alpha$ in \algname to be $\alpha < \alpha^*$.
\begin{theorem}[{\bf arbitrary/random corruptions}] \label{thm:random} 
	Assume $\link (x) = x$. We are given  clean sample ratio $\alpha^\star > c_\mathtt{th}$, and \algname with $\alpha$ such that  $\alpha<  \alpha^\star$ and sample size $n=\Omega(d\log d)$. Then w.h.p., we have:
	\begin{align*}
		&\|\theta^\star\! -\! \theta_{t+1} \|_2 \le \kappa_t \|\theta^\star\! -\! \theta_t \|_2 + c_1\sqrt{ \kappa_t   } \sigma\!+\! \frac{c_2\xi_t  }{n} \sigma,
	\end{align*}
	where $\kappa_t \le \frac{1}{2}$ when $r$ is arbitrary, and  $\kappa_t \le c \{\sqrt{\|\theta_t - \theta^\star\|_2^2 + \sigma^2} \vee \frac{\log n}{n}\}$ when $r$ is random sub-Gaussian output. All the $c$ constants depend on the regularity conditions.
\end{theorem}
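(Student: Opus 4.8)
The plan is to derive both corruption models from the one-step bound in Lemma \ref{thm:alg} by controlling a single quantity, namely $|S_t\backslash S^\star|$, the number of corrupted samples that survive the trimming step. I take $S^\star$ to be the full set of clean indices, so that $S_t\backslash S^\star$ is exactly the set of bad samples selected into $S_t$ (consistent with the fact that only bad samples carry a response $r_i$, which is what appears in $\varphi_t$). Feeding a bound on $|S_t\backslash S^\star|$ into $\gamma_t=\psi^{+}(|S_t\backslash S^\star|)$ and invoking regularity ($\psi^{\pm}(\alpha n)=\Theta(n)$, together with the near-linear growth $\psi^{+}(k)=O(k)$ for bounded features) turns the leading coefficient $\sqrt2\,\gamma_t/\psi^{-}(\alpha n)$ of Lemma \ref{thm:alg} directly into $\kappa_t$. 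So the whole theorem reduces to (i) bounding $|S_t\backslash S^\star|$ in each model, and (ii) reorganizing the $\varphi_t$ and $\xi_t$ terms into the stated noise floor.

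For step (ii) I would bound $\varphi_t=\|\Phi_{S_t\backslash S^\star}^\top v\|_2\le\sqrt{\gamma_t}\,\|v\|_2$ by Cauchy--Schwarz, where $v_i=\phi(x_i)^\top\theta_t-r_i-e_i$ is the residual of a selected bad sample. The optimality of $S_t$ forces every selected sample to have loss no larger than every non-selected one, so I can match each selected bad sample to a displaced clean sample of loss $(\phi(x_i)^\top(\theta^\star-\theta_t)+e_i)^2$; summing over the matching and concentrating $\sum e_i^2$ gives $\|v\|_2^2\lesssim \gamma_t\|\theta_t-\theta^\star\|_2^2+|S_t\backslash S^\star|\,\sigma^2$. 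Dividing by $\psi^{-}(\alpha n)=\Theta(n)$ folds the first piece into the contraction term and leaves a noise contribution of order $(\sqrt{\gamma_t|S_t\backslash S^\star|}/n)\,\sigma\le\kappa_t\sigma\le\sqrt{\kappa_t}\,\sigma$ (since $\kappa_t\le1$), which is the $c_1\sqrt{\kappa_t}\,\sigma$ term; the remaining $c\,\xi_t\sigma/\psi^{-}(\alpha n)$ is exactly $c_2\xi_t\sigma/n$.

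The arbitrary-corruption bound on $|S_t\backslash S^\star|$ is the easy case: the number of selected bad samples cannot exceed the total number of bad samples, so $|S_t\backslash S^\star|\le(1-\alpha^\star)n$ and $\kappa_t\le\sqrt2\,\psi^{+}((1-\alpha^\star)n)/\psi^{-}(\alpha n)$. Under regularity this is a fixed multiple of $(1-\alpha^\star)/\alpha$, so choosing the threshold $c_\mathtt{th}$ (together with $\alpha<\alpha^\star$) so that this multiple is at most $1/2$ yields $\kappa_t\le1/2$.

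The main obstacle is the random-output case, where $|S_t\backslash S^\star|$ must be shown to shrink with $\|\theta_t-\theta^\star\|_2$; I would handle it with a two-step quantile/anti-concentration argument. First, fix a \emph{deterministic} threshold $\bar\tau=O(\|\theta_t-\theta^\star\|_2^2+\sigma^2)$ slightly above the $(\alpha/\alpha^\star)$-quantile of the clean-loss distribution, and use concentration of the clean losses $(\phi(x_i)^\top(\theta^\star-\theta_t)+e_i)^2$ (via regularity and sub-Gaussian $e_i$) to argue that at least $\alpha n$ clean samples have loss $\le\bar\tau$ w.h.p.; this forces \emph{every} selected sample, in particular every selected bad one, to have loss $\le\bar\tau$. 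Second, a bad sample $j$ has loss $\le\bar\tau$ only if its random response falls in an interval of width $2\sqrt{\bar\tau}$, so anti-concentration of the sub-Gaussian $r_j$ (bounded density) bounds each such event by $O(\sqrt{\bar\tau})=O(\sqrt{\|\theta_t-\theta^\star\|_2^2+\sigma^2})$; once $\bar\tau$ is fixed these events are independent across bad samples, and a Bernstein bound gives $|S_t\backslash S^\star|\le c\,n\,(\sqrt{\|\theta_t-\theta^\star\|_2^2+\sigma^2}\vee \log n/n)$ w.h.p., the $\log n/n$ floor arising from the Bernstein deviation when the mean count is small. Dividing by $\psi^{-}(\alpha n)=\Theta(n)$ produces the claimed $\kappa_t$. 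The delicate points are decoupling the data-dependent selection threshold from the per-sample acceptance events (resolved by passing to the deterministic $\bar\tau$) and extracting the correct $\log n/n$ concentration floor.
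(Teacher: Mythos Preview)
Your proposal is correct and follows essentially the same route as the paper: start from the one-step Lemma~\ref{thm:alg}, control $\varphi_t$ via the matching of selected bad samples to displaced clean samples (the paper writes this as a permutation matrix $P$ acting on the residual vector rather than a Cauchy--Schwarz step, but the content is identical), and then bound $|S_t\backslash S^\star|$ trivially by $(1-\alpha^\star)n$ in the arbitrary case and by the two-step quantile/anti-concentration argument in the random case. The paper isolates that last count as a standalone helper result (Theorem~\ref{lem:help}) phrased for two Gaussian scales, and obtains the $\sqrt{\kappa_t}\,\sigma$ noise term directly by using the crude bound $\|e\|_2^2\le c n\sigma^2$ on the permuted noise rather than your sharper $|S_t\backslash S^\star|\sigma^2$, which you then relax anyway.
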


\paragraph{Remark} In both settings, we show at least  linear convergence performance for per-round update. 
On the other hand, even in the infinite sample setting, the second term would not go to zero, which implies that our theoretic guarantee does not ensure consistency. 
In fact, we show in the next section that our analysis is tight, i.e., \algname indeed gives inconsistent estimation. 
However, if the noise is small, \algname will converge very closely to the ground truth. 
The proof is in Appendix. 

We now consider the case when the data comes from a mixture model. We provide local convergence result that characterizes the performance of \algname for this setting. 
More specifically, the full set of samples $S = [n]$ is splitted into $m$ sets: $S = \cup_{j\in [m]}S_{(j)}$, each corresponding to samples from one mixture component, $|S_{(j)}| = \alpha_{(j)}^\star n$. 
The response variable $y_i$ is given by: 
\begin{align}\label{eqt:mlr}
y_i =  
\link \left( \phi(x_i)^\top \theta_{(j)}^\star \right)  + e_i, \mbox{ for } i\in S_{(j)}.
\end{align}
Fitted into the original framework, $r_i = \link(\phi(x_i)^\top \theta_{(j)}^{\star}) $ for some $j\in [m]\backslash \{0\}$. 
Similar to previous literatures \cite{yi2014alternating,zhong2016mixed}, we consider $\phi(x) \sim \mathcal{N}(0, I_d)$. 
We have the following  convergence guarantee in the mixture model setting: 

\begin{theorem}[{\bf mixed regression}] \label{theorem:se}
	Assume $\link (x) = x$ and consider \algname with $\alpha$. For the mixed regression setting in (\ref{eqt:mlr}), suppose that for some component $j\in [m]$, we have that $\alpha < \alpha_{(j)}^\star$. 
	Then, for  $n= \Omega( d\log d)$, w.h.p., the next iterate $\theta_{t+1}$ of the algorithm satisfies 
	\begin{align*}
	&\|\theta_{t+1}\! -\! \theta_{(j)}^\star \|_2 \le \kappa_t \|\theta_t\! -\! \theta_{(j)}^\star \|_2\! +\! c_1{\sqrt{ {\kappa_t }  }} \sigma\! +\! \frac{c_2\xi_t}{ n}\sigma, 
	\end{align*}
	where $\kappa_t \le c  \left\{  \frac{\sqrt{\|\theta_t - \theta_{(j)}^\star \|_2^2 + \sigma^2}}{\min_{k\in[m]\backslash \{j\}}  \sqrt{\|\theta_t - \theta_{(k)}^\star\|_2^2 + \sigma^2} } \vee \frac{\log n}{n} \right\}$. 
\end{theorem}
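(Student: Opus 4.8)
The plan is to reduce Theorem \ref{theorem:se} to the one-step update Lemma \ref{thm:alg}, applied with the role of the ground truth played by $\theta_{(j)}^\star$ and the role of the ``clean'' set $S^\star$ played by the true component set $S_{(j)}$. Under this identification the residual $r_i$ for a sample $i$ in component $k\neq j$ is $\phi(x_i)^\top\theta_{(k)}^\star$, so Lemma \ref{thm:alg} already gives
\[
\|\theta_{t+1}-\theta_{(j)}^\star\|_2 \le \frac{\sqrt2\,\gamma_t}{\psi^-(\alpha n)}\|\theta_t-\theta_{(j)}^\star\|_2 + \frac{\sqrt2\,\varphi_t + c\,\xi_t\sigma}{\psi^-(\alpha n)},
\]
with $\gamma_t = \psi^+(|S_t\backslash S_{(j)}|)$ and $\varphi_t = \|\sum_{i\in S_t\backslash S_{(j)}}(\phi(x_i)^\top\theta_t - r_i - e_i)\phi(x_i)\|_2$. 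Everything then comes down to controlling the number of \emph{mis-selected} samples $|S_t\backslash S_{(j)}|$ and the mis-selection term $\varphi_t$; the remaining term $c\xi_t\sigma/\psi^-(\alpha n)$ becomes the stated $c_2\xi_t\sigma/n$ once regularity gives $\psi^-(\alpha n)\asymp n$.

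First I would set up the residual distributions. With $\phi(x_i)\sim\mathcal N(0,I_d)$ and independent sub-Gaussian noise, for a sample from component $k$ the residual $R_i := \phi(x_i)^\top\theta_t - r_i - e_i$ is (conditionally on $\theta_t$) mean-zero with variance $v_k := \|\theta_t-\theta_{(k)}^\star\|_2^2+\sigma^2$, and in the Gaussian idealization is exactly $\mathcal N(0,v_k)$. The selection step keeps the $\alpha n$ samples with smallest $R_i^2$, i.e.\ those with $|R_i|\le\sqrt{\tau}$ for a data-dependent threshold $\tau$. The key computation is: since $\alpha<\alpha_{(j)}^\star$ and $j$ is the best-fit component, the threshold is pinned (up to constants and an $O(\log n/n)$ empirical-quantile fluctuation) by component $j$ alone, so $\sqrt{\tau}\asymp\sqrt{v_j}$; a Gaussian small-ball estimate then yields, for each $k\neq j$, an expected mis-selection count $\alpha_{(k)}^\star n\cdot\Pr(|\mathcal N(0,v_k)|\le\sqrt\tau)\asymp n\sqrt{v_j/v_k}$. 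Summing over $k$ and normalizing gives
\[
\kappa_t := \frac{|S_t\backslash S_{(j)}|}{\alpha n}\ \lesssim\ \frac{\sqrt{v_j}}{\min_{k\neq j}\sqrt{v_k}}\ \vee\ \frac{\log n}{n},
\]
which is exactly the claimed form of $\kappa_t$.

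Next I would convert this into the two leading terms. Feature regularity (valid for Gaussian rows once $|S|=\Omega(d\log d)$, and stated uniformly over all index subsets of a given size) gives $\psi^-(\alpha n)\asymp n$ and $\psi^+(|S_t\backslash S_{(j)}|)\lesssim|S_t\backslash S_{(j)}|$, so $\sqrt2\,\gamma_t/\psi^-(\alpha n)\lesssim\kappa_t$, producing the $\kappa_t\|\theta_t-\theta_{(j)}^\star\|_2$ term. For $\varphi_t$ I would write it as $\|\Phi_{S_t\backslash S_{(j)}}^\top a\|_2$ with $a_i=R_i$ and bound $\|a\|_2^2=\sum_{i\in S_t\backslash S_{(j)}}R_i^2\le|S_t\backslash S_{(j)}|\cdot\tau$ (every selected residual lies below the threshold), so $\varphi_t\le\sqrt{\psi^+(|S_t\backslash S_{(j)}|)}\,\|a\|_2\lesssim|S_t\backslash S_{(j)}|\sqrt{\tau}$. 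Dividing by $\psi^-(\alpha n)$ gives $\varphi_t/\psi^-(\alpha n)\lesssim\kappa_t\sqrt{\tau}\asymp\kappa_t\sqrt{v_j}\le\kappa_t\|\theta_t-\theta_{(j)}^\star\|_2+\kappa_t\sigma$; absorbing the first piece into the leading term and using $\kappa_t\le\sqrt{\kappa_t}$ on the second yields the $c_1\sqrt{\kappa_t}\sigma$ term. Assembling the three contributions gives the stated recursion.

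The hard part will be the high-probability control of $|S_t\backslash S_{(j)}|$ and the threshold $\tau$. One must (i) pin the empirical $\alpha$-level threshold to its population value via concentration of order statistics, which is the source of the $\log n/n$ floor; (ii) establish the Gaussian anti-concentration / small-ball lower bound that makes the mis-selection probability scale like $\sqrt{v_j/v_k}$ and produces the precise $\min_{k\neq j}$ ratio (this is also where the result is inherently \emph{local}: if $\theta_t$ is close to some other $\theta_{(k)}^\star$, then $v_k\approx v_j$ and $\kappa_t$ need not contract); and (iii) handle the fact that $S_t$ is a data-dependent subset correlated with the very feature matrix whose spectrum we invoke, which is exactly why the uniform-over-subsets definition of regularity ($\psi^\pm$ as a min/max over all $W\in\mathcal W_k$) is required rather than a spectral bound for a fixed $S$. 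The remaining steps are routine once these concentration statements are in hand.
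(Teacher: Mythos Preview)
Your proposal is essentially correct and follows the same high-level route as the paper: invoke Lemma~\ref{thm:alg} with $S^\star = S_{(j)}$, control $|S_t\setminus S_{(j)}|$ via a two-Gaussian comparison of the residual scales $\sqrt{v_j}$ against $\sqrt{v_k}$ (this is precisely the paper's auxiliary Theorem~\ref{lem:help}), and then bound $\varphi_t$ to produce the $\sqrt{\kappa_t}\,\sigma$ term. Two points of comparison are worth making.

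First, your handling of $\varphi_t$ differs from the paper's. You bound the selected residuals directly by the data threshold $\sqrt\tau$ and then tie $\tau$ to $v_j$; the paper instead uses a permutation/swap inequality, exploiting that each mis-selected residual is dominated in absolute value by some \emph{un}selected good residual of the form $\phi(x_i)^\top(\theta_t-\theta_{(j)}^\star)-e_i$. This lets the paper bound $\varphi_t$ purely through $\psi^+(|S_t\setminus S_{(j)}|)$ and the good-sample noise, never estimating $\tau$ explicitly. Both routes give the same $\kappa_t\|\theta_t-\theta_{(j)}^\star\|_2+\sqrt{\kappa_t}\,\sigma$ shape; yours is arguably more direct but needs $\kappa_t\le 1$ to pass from $\kappa_t\sigma$ to $\sqrt{\kappa_t}\,\sigma$, which is fine in the local regime the theorem targets.

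Second, and more substantively, you invoke uniform-over-subsets regularity to obtain $\psi^-(\alpha n)\asymp n$. The paper explicitly flags that in the mixed setting $\alpha$ can be a \emph{small} constant (unlike Theorem~\ref{thm:random}, where $\alpha^\star>c_{\mathtt{th}}$), and that the union-bound-over-$W$ argument behind standard regularity (e.g.\ Theorem~17 of Bhatia et al.) breaks in this regime. The paper's proof resolves this with an $\epsilon$-net over unit directions in $\mathbb{R}^d$: for each fixed $\theta$, the sum of the $\alpha n$ smallest values of $(\phi(x_i)^\top\theta)^2$ concentrates at $\Omega(n)$ by order-statistic concentration, and a net of size $(1+2/\epsilon)^d$ plus a Lipschitz perturbation (costing $\epsilon\,\psi^+(\alpha n)=O(\epsilon n)$) makes this uniform once $n=\Omega(d\log d)$. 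Your line ``feature regularity gives $\psi^-(\alpha n)\asymp n$'' hides what the paper singles out as the one genuinely new step in the mixed-regression proof relative to Theorem~\ref{thm:random}; you should spell this argument out rather than assume it.
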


\paragraph{Remark} 
Theorem \ref{theorem:se} has a nearly linear dependence (sample complexity) on  dimension $d$. 
In order to let $\kappa_0 <1$,  the iterate $\theta_0$ in Algorithm \ref{alg:1} needs to satisfy $\|\theta_0 - \theta_{(j)}^\star \|_2\le C(\alpha) \min_{k\in [m]\backslash\{j\} }\|\theta_0 - \theta_{(k)}^\star \|_2 -  \sqrt{1- C(\alpha)^2} \sigma$, where $C(\alpha) = \min\{\frac{c_3\alpha}{1-\alpha}, 1\}$. 
For $\alpha$ large enough such that $C(\alpha)=1$, the condition on $\theta_0$ does not depend on $\sigma$. 
However,  for smaller $\alpha$, the condition of $\theta_0$ tolerates smaller noise.  
This is because, even if $\theta_0$ is very close to $\theta_{(j)}^\star$, when the noise  and the density of  samples from other mixture components are both high, the number of  samples from other components selected by the current $\theta_0$ would still be quite large, and the update will not converge to $\theta^\star_{(j)}$.

\section{Experiments}
\label{sec:exp}

\begin{figure*}[ht]
	
	\begin{subfigure}{0.32\columnwidth}
		\centering
		\includegraphics[width=\linewidth]{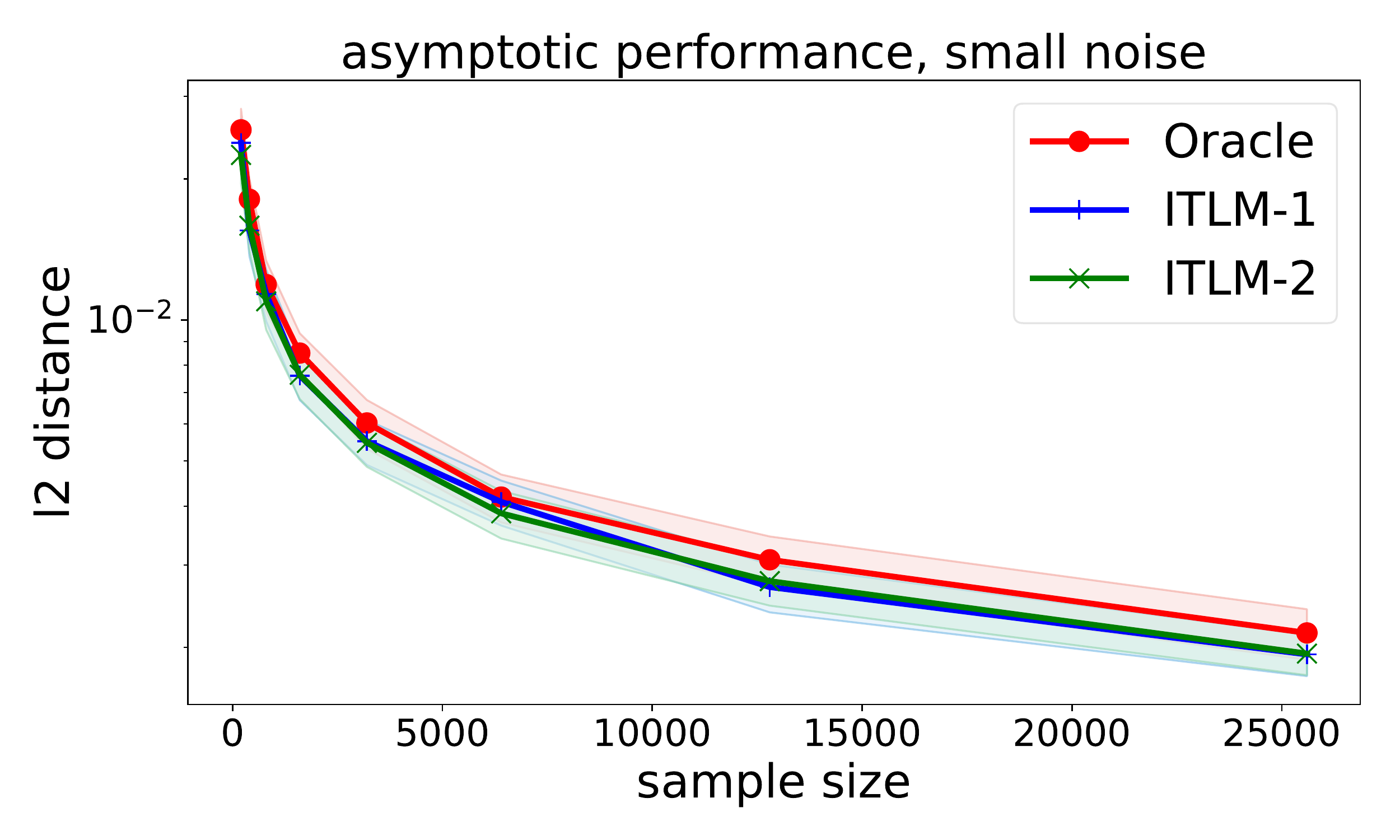}
		\caption*{(a)}
	\end{subfigure}
	\hfill 
	\begin{subfigure}{0.32\columnwidth}
		\centering
		\includegraphics[width=\linewidth]{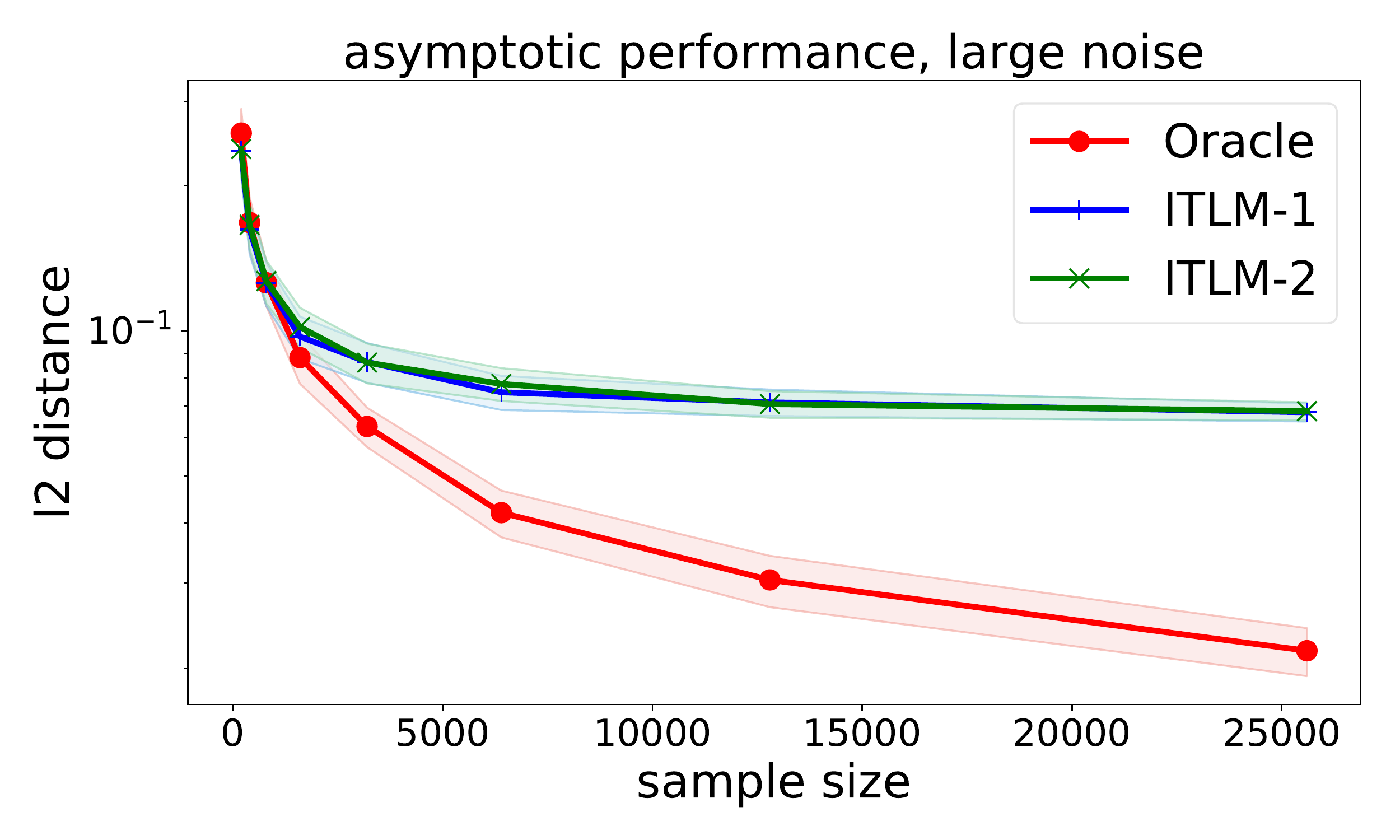}
		\caption*{(b)}
	\end{subfigure}
	\hfill 
	\begin{subfigure}{0.32\columnwidth}
		\centering
		\includegraphics[width=\linewidth]{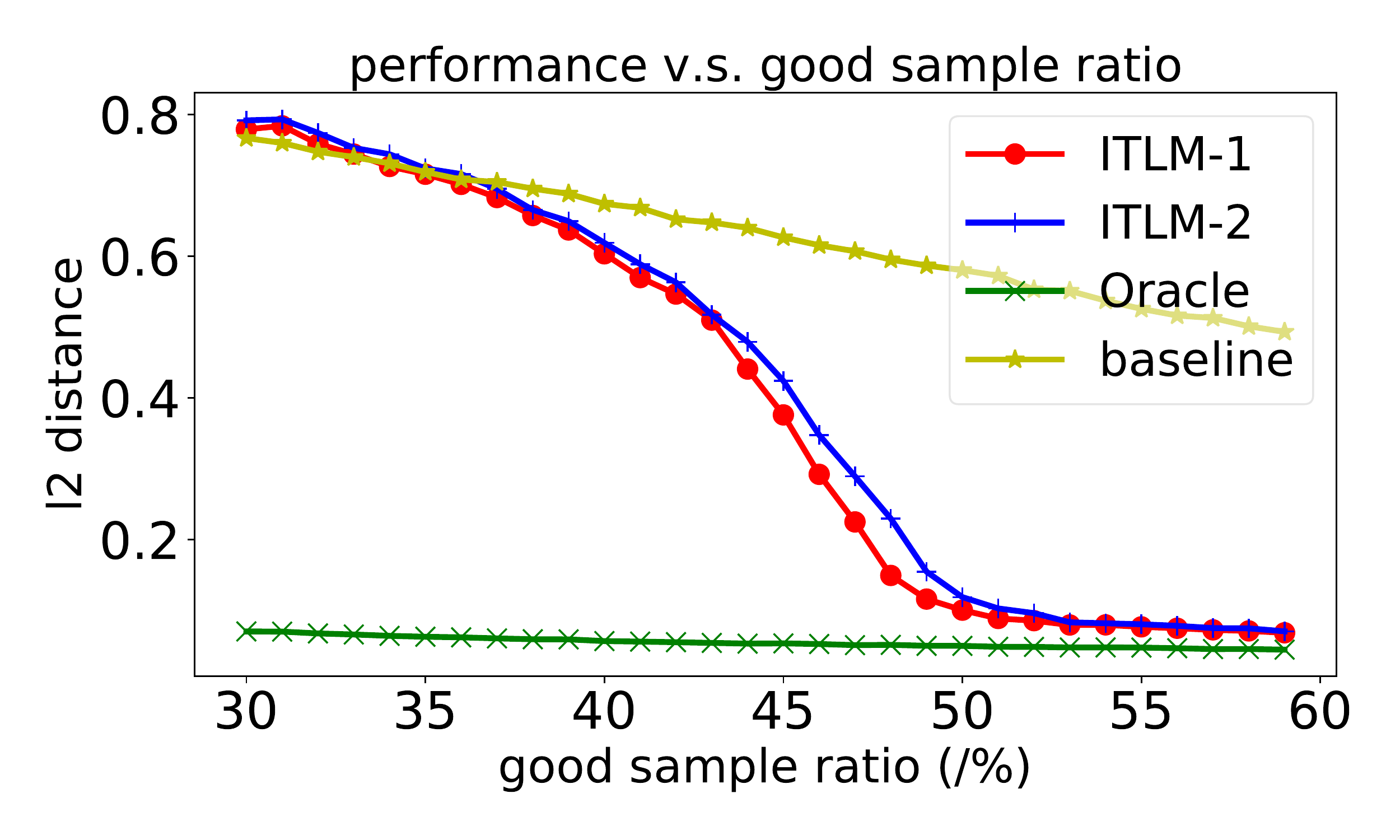}
		\caption*{(c)}
	\end{subfigure}

	\caption{Synthetic experiments: \textbf{(a):} $\|\theta^\star - \theta_T\|_2$ v.s. sample size under small measurement noise; \textbf{(b):} under large measurement noise; \textbf{(c):} $\|\theta^\star - \theta_T\|_2$ v.s. different good sample ratio. \algname-1: \algname with large $M$ (full update per round); \algname-2: \algname with $M=1$. 
	} 
	\label{fig:sim}
\end{figure*}

We first run synthetic experiments to verify and illustrate the theoretical guarantees we have in Section \ref{sec:itlm}. 
Then, we present the result of our algorithm in several bad training data settings using DNNs, and compare them with the state-of-the-art results. 
Although deep networks have the capacity to fit bad samples as well as the clean samples, 
as we motivated in Section \ref{sec:intro}, the learning curve for clean samples is better than that of the bad samples (at least in early stages of training), which aligns with the linear setting.
Besides, for the  noisy label problem,  we show that \algname performs  better than previous state-of-the-art methods where additional DNNs and additional clean samples are required. Our algorithm is simple to implement and requires neither of them. Training details are explained in  Appendix.

\subsection{Synthetic experiments}
\label{sec:sub-synthetic}
We consider the linear regression setting, where  dimension $d=100$ and sample $n=1000$, all $\phi(x_i)$s are generated as i.i.d. normal Gaussian vectors. 
The outputs are generated following (\ref{eqt:formulation}). 
The results are based on an average of $100$ runs under each setting. 
The performance for both the random output setting and the mixture model setting are similar, we focus on random output setting in this section. 
For similar results in mixture model setting, and further results in the general linear setting, please refer to the Appendix.

\textbf{Results}: 
\textit{(a) Inconsistency. } Figure \ref{fig:sim}-(a) and (b) show that  \algname gives an inconsistent result, since under the  large noise setting, the recovery error does not decrease as sample size increases.  
Also, (a) suggests that if the noise is small, the final performance is close to the oracle unless sample size is extremely large. 
These observations match with our theoretic guarantees in Theorem \ref{thm:random} and \ref{theorem:se}. 
\textit{(b) Performance v.s. ${\alpha}^\star$. } Figure \ref{fig:sim}-(c) shows the recovery result  of our algorithm, for both large and small $M_t$. As $\alpha^\star$ increases, \algname is able to successfully learn the parameter with high probability. 
\textit{(c) Convergence rates.} In fact, as implied by Theorem \ref{thm:random}, the algorithm has super-linear convergence under the small noise setting. We provide results in Appendix to verify this. 

Next, we apply \algname to many bad data settings with DNN models. We present the result using \algname with large $M$ since (a) according to Figure \ref{fig:sim}, both large and small $M$ perform similar in linear setting; (b) full update could be more stable in DNNs, since a set of bad training samples may deceive one gradient update, but is harder to deceive the full training process. Also, we run re-initialization for every round of update to make it harder to stuck at bad local minimum.

\subsection{Random/Systematic label error for classification}
\label{sec:classification}

\begin{table*}[t]
	\centering
	\caption{{\bf Neural networks classification accuracy with random/systematic label error:} Performance for subsampled-MNIST, CIFAR-10,  datasets as the ratio of clean samples varies. \textbf{Baseline} : Naive training using all the samples; \textbf{\algname}: Our iterative update algorithm with $\alpha = \alpha^\star - 5\%$; \textbf{Oracle} : Training with all clean samples. \textbf{Centroid}: Filter out samples far away from the centroid for each label class; \textbf{1-step}: The first iteration of \textbf{\algname}; $\Delta \alpha:10\%(15\%)$: \textbf{\algname} with $\alpha= \alpha^\star-10\%(15\%)$. 
	We see significant improvement of \textbf{\algname} over \textbf{Baseline} for all the settings. }
	\scalebox{0.75}{
	\begin{tabular}{lccccccccccccccc}
		\toprule 
		dataset & \multicolumn{7}{c}{MNIST with two-layer CNN} & \multicolumn{3}{c}{CIFAR-10 with WideResNet16-10} \\
		\midrule
		\multicolumn{11}{l}{\textbf{Systematic Label Error }} \\
		$\frac{\mbox{\#clean}}{\mbox{\#total}}$  & \textbf{Baseline} & \textbf{\algname} & \textbf{Oracle} & \textbf{Centroid} & \textbf{1-step} & $\Delta \alpha: 10\%$ & $\Delta \alpha: 15\%$& \textbf{Baseline} & \textbf{\algname} & \textbf{Oracle} & \\
		
		\midrule
		60\% & 66.69 & 84.98  & 92.44 &70.25&74.29&85.91&79.80& 62.03 & 81.01   &  90.14  \\
		70\% & 80.74&89.19&92.82&83.42&84.07&89.76&88.00& 73.47 & 87.08 &  90.72  \\
		80\%  & 89.91&91.93&92.93&90.18&91.38&90.92&89.06&80.17 &89.34 & 91.33  \\
		90\% & 92.35&92.68&93.2&92.44&92.63&91.10&90.62& 86.63 & 90.00 & 91.74  \\
		\midrule
		\multicolumn{11}{l}{\textbf{Random Label Error }} \\
		$\frac{\mbox{\#clean}}{\mbox{\#total}}$  & \textbf{Baseline} & \textbf{\algname} & \textbf{Oracle} & \textbf{Centroid} & \textbf{1-step} & $\Delta \alpha: 10\%$ & $\Delta \alpha: 15\%$& \textbf{Baseline} & \textbf{\algname} & \textbf{Oracle} & \\
		30\% & 80.87& 84.54& 91.37& 80.89& 93.91& 80.39& 68.00& 49.58 & 64.74 & 85.78 \\
		50\%  &88.59& 90.16&92.14&88.94&89.13&89.14&86.23 & 64.74 & 82.51 & 89.26 \\
		70\%  &91.18&91.12&92.82&91.25&90.28&90.41&88.37 & 73.60 & 88.23 &  90.72 \\
		90\% & 92.50&92.43&93.20&92.40&92.42&91.48&90.25 & 86.13 & 90.33  & 91.74  \\
		\bottomrule
	\end{tabular} }
	\label{tab:image-classification-se}
\end{table*}

We demonstrate the effectiveness of \algname for correcting training label errors in classification by starting from a ``clean" dataset, and introducing either one of two different types of errors to make our training and validation data set:
\begin{adjustwidth}{5pt}{}
	(a) {\em random errors in labels}: for samples in error, the label is changed to a random incorrect one, independently and with equal probability;  \\
	(b) {\em systematic errors in labels}: for a class ``a", all its samples in error are given the {\em same} wrong label ``b"
\end{adjustwidth}
Intuitively, systematic errors are less benign than random ones since the classifier is given a more directed and hence stronger misleading signal. However, systematic errors can happen when some pairs of classes are more confusable than others. 
We investigate the ability of \algname to account for these errors for
{\em (a)} 5\% subsampled MNIST~\cite{lecun1998gradient}~\footnote{We subsample MNIST by retaining only 5\% of its samples, so as to better distinguish the performance of different algorithms. Without this, the MNIST dataset is ``too easy" in the sense that it has so many samples that the differences in algorithm performance is muted if all samples are used.} dataset with a standard 2-layer CNN, 
and 
{\em (b)}  CIFAR-10~\cite{krizhevsky2009learning} with a 16-layer WideResNet
~\cite{Zagoruyko2016WRN}.
For each of these, \textbf{Baseline} represents the standard process of training the (respective) NN model on all the samples. 
Training details for each dataset are specified in the Appendix. 
For the CIFAR-10 experiments, we run $4$ rounds with early stopping, and then $4$ rounds with full training. 
As motivated in Section 1 (in the main paper), early stopping may help us better filter out bad samples since the later rounds may overfit on them. 
We set $\alpha$ to be $5\%$ less than the true ratio of clean samples, to simulate the robustness of our method to mis-specified sample ratio. 
Notice that one can always use cross-validation to find the best $\alpha$. 
For the MNIST experiments, we run $5$ rounds of \algname, and we also include comparisons to several heuristic baseline methods, to give more detailed comparison, see the caption thereof. 

\textbf{Results:} 
According to  Table  \ref{tab:image-classification-se}, we observe significant improvement over the baselines under most settings for MNIST experiments and all settings for CIFAR-10 experiments. 
\algname is also not very sensitive to mis-specified clean sample ratio (especially for cleaner dataset). 
We next compare  with two recent state-of-the-art methods  focusing on the noisy label problem: (1) MentorNet PD/DD~\cite{jiang2017mentornet} and (2) Reweight ~\cite{ren2018learning}. 
These methods are based on the idea of curriculum learning and meta-learning. 
MentorNet DD and Reweight require an additional clean dataset to learn from.  
As shown in Table \ref{tab:sota} (MentorNet results are reported based on their official github page), our method on the $60\%$ clean dataset only drops $6\%$ in accuracy compared with the classifier trained with the original clean dataset, which is the best among the reported results, and is much better than MentorNet PD. For the extremely noisy setting with $20\%$ clean samples, our approach is significantly better than MentorNet PD and close to the performance of MentorNet DD. 

\begin{table}
	\centering
	\caption{\textbf{\algname} compares with reported state-of-the-art approaches on CIFAR-10. We list their reported numbers for comparison.  \textbf{Reweight} / \textbf{MentorNet DD} require an additional 1k / 5k clean set of data to learn from (\textbf{\algname} does not require any identified clean  data). (acc.-1: accuracy of the algorithm; acc.-2 : accuracy with the full CIFAR-10 dataset.) }
	
	\scalebox{1.0}{
	\begin{tabular}{lcccc}
		\toprule 
		\quad method & clean \% & acc.-1 / acc.-2 & extra clean samples? \\
		\midrule
		1. Ours & 60\% & 86.12 / 92.40 & No\\
		2. Reweight & 60\% & 86.92 / 95.5 &  Yes, 1k \\
		4. MentorNet PD & 60\% & 77.6 / 96 & No \\
		5. MentorNet DD & 60\% & 88.7 / 96 & Yes, 5k \\
		\midrule
		6. Ours & 20\% & 42.24 / 92.40 & No \\
		7. MentorNet PD & 20\% & 28.3 / 96 & No \\
		8. MentorNet DD & 20\% & 46.3 / 96 & Yes, 5k \\
		\bottomrule
	\end{tabular} }
	\label{tab:sota}
\end{table}

\subsection{Deep generative models with mixed training data}
\label{sec:generation}

\begin{figure*}[!t]
	\begin{subfigure}{0.115\linewidth}
		\includegraphics[width=\linewidth]{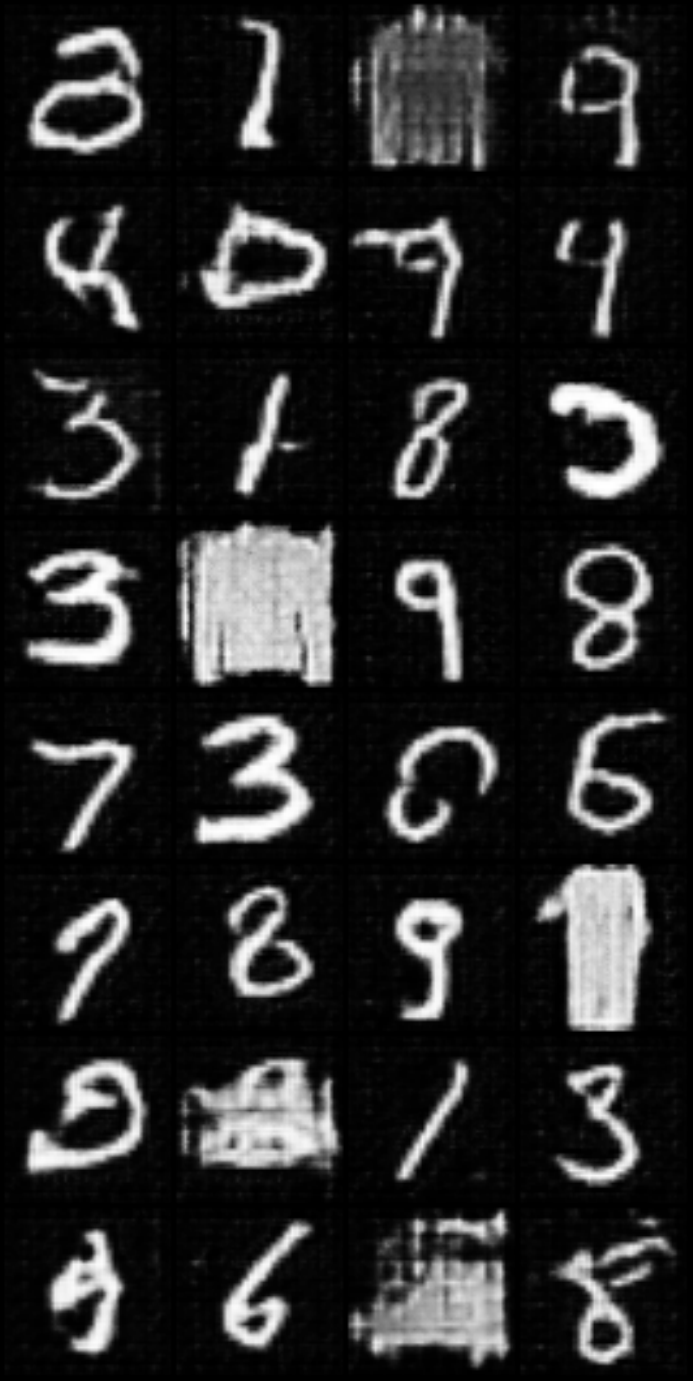}
		\caption*{baseline} \label{fig:mnist-fashion-a}
	\end{subfigure}
	\begin{subfigure}{0.115\linewidth}
		\includegraphics[width=\linewidth]{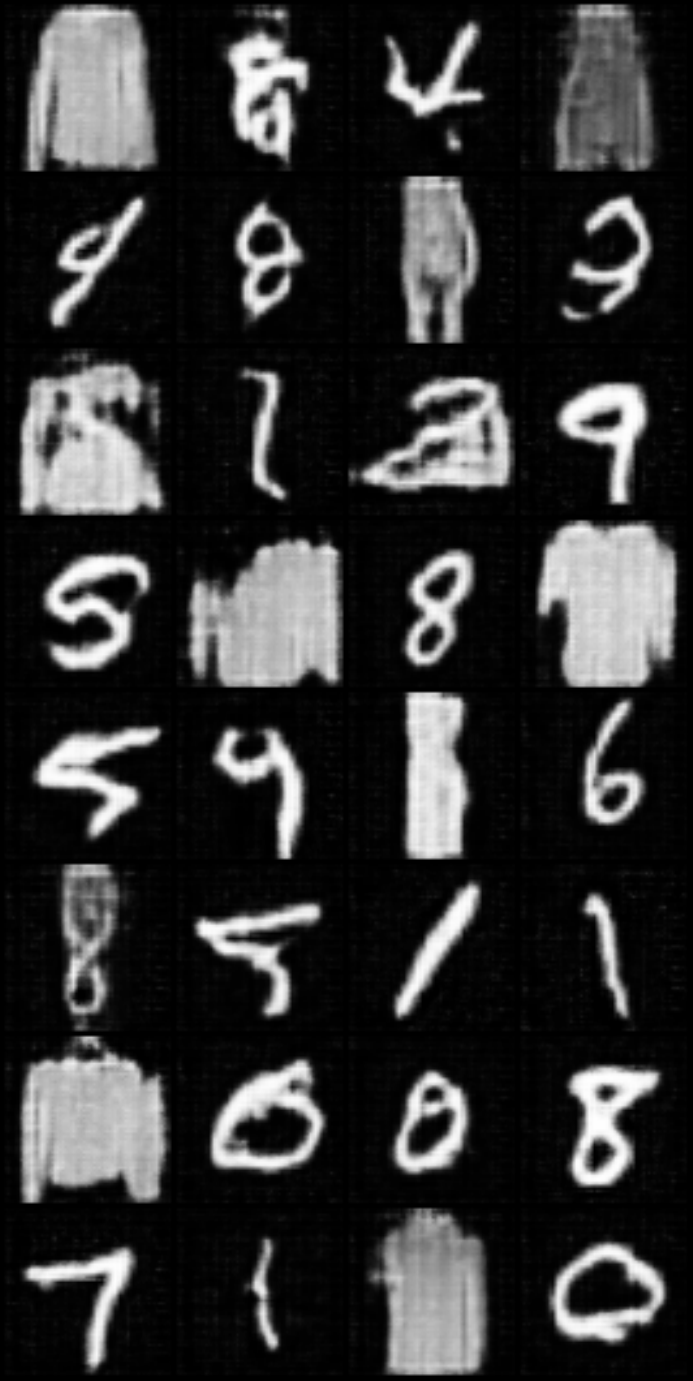}
		\caption*{1st iter.} \label{fig:mnist-fashion-b}
	\end{subfigure}
	\begin{subfigure}{0.115\linewidth}
		\includegraphics[width=\linewidth]{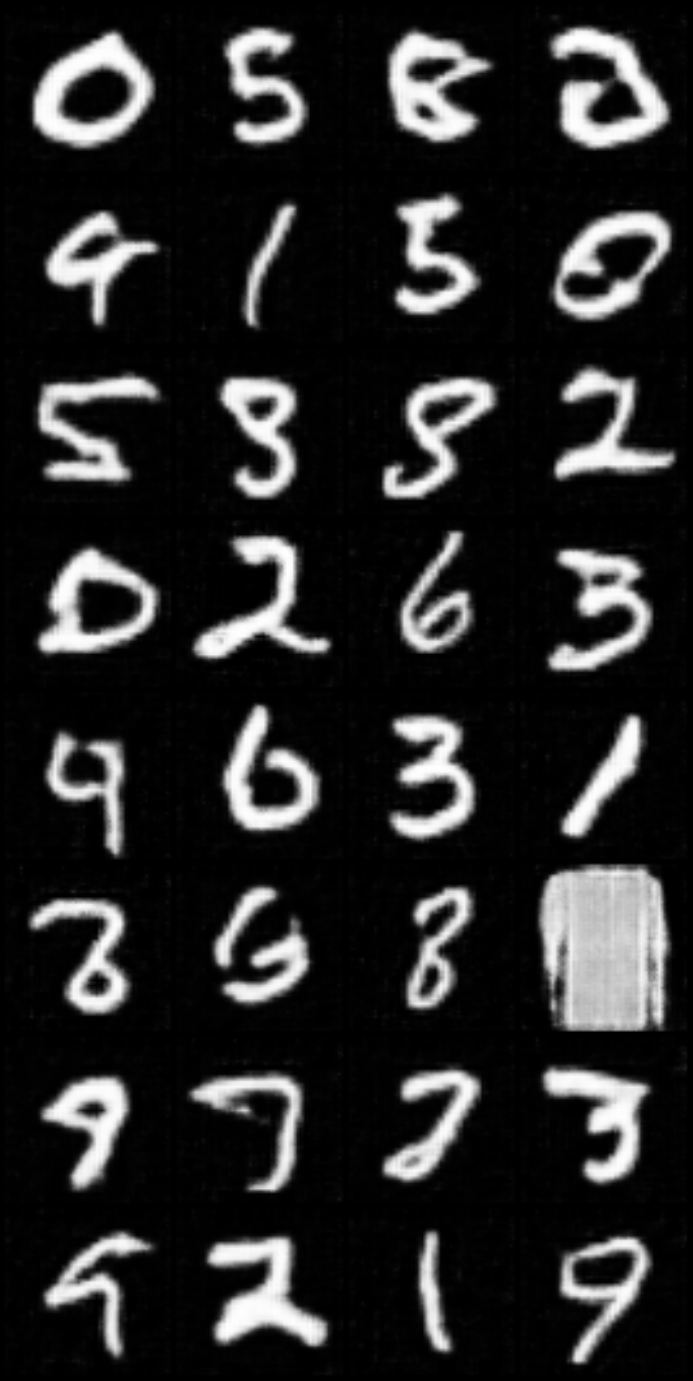}
		\caption*{3rd iter.} \label{fig:mnist-fashion-c}
	\end{subfigure}
	\begin{subfigure}{0.115\linewidth}
		\includegraphics[width=\linewidth]{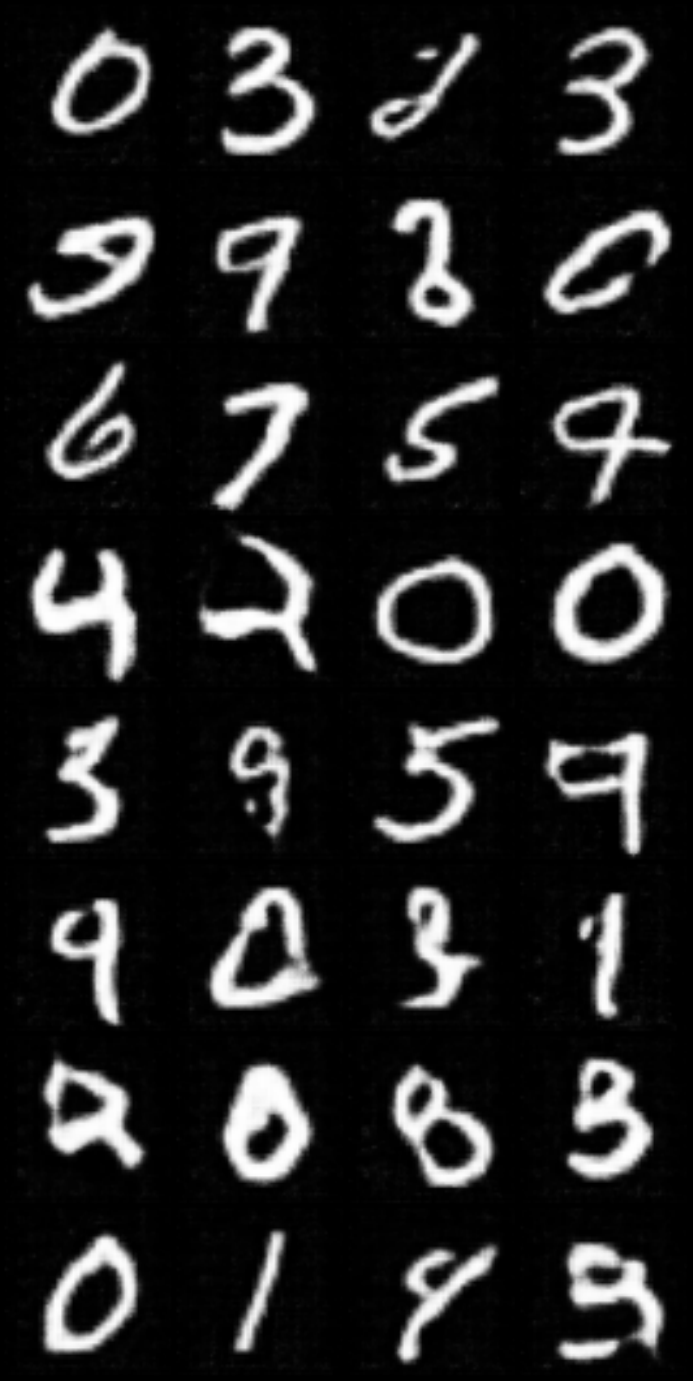}
		\caption*{5th iter.} \label{fig:mnist-fashion-d}
	\end{subfigure}
	\hspace*{\fill}
	\begin{subfigure}{0.115\linewidth}
		\includegraphics[width=\linewidth]{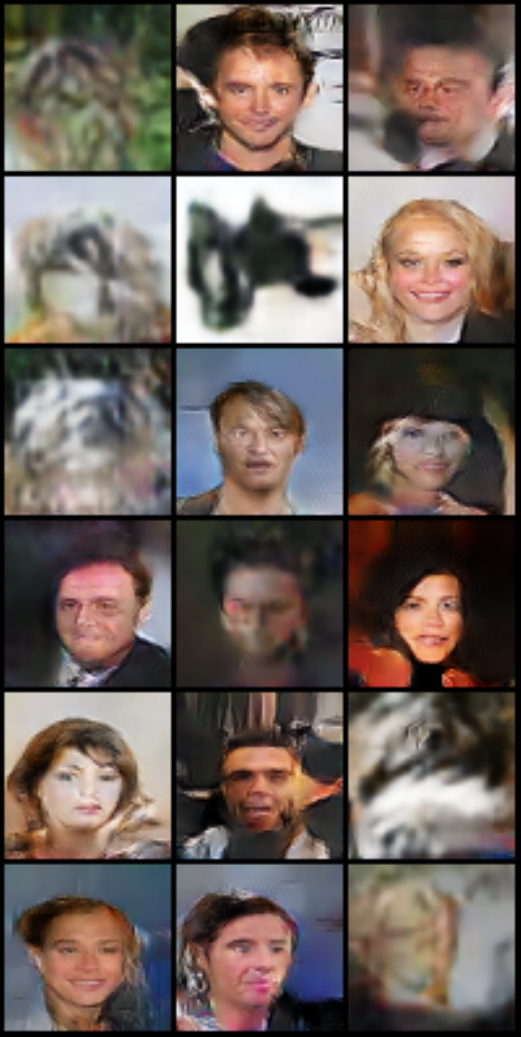}
		\caption*{baseline} \label{fig:celebA-a}
	\end{subfigure}
	\begin{subfigure}{0.115\linewidth}
		\includegraphics[width=\linewidth]{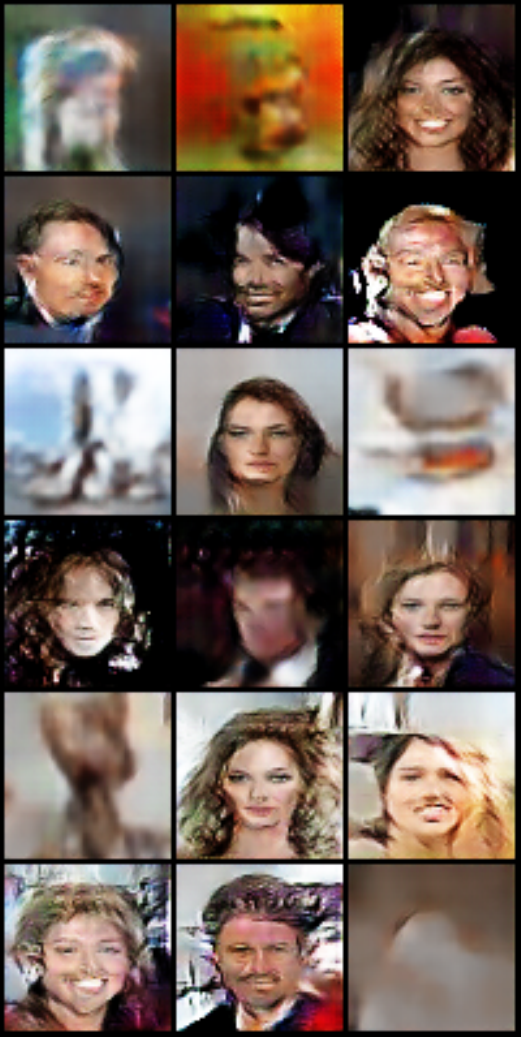}
		\caption*{1st iter.} \label{fig:celebA-b}
	\end{subfigure}
	\begin{subfigure}{0.115\linewidth}
		\includegraphics[width=\linewidth]{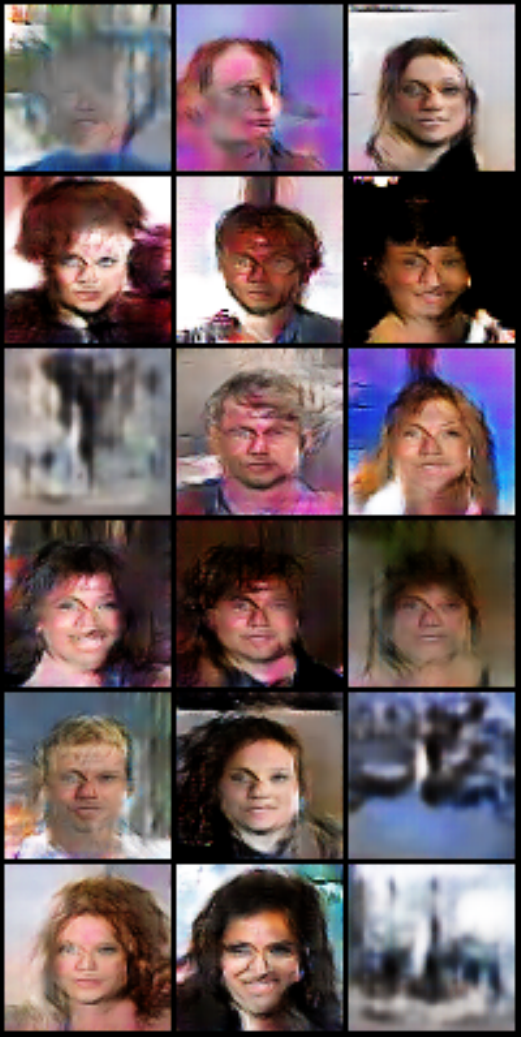}
		\caption*{3rd iter.} \label{fig:celebA-c}
	\end{subfigure}
	\begin{subfigure}{0.115\linewidth}
		\includegraphics[width=\linewidth]{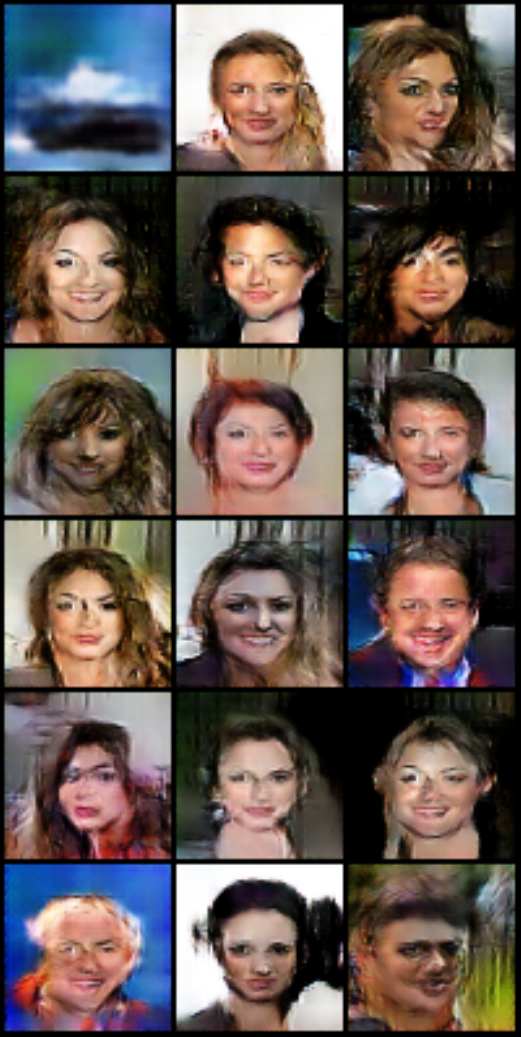}
		\caption*{5th iter.} \label{fig:celebA-d}
	\end{subfigure}
	
	\caption{ {\bf Qualitative performance of \algname for GANs:} We apply \algname to a dataset of (1) \textbf{left}:  80\% MNIST  + 20\% Fashion-MNIST; (2) \textbf{right}: 70\% CelebA  + 30\% CIFAR-10. The panels show the fake images from 32 randomly chosen (and then fixed) latent vectors, as \algname iterations update the GAN weights. Baseline is the standard training of fitting to all samples, which generates both types of images, but by the 5th iteration it hones in on digit/face-like images.}
	\label{fig:mnist-fashion}
\end{figure*}

We consider training a GAN -- specifically, the DC-GAN architecture~\cite{radford2015unsupervised}  -- to generate images similar to those from a {clean} dataset, but when the training data given to it contains some fraction of the samples from a  {bad} dataset. 
This type of bad data setting may happen very often in practice when the data collector collects a large amount of samples/images, e.g., from web search engine, to learn a generative model, and it could be difficult for the collector to find a rule to filter those incorrect samples. 
All images are unlabeled, and we do not know which training sample comes from which dataset. 
We investigated the efficacy of our approach in the following two experiments: A mixture of MNIST (clean) images with Fashion-MNIST (bad) images; A mixture of Celeb-A (clean) face images with CIFAR-10 (bad) object images.
We consider different fractions of bad samples in the training data, evaluate their effect on standard GAN training, and then the efficacy of our approach as we execute it for upto 5 iterations. 

We again use the framework of \algname, while slightly changing the two update steps. 
Recall that training a GAN consists of updating the weights of both a generator network and a discriminator network;  
our model parameters $\theta = \{\theta^D, \theta^G\}$ include the parameters of { both}  networks.  
When selecting samples, we only calculate the discriminator's loss, i.e., 
\begin{align*}
	S_t \leftarrow \arg\min_{S : |S|=\lfloor\alpha n\rfloor } \sum_{i \in S} D_{\theta^D_t}(s_i).
\end{align*}
When updating the parameter, we update both the discriminator and the generator simultaneously, as a regular GAN training process. 
Notice that for different GAN architectures, the loss function for training the discriminator varies, however, we can always find a surrogate loss: 
the loss of the discriminator for real images. 
Again, we set $\alpha$ to be $5\%$ less than the true ratio of clean samples. 

\textbf{Results:}  
Figure \ref{fig:mnist-fashion} shows qualitatively how the learned generative model performs as we iteratively learn using \algname. The generated images only contain digit-type images in the first experiment after the $5$th iteration, and similar behavior is observed for the corrupted face image. 
Table \ref{tab:gan-quant} provides  a quantitative analysis showing that \algname selects more and more clean samples iteratively. In Appendix, we also show  other simple heuristic methods fail to filter out bad data, and our approach is not sensitive to mis-specified $\alpha$. 

\begin{table}[t]
	\centering
	\caption{{\bf Generative models from mixed training data: A {\bf quantitative} measure} 
	The table depicts the ratio of the clean samples in the training data that are {\em recovered} by the discriminator when it is run on the training samples. The higher this fraction, the more effective the generator. Our approach shows significant improvements with iteration count. } 
	
	\scalebox{1.0}{
	\begin{tabular}{lcccccccccccccc}
		\toprule
		& \multicolumn{3}{c}{MNIST(clean)-Fashion(bad)} &  \multicolumn{3}{c}{CelebA(clean)-CIFAR10(bad)}\\
		\midrule
		orig &90\% & 80\% & 70\%  & 90\% & 80\% & 70\% \\
		\midrule
		iter-1&91.90\% & 76.84\% & 77.77\% &  97.12\% & 81.34\% & 75.57\% \\
		iter-2&96.05\% & 91.95\% & 79.12\% &  97.33\% & 88.11\% & 76.45\% \\
		iter-3& 99.15\% & 96.14\% & 85.66\% &  97.43\% & 89.48\% & 86.63\% \\
		iter-4&100.0\% & 99.67\% & 91.51\% &  97.53\% & 92.89\% & 82.15\% \\
		iter-5&100.0\% &100.0\% & 97.00\% &  98.14\% & 92.94\% & 94.02\% \\
		\bottomrule
	\end{tabular}}
	\label{tab:gan-quant}
\end{table}

\subsection{Defending backdoor attack}
\label{sec:backdoor}

Backdoor attack is one of the recent attacking schemes that aims at deceiving DNN classifiers. More specifically, the goal of backdoor attack is by injecting few poisoned samples to the training set, such that the trained DNNs achieve both high performance for the regular testing set and a second testing set created by the adversary whose labels are manipulated. 
We inject backdoor images using exactly the same process as described in \cite{tran2018spectral}, i.e., we pick a target class and poison 5\% of the target class images as watermarked images from other classes. See Figure \ref{fig:attack} for typical samples in a training set. Accordingly, we generate a testing set with all watermarked images, whose labels are all set to the target class. Notice that a regularly trained classifier makes almost perfect prediction in the manipulated testing set. 
We use \algname with $4$ early stopping rounds and $1$ full training round, we set $\alpha$ as $0.98$. 

\textbf{Results:} 
Our results on several randomly picked poisoned dataset are shown in Table \ref{tab:backdoor-attack}. 
The algorithm is able to filter out most of the injected samples, and the accuracy on the second testing set achieves zero. 
The early stopping rounds are very effective in filtering out watermarked samples, whereas without early stopping, the performance is poor. 

\begin{table}[ht]
		\centering
		\caption{Defending backdoor attack samples, which poisons class $a$ and make them class $b$. test-1 accuracy refers to the true testing accuracy, while test-2 accuracy refers to the testing accuracy on the test set made by the adversary.}
		\scalebox{1.0}{
		\begin{tabular}{cccccccc}
			\toprule 
			&  & {naive training} &  {with \algname } \\
			 class $a\rightarrow b$ & shape & test-1 / test-2 acc.  &  test-1 / test-2 acc.\\
			\midrule
			1 $\rightarrow $ 2 & X & 90.32 / 97.50 & 90.31 / 0.10 \\
			9 $\rightarrow $ 4 & X & 89.83 / 96.30 & 90.02 / 0.60\\
			6 $\rightarrow $ 0 & L & 89.83 / 98.10 & 89.84 / 1.30 \\
			2 $\rightarrow $ 8 & L & 90.23 / 97.90 & 89.70 / 1.20\\
			
			\bottomrule
		\end{tabular} }
		\label{tab:backdoor-attack}
\end{table}

\begin{figure}[!t]
	\begin{subfigure}{0.24\linewidth}
		\includegraphics[width=\linewidth]{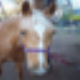}
		\caption*{a-``horse'' \\ \centering{dataset-1}} \label{fig:backdoor-a}
	\end{subfigure}
	\hfill
	\begin{subfigure}{0.24\linewidth}
		\includegraphics[width=\linewidth]{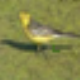}
		\caption*{b-``horse'' (bad) \\ \centering{dataset-1}} \label{fig:backdoor-b}
	\end{subfigure}
	\hfill
	\begin{subfigure}{0.24\linewidth}
		\includegraphics[width=\linewidth]{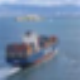}
		\caption*{c-``ship''\\ \centering{dataset-2}} \label{fig:backdoor-c}
	\end{subfigure}
	\hfill
	\begin{subfigure}{0.24\linewidth}
		\includegraphics[width=\linewidth]{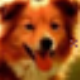}
		\caption*{d-``ship'' (bad)\\ \centering{dataset-2}} \label{fig:backdoor-d}
	\end{subfigure}
	\caption{Illustration of typical clean and backdoor samples in backdoor attacked training sets. Shown on the left are a clean ``horse'' image and a bird image with an `L'-type watermark around the center from one dataset. Shown on the right are a clean ``ship'' image and a dog image with an `X'-type watermark on the right from another  dataset. }
	\label{fig:attack}
\end{figure}

\section{Discussion}\label{sec:discussion}

We demonstrated the merit of iteratively minimize the trimmed loss, both theoretically for the simpler setting of generalized linear models, and empirically for more challenging ones involving neural networks for classification (with label noise) and GANs (with tainted samples). The \algname approach is simple, flexible and efficient enough to be applied to most modern machine learning tasks, and can serve as a strong baseline and guide when designing new approaches. It is based on the key observation that when tainted data is present, it helps to look more closely at how the loss of each training sample evolves as the model fitting proceeds. Specifically, and especially early in the model fitting process, tainted samples are seen to have higher loss in a variety of tainted settings. This is also theoretically backed up for the (admittedly simpler) generalized linear model case.

Our paper opens several interesting venues for exploration. It would be good to get a better understanding of why the evolution of loss behaves this way in neural network settings; also when it would not do so. It would also be interesting to characterize theoretically the performance for more cases beyond generalized liner models.

\clearpage 
\newpage 

\bibliography{ref}
\bibliographystyle{alpha}

\clearpage
\newpage

\appendix

\section{Property of the \estname Estimator}

\begin{proof}[Proof of Lemma 3.]
	We use standard techniques for consistency proof, similar to \cite{vcivzek2008general}. 
	First, let $f$ be the loss of a single sample, $F_n$ be the loss of sum of $n$ smallest losses over the total sample size. $f_{\lfloor\alpha n\rfloor}$ is the $\lfloor \alpha n\rfloor$-th smallest loss. We can re-write $F_n$ into the following two terms:
	\begin{align}
		F_n(\theta) = & \frac{1}{n} \sum_{i=1}^{n} f(s_i ; \theta) \cdot \mathbb{I}\left\{ f(s_i ; \theta) \le f_{\lfloor \alpha n\rfloor}(\theta) \right\} \nonumber \\
		= &\frac{1}{n} \sum_{i=1}^{n} f(s_i ; \theta) \cdot \left( \mathbb{I}\left\{ f(s_i ; \theta) \le f_{\lfloor \alpha n\rfloor}(\theta) \right\} - \mathbb{I}\left\{ f(s_i ; \theta) \le D_{\theta}^{-1}(\alpha)  \right\} \right) \label{eqt:app-1} \\
		& +\frac{1}{n} \sum_{i=1}^{n} f(s_i ; \theta) \cdot \mathbb{I}\left\{ f(s_i ; \theta) \le D_{\theta}^{-1}(\alpha) \right\},  \label{eqt:app-2}
	\end{align}
	where $D_\theta$ is the distribution function of $f_\theta(s)$, and $D_\theta^{-1}$ is its inverse function, which calculates the quantile value. 
	On the other hand, define $F$ to be the expected average trimmed loss, i.e., 
	\begin{align}
	 	F(\theta) = & \mathbb{E}\left[ f(s_i ; \theta) \cdot \mathbb{I} \left\{  f(s_i ; \theta) \le D_{\theta}^{-1}(\alpha) \right\} \right]. \label{eqt:app-3}
	\end{align}
	Then, the difference between $F_n(\theta)$ and $F(\theta)$ can be separated into two terms: the first term is the difference between (\ref{eqt:app-2}) and (\ref{eqt:app-3}), which asymptotically goes to zero due to the law of large numbers;
	on the other hand, the term (\ref{eqt:app-1}) goes to zero  because of the convergence of order statistics to the quantile. See  \cite{vcivzek2008general} for showing the consistency of both  terms under the regularity conditions. 
	
	By definition, \estname $\hat{\theta}^{(\mathtt{\estname})}$ satisfies  $\Pr\left[ F_n(\hat{\theta}^{(\mathtt{\estname})}) < F_n(\theta^\star) \right] = 1$. For any $\epsilon > 0$, 
	\begin{align}
		1 = & \Pr \left[ F_n(\hat{\theta}^{(\mathtt{\estname})}) < F_n(\theta^\star) \right] \nonumber  \\
		= & \Pr  \left[ F_n(\hat{\theta}^{(\mathtt{\estname})}) < F_n(\theta^\star), \hat{\theta}^{(\mathtt{\estname})} \in \mathcal{U}(\theta^\star, \epsilon) \right] + \Pr  \left[ F_n(\hat{\theta}^{(\mathtt{\estname})}) < F_n(\theta^\star), \hat{\theta}^{(\mathtt{\estname})} \in \mathcal{B} \backslash \mathcal{U}(\theta^\star, \epsilon) \right] \nonumber \\
		\le & \Pr \left[ \hat{\theta}^{(\mathtt{\estname})} \in \mathcal{U}(\theta^\star, \epsilon) \right] + \Pr \left[ \inf_{\theta \in \mathcal{B} \backslash \mathcal{U}(\theta^\star, \epsilon)} F_n(\theta) < F_n(\theta^\star) \right],  \label{eqt:app-7}
	\end{align}
	where in the last inequality, we use the fact that the probability measure on a set is no less than the probability measure on its subset. Notice that our goal is to show $\hat{\theta}^{(\mathtt{\estname}) }$ is in $\mathcal{U}(\theta^\star, \epsilon)$ with probability $1$. This is true as long as the second term is zero. 
	The second term in the above can be controlled by
	\begin{align}
		& \Pr \left[ \inf_{\theta \in \mathcal{B} \backslash \mathcal{U}(\theta^\star, \epsilon)} F_n(\theta) < F_n(\theta^\star) \right] \nonumber \\
		= & \Pr \left[ \inf_{\theta \in \mathcal{B} \backslash \mathcal{U}(\theta^\star, \epsilon)} \left[ F_n(\theta) - F(\theta) + F(\theta) \right] < F_n(\theta^\star) \right] \nonumber  \\
		\le & \Pr \left[ \inf_{\theta \in \mathcal{B} \backslash \mathcal{U}(\theta^\star, \epsilon)} \left[ F_n(\theta) - F(\theta)  \right] < F_n(\theta^\star) - \inf_{\theta \in \mathcal{B} \backslash \mathcal{U}(\theta^\star, \epsilon)} F(\theta)  \right] \label{eqt:app-4} \\
		\le & \Pr \left[ \sup_{\theta \in \mathcal{B}} \left\vert F_n(\theta) - F(\theta) \right\vert > \inf_{\theta \in \mathcal{B} \backslash \mathcal{U}(\theta^\star, \epsilon)} F(\theta) - F_n(\theta^\star) \right] \label{eqt:app-5} \\
		\le & \Pr \left[ 2 \sup_{\theta \in \mathcal{B}} \left\vert F_n(\theta) - F(\theta) \right\vert > \inf_{\theta \in \mathcal{B} \backslash \mathcal{U}(\theta^\star, \epsilon)} F(\theta) - F(\theta^\star) \right], \label{eqt:app-6}
	\end{align}
	where (\ref{eqt:app-4}) is due to triangle inequality, in (\ref{eqt:app-5}), we flip the sign on both sides and upper bound the difference by the abstract value. (\ref{eqt:app-6}) again uses triangle inequality, in order to separate the population loss on $\theta$ and the sample loss on $\theta^\star$. 
	As we have discussed at the beginning, under regularity conditions, $F_n(\theta) - F(\theta)$ goes to zero asymptotically. More specifically, for any $\epsilon > 0$, 
	 $\Pr \left[ \sup_{\theta \in \mathcal{B}} |F_n(\theta) - F(\theta)| > \frac{\delta(\epsilon)}{2} \right] \rightarrow 0$ as $n\rightarrow +\infty$. 
	On the other hand, 
	 $\Pr \left[ \inf_{\theta \in \mathcal{B} \backslash \mathcal{U}(\theta^\star, \epsilon)} F(\theta) - F(\theta^\star) < \delta(\epsilon) \right] =0 $,
	which is given by the idenfication condition. Combining with (\ref{eqt:app-7}), and  triangle inequality, we have $|S_n(\hat{\theta}^{(\mathtt{\estname})}) - S_n(\theta^\star)| \rightarrow 0$ with probability $1$, as $n\rightarrow \infty$.
\end{proof}

\section{Clarification of the \algname  Algorithm}
For different settings, we use the same procedure as described in Algorithm 1 and 2 (in the main paper), but may select different hyper-parameters. 
We summarize the alternatives we use for all the settings as follows:  
\begin{enumerate}
	\item[(a)]  In the linear setting, choosing a large $M$ with small step size $\eta$ corresponds to finding the closed form solution, which is the setting we analyze;
	\item[(b)] For generalized linear setting, we  analyze for $M=1$ and $N=|S|$, which corresponds to a single full gradient update per round;
	\item[(c)] For all experiments with DNNs, we run re-initialization for every round of update to make it harder to stuck at bad local minimum; 
	\item[(d)] For training generative model using GANs, we use the loss of  discriminator's output in step $4$ in Algorithm 1, and use the joint loss of both the generator and the discriminator in Algorithm 2;
	\item[(e)] For CIFAR-10 classification tasks with (i) bad labels, (ii) backdoor samples, we choose smaller $M$ for the first $4$ rounds, which corresponds to early stopping. 
	As motivated in Section 1 (in the main paper), early stopping may help us better filter out bad samples since the later rounds may overfit on them. 
\end{enumerate}

\section{Proofs for ILTM Algorithm}

\begin{proof}[Proof of Lemma \ref{thm:alg}]
	Let $\theta_t$ be the  learned parameter at round $t$, and $\theta_{t+1}$ be the learned parameter in the next round, following Algorithm 1. 
	More specifically, a subset $S_t$ of size $\alpha n$ with the smallest losses $(y_i - \theta_t^\top \cdot \phi(x_i))^2$ is selected. 
	$\theta_{t+1}$ is the minimizer on the selected set of sample losses. Denote $W_t$ as the diagonal matrix whose diagonal entry $W_{t, ii}$ equals $1$ when the $i$-th sample is in set $S_t$, otherwise $0$. Then, assume that we take infinite steps and reach  the optimal solution (we will discuss how to extend this to arbitrary $M_t$  with small step size later), we have : 
	\begin{align*}
	\theta_{t+1} = \left(\Phi(X)^\top W_t \Phi(X)\right)^{-1}\Phi(X)^\top W_t y,
	\end{align*}
	where $\Phi(X)$ is an $n\times d$ matrix,  whose $i$-th row is $\phi(x_i)^\top$, and we have used the  fact that $W_t^2 = W_t$.
	Remind that for the feature matrix $\Phi(X)$, we have defined 
	\begin{align*}
	\psi^{-}(k) = & \min_{W:  W\in \mathcal{W}_k} \sigma_{\min} \left( \Phi(X)^\top W\Phi(X) \right),  \\
	\psi^{+}(k) = & \max_{W: W\in \mathcal{W}_k} \sigma_{\max} \left(  \Phi(X)^\top W \Phi(X) \right),
	\end{align*}
	which will be used in the later analysis. For $\Phi(X)$ whose every row follows i.i.d. sub-Gaussian random vector, by using concentration of the spectral norm of Gaussian matrices, and uniform bound, $\Phi(X)$ is a regular feature matrix, see, e.g., Theorem 17 in \cite{bhatia2015robust}, and other literatures~\cite{davenport2009simple}. 
	
	On the other hand, denote $W^\star$ as the ground truth diagonal matrix for the samples, i.e., $W_{ii}^\star = 1$ if the $i$-th sample is a clean sample, otherwise $W_{ii}^\star = 0$. 
	Accordingly, define $S^\star$ as the ground truth set of clean samples. For clearness of the presentation, we may drop the subscript $t$ when there is no ambiguation. 
	For bad samples, the output is written in the form of $y_i = r_i + e_i$, where $e_i$ represents the observation noise, and $r_i$ depends on the specific setting we consider (we will discuss more in later Theorems).  Under this general representation, we can re-write the term $\theta_{t+1}$ as
	\begin{align*}
		\theta_{t+1}  = &  \left(\Phi(X)^\top W \Phi(X)\right)^{-1} \Phi(X)^\top W\left(W^\star \Phi(X) \theta^\star + (I-W^\star) r + e \right)  \\
		= & \theta^\star + \left(\Phi(X)^\top W \Phi(X)\right)^{-1} \left( \Phi(X)^\top WW^\star \Phi(X) \theta^\star + \Phi(X)^\top W r - \Phi(X)^\top W W^\star r - \Phi(X)^\top W\Phi(X)\theta^\star + \Phi(X)^\top W e \right) \\
		= & \theta^\star + \left(\Phi(X)^\top W \Phi(X)\right)^{-1}\Phi(X)^\top \left( WW^\star - W\right)\left(\Phi(X)\theta^\star - r - e\right) + \left(\Phi(X)^\top W \Phi(X)\right)^{-1} \Phi(X)^\top WW^\star e,
	\end{align*}
	by basic linear algebra. Therefore, the $\ell_2$ distance between the learned parameter and ground truth parameter can be bounded by:
	\begin{align*}
		& \|\theta_{t+1} - \theta^\star \|_2 \\
		=& \left\| \left(\Phi(X)^\top W \Phi(X)\right)^{-1}\Phi(X)^\top \left( WW^\star - W\right)\left(\Phi(X)\theta^\star - r - e\right) + \left(\Phi(X)^\top W \Phi(X)\right)^{-1} \Phi(X)^\top WW^\star e \right\|_2 \\
		\le & \underbrace{ \left\| \left(\Phi(X)^\top W \Phi(X)\right)^{-1} \right\|_2}_{\mathcal{T}_1} \cdot \left( \underbrace{ \left\| \Phi(X)^\top \left( WW^\star - W\right)\left(\Phi(X)\theta^\star - r - e\right) \right\|_2}_{\mathcal{T}_2} + \underbrace{\left\| \Phi(X)^\top WW^\star e \right\|_2}_{\mathcal{T}_3} \right),
	\end{align*}
	where basic spectral norm inequalities and triangle inequalities. 
	For the term $\mathcal{T}_1$, notice that $W$ selects $\alpha n$ rows of $\Phi(X)$, i.e., $\mathtt{Tr}(W) = \alpha n$. Therefore, $\mathcal{T}_1 \le \frac{1}{\psi^{-}(\alpha n)}$.
	
	Next, the term $\mathcal{T}_2$ can be bounded as:
	\begin{align}
		\mathcal{T}_2^2 = & \left\| \Phi(X)^\top \left( W - WW^\star\right)\left(\Phi(X)\theta^\star - r - e\right) \right\|_2^2 \nonumber \\
		= & \left(\Phi(X)\theta^\star - r - e\right)^\top \left[\left( W - WW^\star\right) \Phi(X) \Phi(X)^\top \left( W - WW^\star\right) \right]  \left(\Phi(X)\theta^\star - r - e\right) \nonumber  \\
		\le & 2 \left(\Phi(X)\theta^\star - \Phi(X)\theta_t\right)^\top \left[\left( W - WW^\star\right) \Phi(X) \Phi(X)^\top \left( W - WW^\star\right) \right]\left(\Phi(X)\theta^\star - \Phi(X)\theta_t\right) \nonumber \\
		& + 2 \left(\Phi(X)\theta_t - r - e\right)^\top \left[\left( W - WW^\star\right) \Phi(X) \Phi(X)^\top \left( W - WW^\star\right) \right]\left(\Phi(X)\theta_t - r - e\right) \nonumber \\
		\le & 2 \sigma_{\max}\left(\Phi(X)^\top (W-WW^\star) \Phi(X)\right)^2 \left\|\theta^\star - \theta_t\right\|_2^2 \label{eqt:app-8} \\
		& + 2  \underbrace{ \left(\Phi(X)\theta_t - r - e\right)^\top  \left[ \left( W - WW^\star\right) \Phi(X) \Phi(X)^\top \left( W - WW^\star\right) \right] \left(\Phi(X)\theta_t - r - e\right)}_{ \varphi(S_t, S^\star, \|\theta^\star - \theta_t\|_2)^2}. \label{eqt:app-9}
	\end{align}
	The last term (\ref{eqt:app-9}) is defined as $\varphi_t:= \varphi(S_t, S^\star, \|\theta^\star - \theta_t\|_2) = \left\| \sum_{i\in S\backslash S^\star} (\phi(x_i)^\top \theta_t - r_i - e_i) \phi(x_i) \right\|_2$. For the term (\ref{eqt:app-8}), let $|S_t\backslash S^\star|$ be the number of bad samples in $S_t$. Then, the eigenvalue is bounded by $\psi^{+}(|S_t\backslash S^\star|)$. 
	
	The term $\mathcal{T}_3$ can be bounded as:
	\begin{align*}
		\mathcal{T}_3^2 = \left\| \Phi(X)^\top WW^\star e \right\|_2^2 \le e^\top \Phi(X)\Phi(X)^\top e = \sum_{i=1}^d \left( \sum_{j=1}^n e_j \phi(x_j)_{i} \right)^2 \le c \sum_{i=1}^n \|\phi(x_i)\|_2^2 \log n \sigma^2,
	\end{align*}
	where the last inequality holds with high probability, and all the randomness comes from the measurement noise $e$. The last inequality is based on the  sub-exponential concentration property. 

	Then, as a summary, combining the results for all three terms, we have: 
	\begin{align*}
		\|\theta^\star - \theta_{t+1}\|_2 \le \frac{ \sqrt{2} \psi^{+}(|S\backslash S^\star|)}{\psi^{-}(\alpha n)} \|\theta^\star - \theta_t\|_2 + \frac{\sqrt{2} \varphi(S_t, S^\star, \|\theta^\star- \theta_t\|_2)}{\psi^{-}(\alpha n)} + \frac{c\sqrt{\sum_{i=1}^n \|\phi(x_i)\|_2^2 \log n  }}{\psi^{-}(\alpha n)} \sigma.
	\end{align*}
\end{proof}

\paragraph{Discussion on finite $M_t$} As we mentioned before, for the simplicity of the result, we consider $\theta_{t+1}$ as a full update on the subset of samples. However, based on this current framework, we can also analyze for finite $M_t$, with small step size $\eta$. The key idea is that in the linear setting, we can connect the updated parameter at each epoch with a closed form solution to a penalized minimization problem. More specifically, accordng to~\cite{suggala2018connecting}, define
\begin{align*}
	\dot{\theta}(t) := \frac{d}{dt} \theta(t) = - \nabla f(\theta(t)), \theta(0) = \theta_0,
\end{align*}
and
\begin{align*}
	\underline{\theta}(\nu) = \arg\min_{\theta}  f(\theta) + \frac{1}{2\nu} \|\theta - \theta_0\|_2^2,
\end{align*}
where $f(\theta) = \frac{1}{2|S|}\sum_{i\in S} (y_i - \phi(x_i)^\top \theta)^2$. Then, ${\theta}(t)$ and $\underline{\theta}(\nu)$ have the following relationship:
\begin{align*}
	\|\theta(t) - \underline{\theta}(\nu(t)) \|_2 \le \frac{\|\nabla f(\theta_0)\|_2}{m} \left( e^{-mt} + \frac{c}{1-c-e^{cMt}} \right),
\end{align*}
where $\nu(t) = \frac{1}{cm}\left(e^{cMt}-1 \right)$, for $m = \sigma_{\min} (\frac{1}{|S|} \Phi(X)^\top W\Phi(X))$, $M = \sigma_{\max}(\frac{1}{|S|} \Phi(X)^\top W\Phi(X))$, $c = \frac{2m}{M+m}$. Since $\underline{\theta}(\nu)$ has a closed form solution in this linear setting, by connecting $\theta^{t+1}$ with $\underline{\theta}$, we are able to bound $\theta^{t+1}$ using similar proof technique as above.

\begin{proof}[Proof of Lemma \ref{thm:nonlinear}]
	Define $F:\mathbb{R}^n \rightarrow \mathbb{R}^n$ as an entry-wise  $f(\cdot )$-operation. 
	\begin{align*}
	\theta_{t+1} = & \theta_t - \frac{\eta}{\alpha n} \sum_{i\in S_t} \left( f\left( \phi(x_i)^\top \theta_t \right) - y_i \right)\cdot f'\left(\phi(x_i)^\top \theta_t \right)\cdot \phi(x_i) \\
	= & \theta_t - \frac{\eta}{\alpha n} \Phi(X)^\top \mathtt{Diag}\left( F'\left( \Phi(X) \theta_t \right) \right)W_t \left( F\left(\Phi(X)\theta_t\right) - y\right) \\
	= & \theta_t - \frac{\eta}{\alpha n} \Phi(X)^\top \mathtt{Diag}\left( F'\left( \Phi(X) \theta_t \right) \right)W_t \left( F\left(\Phi(X)\theta_t\right) - W^\star F\left( \Phi(X)\theta^\star \right) - \left(I-W^\star \right)(r+e) - W^\star e \right) \\
	= & \theta_t - \frac{\eta}{\alpha n} \Phi(X)^\top \mathtt{Diag}\left( F'\left( \Phi(X) \theta_t \right) \right)W_t \left( F\left(\Phi(X)\theta_t\right) - W^\star F\left( \Phi(X)\theta^\star \right) -(I-W^\star)F\left(\Phi(X)\theta^\star\right) \right) \\
	& -  \frac{\eta}{\alpha n} \Phi(X)^\top \mathtt{Diag}\left( F'\left( \Phi(X) \theta_t \right) \right)W_t \left( (I-W^\star)F\left(\Phi(X)\theta^\star\right) - \left(I-W^\star \right)(r+e) - W^\star e \right) \\
	= & \theta_t - \frac{\eta}{\alpha n} \Phi(X)^\top \mathtt{Diag}\left( F'\left( \Phi(X) \theta_t \right) \right)  W_t \left( F\left(\Phi(X)\theta_t\right) - F\left(\Phi(X)\theta^\star\right) \right)  \\
	& - \frac{\eta}{\alpha n} \Phi(X)^\top \mathtt{Diag}\left( F'\left( \Phi(X) \theta_t \right) \right) \left(W_t - W_t W^\star \right) \left( F\left(\Phi(X)\theta^\star \right) - r- e\right) \\
	& + \frac{\eta}{\alpha n} \Phi(X)\mathtt{Diag}\left( F'\left( \Phi(X) \theta_t \right) \right) W_t W^\star e. 
	\end{align*}
	We simplify the notation using $H_t\triangleq \mathtt{Diag}\left( F'\left( \Phi(X) \theta_t \right) \right)$. Also, by mean value theorem, for any $a,b$, there exists some $c\in [a,b]$, such that $\frac{f(b)-f(a)}{b-a} = f'(c)$. Therefore, for the term $F(\Phi(X)\theta_t) - F(\Phi(X)\theta^\star)$, there exists a diagonal matrix $C_t$, such that $F(\Phi(X)\theta_t) - F(\Phi(X)\theta^\star) = C_t \Phi(X)\left(\theta_t - \theta^\star\right)$. Therefore, we have
	\begin{align*}
	\left\|\theta_{t+1} - \theta^\star \right\|_2 \le & \left( \underbrace{ 1 - \frac{\eta}{\alpha n} \Phi(X)^\top H_t W_t C_t \Phi(X)}_{\mathcal{U}_1} \right) \left\| \theta_t - \theta^\star\right\|_2 + \frac{\eta}{\alpha n} \underbrace{\left\| \Phi(X)^\top H_t (W_t - W_tW^\star) \left( F(\Phi(X)\theta^\star) - r - e\right) \right\|_2}_{\mathcal{U}_2} \\
	& + \frac{\eta}{\alpha n} \underbrace{ \left\|\Phi(X)H_t W_t W^\star e\right\|_2}_{\mathcal{U}_3}.
	\end{align*}
	Here, 
	\begin{align*}
	\mathcal{U}_1 \le  1 - \eta a^2 \frac{\psi^{-}(\alpha n)}{\alpha n}, \mathcal{U}_3 \le  b\xi_t \sigma.
	\end{align*}
	For $\mathcal{U}_2$, define $\tilde{\phi}_t$ similar to $\phi_t$:
	\begin{align*}
		\tilde{\varphi}_t = \left\| \sum_{i\in S_t\backslash S^\star} \left( w(\phi(x_i)^\top \theta^\star) - r_i-e_i\right) w'(\phi(x_i)^\top \theta^\star) \phi(x_i)\right\|. 
	\end{align*}
	As a result, we have:
	\begin{align*}
	\|\theta_{t+1} - \theta^\star \|_2 \le \left( 1 - \frac{\eta}{\alpha n} {a^2 \psi^{-}(\alpha n)}  \right)  \|\theta_t - \theta^\star \|_2 + {\eta}\frac{  \tilde{\varphi}_t   + \xi_t b \sigma }{\alpha n}.
	\end{align*}
\end{proof}

\begin{proof}[Proof of Theroem \ref{thm:random}]
	Now we consider recovery in the context of aribitrary corrupted output, and random noise setting. 
	
	Notice that since samples in $S_t\backslash S^\star$ are selected because of smaller losses, and $\alpha < \alpha^\star$,  there exists a permutation matrix $P$, such that the following inequality holds element-wise: 
	$$
	(W-WW^\star) \left\vert \Phi(X) \theta_t - r - e \right\vert \le (W-WW^\star) P \left\vert \Phi(X)(\theta_t - \theta^\star) - e\right\vert. 
	$$
	
	Accordingly, given a valid permutation matrix $P$, $\phi_t$ is further bounded by
	\begin{align}
		& \phi(S_t, S^\star, \|\theta^\star - \theta_t\|_2)^2 \nonumber \\ 
		\le & \left(\Phi(X)(\theta_t - \theta^\star) - e\right)^\top N P^\top (W-WW^\star) \Phi(X) \Phi(X)^\top  (W-WW^\star)  P N \left(\Phi(X)(\theta_t - \theta^\star) - e\right) \nonumber \\
		\le & 2 (\theta_t - \theta^\star)^\top \Phi(X)^\top N P^\top (W-WW^\star) \Phi(X) \Phi(X)^\top  (W-WW^\star)  P N \Phi(X)  (\theta_t - \theta^\star) \label{eqt:app-10} \\
		& + 2 e^\top N P^\top (W-WW^\star) \Phi(X) \Phi(X)^\top  (W-WW^\star)  P N e \label{eqt:g-1}\\
		\le & 2 \psi^{+}(|S_t\backslash S^\star|)^2 \left\| \theta_t - \theta^\star \right\|_2^2 + 2c  \psi^{+}(|S_t\backslash S^\star|) n  \sigma^2  \label{eqt:g-2},
	\end{align}
	where the last inequality (\ref{eqt:g-2}) holds  with high probability. Here, $N$ is some diagonal matrix whose  entries are either $1$ or $-1$. 
	More specifically, (\ref{eqt:app-10}) can be bounded by $2\tilde{\sigma}^2 \|\theta_t -\theta^\star\|_2^2$, where $\tilde{\sigma}$ is the top singular value of the matrix $$\Phi(X)^\top (W-WW^\star)PN\Phi(X).$$ 
	Equivalently, it can be written as
	$$
	\tilde{\sigma} = \max_{u,v: \|u\|_2 = \|v\|_2=1} u^\top \Phi(X)^\top (W-WW^\star) PN \Phi(X) v.
	$$
	If we denote $\Phi(X)v$ and $\Phi(X)u$ as $\tilde{v}$, $\tilde{u}$ respectively, then 
	$$\tilde{\sigma} \le
	\sum_{i=1}^{|S\backslash S^\star|} \left\vert \tilde{u}_{r_i} \tilde{v}_{t_i}\right\vert \le \max \left\{ \sum_{i=1}^{|S\backslash S^\star|} \tilde{u}_{r_i}^2, \sum_{i=1}^{|S\backslash S^\star|} \tilde{v}_{t_i}^2 \right\},
	$$ 
	for some sequences $\{r_i\}$ and $\{t_i\}$. 
	This shows that the top singular value is indeed bounded by  
	$$
	\max \left\{
	\sigma_{\max}\left(\Phi(X)^\top (W-WW^\star) \Phi(X) \right), \sigma_{\max} \left(\Phi(X)^\top NP^\top (W-WW^\star) PN\Phi(X) \right) 
	\right\},
	$$ 
	which is bounded by $\psi^{+}(|S_t\backslash S^\star|)$, since both $W-WW^\star$ and $NP^\top (W-WW^\star) PN$ have $\mathtt{Tr}(W-WW^\star)$ non-zero entries in the diagonal. 
	
	The term (\ref{eqt:g-1}) is bounded because of the feature regularity property. Notice that $(W-WW^\star)\Phi(X) \Phi(X)^\top (W-WW^\star)$ has the same non-zero eigenvalues as $\Phi(X)^\top (W-WW^\star) \Phi(X)$. 
	
	Therefore, with high probability, 
	\begin{align*}
		\phi_t \le & \sqrt{2\psi^{+}(|S_t\backslash S^\star|)^2\|\theta^\star - \theta_t \|_2^2 + 2c  \psi^{+}(|S_t\backslash S^\star|) n  \sigma^2  } \\
		\le & \sqrt{2}\psi^{+}(|S_t\backslash S^\star|)\|\theta^\star - \theta_t \|_2 + \sqrt{2c  \psi^{+}(|S_t\backslash S^\star|) n  }  \sigma.
	\end{align*}
	Combining previous results, with high probability, we have
	\begin{align} \label{eqt:app-11}
		\|\theta^\star - \theta_{t+1} \|_2 \le \underbrace{ \frac{2\sqrt{2}\psi^{+}(|S_t\backslash S^\star |)}{\psi^{-}(\alpha n)} }_{\kappa_t} \|\theta^\star - \theta_t \|_2 +\frac{\sqrt{2c  \psi^{+}(|S_t\backslash S^\star|) n  }}{\psi^{-}(\alpha n)} \sigma+ \frac{c\sqrt{\sum_{i=1}^n \|\phi(x_i)\|_2^2 \log n  }}{\psi^{-}(\alpha n)} \sigma.
	\end{align}
	The above result holds for both the setting of random output and arbitrary corruption setting. 
	For arbitrary output setting, since $\psi^{+}(|S_t\backslash S^\star|)$ can be upper bounded by $\mathcal{O}(n)$, we have:
	\begin{align*}
		\|\theta^\star - \theta_{t+1}\|_2 \le \frac{1}{2} \|\theta^\star - \theta_t\|_2 + c \sigma + \frac{c \xi_t}{n}\sigma. 
	\end{align*}
	In the random output setting, however, in fact we can calculate how the quantity $|S_t\backslash S^\star|$ changes, and have a better characterization of the convergence. Based on Theorem \ref{lem:help}, we have:
	\begin{align*}
		\kappa_t \le c \left\{\sqrt{\|\theta_t - \theta^\star\|_2^2 + \sigma^2} \vee \frac{\log n}{n}\right\},
	\end{align*}
	for any fixed $\theta_t$. One can use a standard $\epsilon$-net argument to show that the above indeed holds for any $\theta_t$. 
	Therefore, for the case of random output corruption, 
	\begin{align*}
	\|\theta^\star - \theta_{t+1}\|_2 \le \kappa_t \|\theta^\star - \theta_t\|_2 + c \sqrt{ \kappa_t } \sigma + \frac{c \xi_t}{n}\sigma, 
	\end{align*}
	for $\kappa_t \le c \{\sqrt{\|\theta_t - \theta^\star\|_2^2 + \sigma^2} \vee \frac{\log n}{n}\} $.
\end{proof}

\begin{proof}[Proof of Theorem \ref{theorem:se}]
	In the context of mixed model setting, we are interested in when the algorithm will find the component that it is  closest to. 
	The proof outline is similar to Theorem 7. 
	However, for the case of mixture output, two parts in (\ref{eqt:app-11}) need re-consideration: the first part is to show that there is an $\Omega(n)$ lower bound for $\psi^{-}(\alpha n)$ for arbitrary constant $\alpha$. 
	Notice that in Theorem 17 of \cite{bhatia2015robust}, $\alpha$ can not be too small, e.g., $0.1$. The main idea of their proof was to use a uniform bound over all possible $W$s, which depends on $n$. However, we take another route and using $\epsilon$-net argument on the parameter space. 
	Notice that we can choose an $\epsilon$-net in $\mathbb{R}^d$, which includes $(1+\frac{2}{\epsilon})^d$ points~\cite{vershynin2016high}. For any fixed $\theta$, notice that the square of $\min_{W\in \mathcal{W}_{\alpha n}} \sigma_{\min}(\Phi(X)^\top W\Phi(X))$ corresponds to the sum of the  minimum $\alpha n$ squares, which is greater than $c_1n$ with high probability~\cite{boucheron2012concentration}. On the other hand, for arbitrary $\tilde{\theta}$, the additional error is at most $\epsilon \psi^{+}(\alpha n) = \mathcal{O}(\epsilon n)$. By using the uniform bound over all fixed $\theta$, and choosing $n\ge C d\log d$ for some large constant $c$, we can see that $\psi^{-}(\alpha n)$ is lower bounded by $\Omega(n)$ with high probability. 
	For getting the second term in (\ref{eqt:app-11}), we use the same idea as in the proof of Theorem \ref{thm:random}. 
	For any fixed $\theta_t$, the residuals for all the samples can be considered as generated from $m$ components, and can be reduced to a two-component setting. 
	Therefore, the numerator in $\kappa_t$ is again controlled by Theorem \ref{lem:help}.  Combining these results,  we have 
	$\kappa_t \le c  \left\{  \frac{\sqrt{\|\theta_t - \theta_{(j)}^\star \|_2^2 + \sigma^2}}{\min_{k\in[m]\backslash \{j\}}  \sqrt{\|\theta_t - \theta_{(k)}^\star\|_2^2 + \sigma^2} } \vee \frac{\log n}{n} \right\}$. 
	
	As a consequence,
	\begin{align*}
	\|\theta_{t+1} - \theta^\star \|_2 \le \kappa_t  \|\theta_t - \theta^\star \|_2 + c_1{\sqrt{ \kappa_t   }} \sigma + \frac{c_2\xi_t}{ n}\sigma,
	\end{align*} 
	where we require $n= \Omega ( d\log d)$. 
	Notice that for small $\ratiolearn$, in order to make $\kappa_t$ less than one, the noise should not be too large.  Otherwise, even if $\theta_t$ is very close to $\theta^\star$, because of the noise and the high density of bad samples, $|S_t\backslash S^\star|$ would still be quite large, and the update will not converge. 
	
\end{proof}

\begin{theorem} \label{thm:app-1}
	Following the setting in Lemma \ref{thm:nonlinear}, for the 
	given $\alpha < \alpha^\star$, $\Phi(X)$ being a regular feature matrix, and  $\alpha^\star > c_{\mathtt{th}}$, sample size $n=\Omega(d\log d)$, w.h.p., we have:
	\begin{align*}
		\|\theta^\star - \theta_{t+1} \|_2 \le \left( 1-c_1\eta (a^2 - \kappa_t b^2  \right) \| \theta_t - \theta^\star \|_2 + c_2 b \sqrt{\kappa_t  } \sigma  +  \frac{\eta b \xi_t}{ n} \sigma,
	\end{align*}
	where for $r$ being arbitrary output, $\kappa_t \le \frac{1}{2}$. For $r$ being random sub-Gaussian output, $\kappa_t \le c  \{ \frac{b}{a} \sqrt{\|\theta_t - \theta^\star\|_2^2 + \sigma^2} \vee \frac{\log n}{n}\}$.
\end{theorem}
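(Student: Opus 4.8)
The plan is to reduce everything to the per-round bound already established in Lemma \ref{thm:nonlinear} and then control the cross term $\tilde\varphi_t$ in exactly the way $\varphi_t$ is controlled in the proof of Theorem \ref{thm:random}, inserting the derivative bounds $\link'\in[a,b]$ at the two places where linearity of $\link$ was previously used. Starting from
\[
\|\theta_{t+1}-\theta^\star\|_2 \le \Bigl(1-\tfrac{\eta}{\alpha n}a^2\psi^{-}(\alpha n)\Bigr)\|\theta_t-\theta^\star\|_2 + \eta\,\frac{\tilde\varphi_t+\xi_t b\sigma}{\alpha n},
\]
the entire task is to split $\tilde\varphi_t$ into a piece proportional to $\|\theta_t-\theta^\star\|_2$, which is folded back into the contraction coefficient, and a pure-noise piece.

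First I would bound $\tilde\varphi_t$. Since the samples in $S_t\setminus S^\star$ are selected by smallest \emph{current} loss under $\theta_t$ and $\alpha<\alpha^\star$, the same permutation argument as in Theorem \ref{thm:random} gives an element-wise domination of the selected bad residuals by the clean residuals. The mean value theorem applied to $\link$ (the same device that produced the diagonal matrix $C_t$ with entries in $[a,b]$ in the proof of Lemma \ref{thm:nonlinear}) lets me relate $\link(\phi(x_i)^\top\theta^\star)-y_i$ to the loss at $\theta_t$ plus a term bounded by $b\,|\phi(x_i)^\top(\theta_t-\theta^\star)|$; combined with $\link'\le b$ this yields a bound of the form
\[
\tilde\varphi_t \le \sqrt2\, b^2\,\psi^{+}(|S_t\setminus S^\star|)\,\|\theta_t-\theta^\star\|_2 + b\sqrt{2c\,\psi^{+}(|S_t\setminus S^\star|)\,n}\,\sigma,
\]
where the two factors of $b$ come respectively from the $\link'$ weight in the definition of $\tilde\varphi_t$ and from the mean value linearization.

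Substituting this into the per-round bound and factoring $\psi^{-}(\alpha n)$ out of the coefficient of $\|\theta_t-\theta^\star\|_2$ produces the stated contraction factor $1-c_1\eta(a^2-\kappa_t b^2)$ with $\kappa_t \asymp \psi^{+}(|S_t\setminus S^\star|)/\psi^{-}(\alpha n)$, while the noise pieces collapse into $c_2 b\sqrt{\kappa_t}\sigma$ and $\frac{\eta b\xi_t}{n}\sigma$ after using $\psi^{+}(|S_t\setminus S^\star|)\le cn$ and $\psi^-(\alpha n)=\Theta(n)$. To make $\kappa_t$ small for a general (possibly small) constant $\alpha$, I would, as in the proof of Theorem \ref{theorem:se}, establish $\psi^{-}(\alpha n)=\Omega(n)$ by an $\epsilon$-net argument over $\theta\in\mathbb{R}^d$ with $n=\Omega(d\log d)$, rather than the uniform-over-$W$ bound of \cite{bhatia2015robust} that needs $\alpha$ bounded away from $0$. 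The two regimes then follow as before: for arbitrary $r$ one uses only $\psi^{+}(|S_t\setminus S^\star|)\le cn$ together with $\alpha^\star>c_{\mathtt{th}}$ to force $\kappa_t\le\tfrac12$; for random sub-Gaussian $r$ one invokes Theorem \ref{lem:help} to control $|S_t\setminus S^\star|$, giving $\kappa_t\le c\{\frac{b}{a}\sqrt{\|\theta_t-\theta^\star\|_2^2+\sigma^2}\vee\frac{\log n}{n}\}$.

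The main obstacle is the mismatch between the point at which selection happens ($\theta_t$) and the point at which $\tilde\varphi_t$ is evaluated ($\theta^\star$): in the linear case the two residuals differ by an exactly linear term, whereas here they differ through $\link$, so the permutation-based domination must be carried out only after linearizing with the mean value theorem and then carefully tracking how the $[a,b]$ bounds on $\link'$ propagate. This is precisely what converts the clean $a^2$ contraction of Lemma \ref{thm:nonlinear} into the $a^2-\kappa_t b^2$ margin and inserts the $b/a$ factor into the random-output bound on $\kappa_t$; getting these constants to emerge with the correct powers of $a$ and $b$, and confirming $a^2-\kappa_t b^2>0$ in the regime $\alpha^\star>c_{\mathtt{th}}$, is where the care is needed.
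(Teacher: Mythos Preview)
Your proposal is correct and follows exactly the approach the paper intends: the paper's own proof of Theorem \ref{thm:app-1} is simply the one-line remark that ``the proof idea for the above two Theorems are similar to what we have shown in the proof of Theorem \ref{thm:random} and Theorem \ref{theorem:se},'' so your plan of starting from Lemma \ref{thm:nonlinear}, bounding $\tilde\varphi_t$ via the permutation/domination argument of Theorem \ref{thm:random} combined with the mean value theorem to absorb the $\link'$ factors, and then invoking Theorem \ref{lem:help} for the random-output case, is precisely the intended route. Your identification of where the two factors of $b$ (and hence $a^2-\kappa_t b^2$) and the $b/a$ in the random bound come from is accurate; the $\epsilon$-net remark is not strictly needed here since regularity of $\Phi(X)$ is assumed directly, but it does no harm.
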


\begin{theorem} \label{thm:app-2}
	Following the setting in Lemma \ref{thm:nonlinear}, 
	for the mixed regression setting in (2), suppose for some $j\in [m]$, $\alpha < \alpha_{(j)}^\star$. 
	Then, for  $n= \Omega( d\log d)$, w.h.p., the next iterate $\theta_{t+1}$ of the algorithm satisfies 
	\begin{align*}
	&\|\theta_{t+1}\! -\! \theta_{(j)}^\star \|_2 \le \left( 1-c_1\eta (a^2 - \kappa_t b^2  \right)  \|\theta_t - \theta_{(j)}^\star \|_2 +  c_1b{\sqrt{ \kappa_t   }} \sigma\! +\! \frac{c_2\eta b\xi_t}{ n}\sigma,
	\end{align*}
	where $\kappa_t \le c  \left\{  \frac{b \sqrt{\|\theta_t - \theta_{(j)}^\star \|_2^2 + \sigma^2}}{a \min_{k\in[m]\backslash \{j\}}  \sqrt{\|\theta_t - \theta_{(k)}^\star\|_2^2 + \sigma^2} } \vee \frac{\log n}{n} \right\}$.
\end{theorem}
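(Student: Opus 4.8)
The plan is to follow the proof of Theorem \ref{theorem:se} almost verbatim, substituting the closed-form one-step bound of Lemma \ref{thm:alg} by the single-gradient-step bound of Lemma \ref{thm:nonlinear} and tracking the link-derivative bounds $a,b$ throughout. First I would apply Lemma \ref{thm:nonlinear} with the target set to the $j$-th component, $\theta^\star = \theta_{(j)}^\star$, giving
\[
\|\theta_{t+1} - \theta_{(j)}^\star\|_2 \le \Bigl(1 - \tfrac{\eta a^2\psi^{-}(\alpha n)}{\alpha n}\Bigr)\|\theta_t - \theta_{(j)}^\star\|_2 + \tfrac{\eta}{\alpha n}\bigl(\tilde\varphi_t + \xi_t b\sigma\bigr),
\]
with $\tilde\varphi_t$ assembled from the residuals $w(\phi(x_i)^\top\theta_{(j)}^\star) - r_i - e_i$ over $i\in S_t\backslash S^\star$. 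Exactly as in Theorem \ref{theorem:se}, I would then establish $\psi^{-}(\alpha n) = \Omega(n)$ for any constant $\alpha$ through an $\epsilon$-net argument over the parameter space (this step sees only $\Phi(X)$ and is unaffected by $w$), so that $\psi^{-}(\alpha n)/(\alpha n)$ is a constant and the leading contraction reads $1 - c_1\eta a^2$, while the $\mathcal{U}_3$-type noise term yields the stated $c_2\eta b\xi_t\sigma/n$.

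The heart of the argument is controlling $\tilde\varphi_t$ and counting $|S_t\backslash S^\star|$. For $\tilde\varphi_t$ I would first pull out the explicit weight $w'(\phi(x_i)^\top\theta_{(j)}^\star)\in[a,b]$, costing one factor of $b$; then for each selected wrong sample I would bound its residual at $\theta_{(j)}^\star$ by its residual at $\theta_t$ plus $b\,|\phi(x_i)^\top(\theta_{(j)}^\star - \theta_t)|$ via the mean value theorem, and in turn bound the residual at $\theta_t$ by the excluded component-$j$ residuals using the same permutation-matrix inequality as in the proof of Theorem \ref{thm:random}. Since monotonicity of $w$ preserves the selection ordering, this permutation step goes through unchanged, and because excluded component-$j$ residuals scale like $b\,|\phi(x_i)^\top(\theta_t - \theta_{(j)}^\star)| + |e_i|$, the whole output gap is, up to a second factor of $b$, proportional to $\|\theta_t - \theta_{(j)}^\star\|_2$ plus noise. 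Feeding this into the quadratic form and invoking feature regularity gives $\tilde\varphi_t \lesssim \sqrt{2}\,b^2\,\psi^{+}(|S_t\backslash S^\star|)\,\|\theta_t - \theta_{(j)}^\star\|_2 + c\,b\sqrt{\psi^{+}(|S_t\backslash S^\star|)\,n}\,\sigma$, the two $b$'s being the source of the $\kappa_t b^2$ term in the final contraction.

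It remains to count $|S_t\backslash S^\star|$, which I would do by the two-component reduction of Theorem \ref{theorem:se}: a wrong-component sample enters $S_t$ only when its loss at $\theta_t$ undercuts an excluded component-$j$ sample, i.e. roughly when $a\,|\phi(x_i)^\top(\theta_{(k)}^\star - \theta_t)| \lesssim b\,|\phi(x_i)^\top(\theta_{(j)}^\star - \theta_t)| + \text{noise}$. The lower bound $a$ on the captured-sample side and the upper bound $b$ on the threshold side produce precisely the factor $b/a$ inside $\kappa_t$; applying the helper bound (Theorem \ref{lem:help}) to the reduced two-component problem then gives $|S_t\backslash S^\star|/n = \Theta(\kappa_t)$ with $\kappa_t$ as claimed. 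Substituting $\psi^{+}(|S_t\backslash S^\star|) = \Theta(\kappa_t n)$ back, the $b^2$-weighted $\tilde\varphi_t$ term contributes $c_1\eta\,\kappa_t b^2\,\|\theta_t - \theta_{(j)}^\star\|_2$, which I fold into the contraction to get $1 - c_1\eta(a^2 - \kappa_t b^2)$, and the noise part gives $c_1 b\sqrt{\kappa_t}\,\sigma$.

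I expect the main obstacle to be the bookkeeping at the junction of $\tilde\varphi_t$ and the selection: because $\tilde\varphi_t$ is evaluated at $\theta_{(j)}^\star$ whereas sample selection is driven by the loss at $\theta_t$, the transfer between these two points must be made with the $[a,b]$ derivative bounds and the monotonicity of $w$ in a way that keeps the two factors of $b$ (the $w'$ weight and the output-gap linearization) and the single factor $b/a$ (from comparing captured versus threshold residuals) separate, so that the $\kappa_t b^2$ correction and the $b\sqrt{\kappa_t}$ and $b\,\xi_t$ dependencies emerge exactly as stated rather than with spurious extra powers of $b$ or $a$.
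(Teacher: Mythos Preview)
Your proposal is correct and follows essentially the same route as the paper: the paper's own proof of Theorem~\ref{thm:app-2} consists of a single sentence stating that the argument is the same as in Theorems~\ref{thm:random} and~\ref{theorem:se}, i.e., plug Lemma~\ref{thm:nonlinear} in place of Lemma~\ref{thm:alg} and repeat the $\epsilon$-net lower bound on $\psi^{-}(\alpha n)$, the permutation comparison for $\tilde\varphi_t$, and the two-component reduction to Theorem~\ref{lem:help}. Your outline is in fact considerably more explicit than the paper about where each factor of $a$ and $b$ enters (the $w'$ weight, the mean-value linearization, and the captured-versus-threshold comparison), which is precisely the bookkeeping the paper leaves to the reader.
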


The proof idea for the above two Theorems are similar to what we have shown in the proof of Theorem \ref{thm:random} and Theorem \ref{theorem:se}. 

\begin{theorem} \label{lem:help}
Suppose we have two Gaussian distributions $\mathcal{D}_1 = \mathcal{N}(0, \Delta^2), \mathcal{D}_2=\mathcal{N}(0, 1)$. We have $\ratiotrue n$ i.i.d. samples from $\mathcal{D}_1$ and $(1-\ratiotrue)n$ i.i.d. samples from $\mathcal{D}_2$. Denote the set of the top $\ratiolearn n$ samples with smallest abstract values as $S_{\ratiolearn n}$, where $\ratiolearn < \ratiotrue$. Then, with high probability, for $\Delta\le 1$,  
at most $ \left(c \max\left\{\Delta \left(1-\ratiotrue\right)  n, \log n\right\} \right)$ samples in $S_{\ratiolearn n}$ are from $\mathcal{D}_2$.
\end{theorem}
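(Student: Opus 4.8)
The plan is to replace the random, data-dependent cutoff induced by keeping the $\alpha n$ smallest absolute values with a single deterministic threshold, and then to control the contributions of the two Gaussians separately by binomial tail bounds. Write $\Phi,\phi$ for the standard normal CDF and density, and let $\tau$ denote the $\lfloor \alpha n\rfloor$-th smallest value among $\{|x_i|\}_{i=1}^n$, so that (ties having probability zero) $S_{\alpha n}=\{i:|x_i|\le\tau\}$ and the quantity of interest is exactly the number of $\mathcal{D}_2$ samples with $|x|\le\tau$. The crucial observation is that this count is nondecreasing in $\tau$, so it suffices to (i) sandwich $\tau\le\tau_0$ w.h.p. for a well-chosen deterministic $\tau_0$, and (ii) bound the number of $\mathcal{D}_2$ samples falling below $\tau_0$.

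I would take $\tau_0=q_0\Delta$, where $q_0$ is the constant (depending only on $\alpha,\alpha^\star$) chosen so that $2\Phi(q_0)-1>\alpha/\alpha^\star$; such a finite $q_0$ exists because $\alpha/\alpha^\star<1$. For step (i), the number $M_1$ of $\mathcal{D}_1=\mathcal{N}(0,\Delta^2)$ samples with $|x|\le\tau_0$ is $\mathrm{Bin}(\alpha^\star n,\,2\Phi(q_0)-1)$, whose mean $\alpha^\star n(2\Phi(q_0)-1)$ exceeds $\alpha n$ by $\Theta(n)$; a Hoeffding bound then gives $M_1\ge\alpha n$ except with probability $e^{-\Omega(n)}$, which forces $\tau\le\tau_0$. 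For step (ii), the number $M_2$ of $\mathcal{D}_2=\mathcal{N}(0,1)$ samples with $|x|\le\tau_0$ is $\mathrm{Bin}((1-\alpha^\star)n,\,2\Phi(\tau_0)-1)$, and using the elementary inequality $2\Phi(t)-1=2\int_0^t\phi\le\sqrt{2/\pi}\,t$ --- where $\Delta\le1$ keeps $\tau_0$ bounded so this linearization is tight up to constants --- its mean obeys $\mu:=\mathbb{E}[M_2]\le\sqrt{2/\pi}\,q_0\,\Delta(1-\alpha^\star)n=\Theta(\Delta(1-\alpha^\star)n)$, which is exactly the target scale.

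It remains to convert the mean bound into a high-probability bound via a one-sided Chernoff bound split into two regimes. When $\mu\gtrsim\log n$, the multiplicative form gives $M_2\le2\mu=O(\Delta(1-\alpha^\star)n)$ w.h.p.; when $\mu\lesssim\log n$, the Poisson-type tail $\Pr[M_2\ge a]\le(e\mu/a)^a e^{-\mu}$ taken at $a=c\log n$ yields $M_2\le O(\log n)$ w.h.p., with the exponent made as large as desired by increasing $c$. Combining the regimes gives $M_2\le c\max\{\Delta(1-\alpha^\star)n,\log n\}$, and a union bound over the events of steps (i) and (ii) completes the argument, since on $\{\tau\le\tau_0\}$ the number of $\mathcal{D}_2$ samples in $S_{\alpha n}$ is at most $M_2$.

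The main obstacle I anticipate is precisely the decoupling underlying steps (i)--(ii): the cutoff $\tau$ is random and statistically correlated with which samples are selected, so one cannot directly apply concentration to ``the $\mathcal{D}_2$ samples inside $S_{\alpha n}$''. The resolution --- pinning the random $\tau$ beneath a deterministic $\tau_0=\Theta(\Delta)$ chosen large enough for $\mathcal{D}_1$ alone to supply the required $\alpha n$ samples, yet small enough that $\mathcal{D}_2$ can contribute only $\Theta(\Delta(1-\alpha^\star)n)$ in expectation --- is the heart of the matter; the surrounding Hoeffding and Chernoff estimates are then routine.
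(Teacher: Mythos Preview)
Your proposal is correct and follows essentially the same two-step structure as the paper's proof: first pin the selection threshold below a deterministic $O(\Delta)$, then use the linearization $\Pr[|u_2|\le\delta]\le\sqrt{2/\pi}\,\delta$ together with Bernoulli/Chernoff concentration to bound the number of $\mathcal{D}_2$ samples falling below that threshold. The only cosmetic difference is in the first step, where the paper cites sub-Gamma concentration of Gaussian order statistics to get $\delta\le c_2\Delta$, while you achieve the same bound more elementarily by a Hoeffding argument on the binomial count of $\mathcal{D}_1$ samples below $\tau_0=q_0\Delta$; your explicit decoupling of the random threshold from the $\mathcal{D}_2$ count is in fact cleaner than the paper's somewhat informal treatment.
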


\begin{proof}

\textbf{Step I.} 
Let $S_1^\star, S_2^\star$ be the set of samples from $\mathcal{D}_1$, $\mathcal{D}_2$, respectively, and let $S_1 := S_{\alpha^\star n}$. 
Consider $|S_1\cap S_2^\star|$, 
by definition, let $\delta$ be the threshold between samples in $S_1\cap S_1^\star$ and samples in $S_1^\star \backslash S_1$ . Since there are at least $(1-c_{\alpha})\alpha^\star n$ samples in $S_1^\star$ that are not in $S_1$, 
by the sub-Gamma property of order statistics of Gaussian random variables~\cite{boucheron2012concentration}, we know 
\begin{align}\label{eqt:affine}
\Pr\left[ \delta > F_{\Delta}^{-1}(c_{\alpha}) + c_0\Delta \right] \le e^{-c_1\alpha^\star n},
\end{align}
where $F_\Delta{}$ is the distribution of the abstract value of random variable from $\mathcal{D}_1$. 
As a result, $\delta\le c_2\Delta$  with high probability. 

\textbf{Step II.}
On the other hand, for a random variable $u_2\sim \mathcal{D}_2$, we know that $\Pr[|u_2|\le \delta] \le \sqrt{\frac{2}{\pi}} \delta$, which is tight for small $\delta$. 
Let $\mathcal{M}_{\delta, i}$ be the event \textit{sample $u_i $ from $\mathcal{D}_2$  has abstract value less than $\delta$}, and a Bernoulli random variable $m_{i,\delta}$ that is the indicator of event $\mathcal{M}_{\delta, i}$ holds or not. 
Then, 
\begin{align*}
\mathbb{E} \left[ \sum_{i=1}^{(1-\ratiotrue) n} m_{i,\delta} \right] \le \sqrt{\frac{2}{\pi}}{\delta (1-\ratiotrue) n}.
\end{align*}
For independent Bernoulli random variable $x_i$s, $i\in [\tilde{n}]$ with $X=\sum_i x_i$ and $\mu = \mathbb{E}[X]$, Chernoff's inequality gives~\cite{vershynin2010introduction}
\begin{align*}
\Pr\left[ X \ge t \right] \le e^{-t}
\end{align*}
for any $t \ge e^2 \tilde{n} \mu$. 
In the above setting we consider, we have with high probability $1-n^{-c}$, $\sum_{i=1}^{(1-\ratiotrue) n} m_{i,\delta} \le c  \max\{ (1-\ratiotrue) n \Delta, \log n \} $. 

\end{proof}

\begin{figure*}[ht]
	
	\hfill 
	\begin{subfigure}{0.49\columnwidth}
		\centering
		\includegraphics[width=\linewidth]{figs/inconsistency-random-01-v4.pdf}
		\caption*{(a)}
	\end{subfigure}
	\hfill 
	\begin{subfigure}{0.49\columnwidth}
		\centering
		\includegraphics[width=\linewidth]{figs/inconsistency-random-1-v4.pdf}
		\caption*{(b)}
	\end{subfigure}

	\begin{subfigure}{0.49\columnwidth}
		\centering
		\includegraphics[width=\linewidth]{figs/ratio-random-v2.pdf}
		\caption*{(c)}
	\end{subfigure}
	\hfill 
	\begin{subfigure}{0.49\columnwidth}
		\centering
		\includegraphics[width=\linewidth]{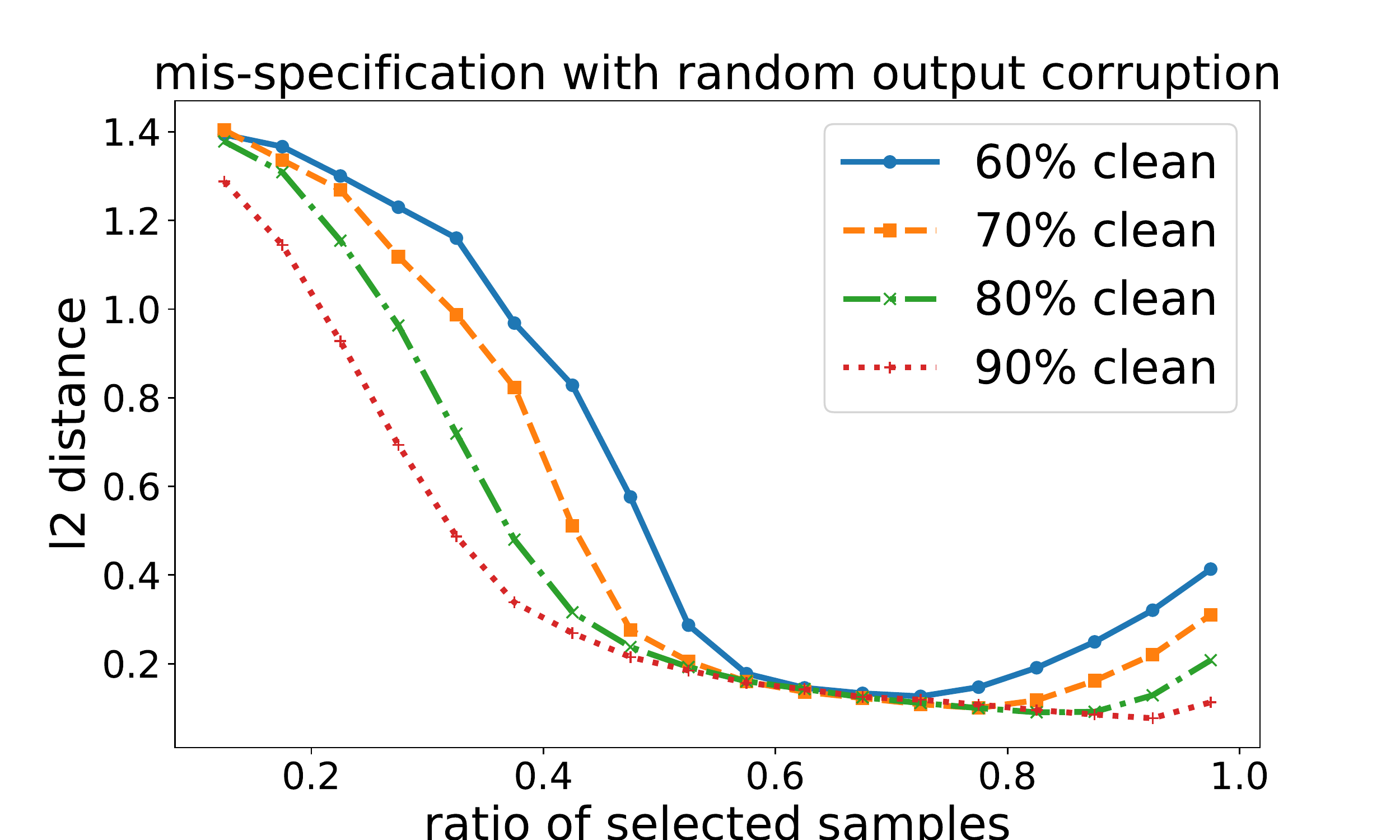}
		\caption*{(d)}
	\end{subfigure} 
	
	\begin{subfigure}{0.49\columnwidth}
		\centering
		\includegraphics[width=\linewidth]{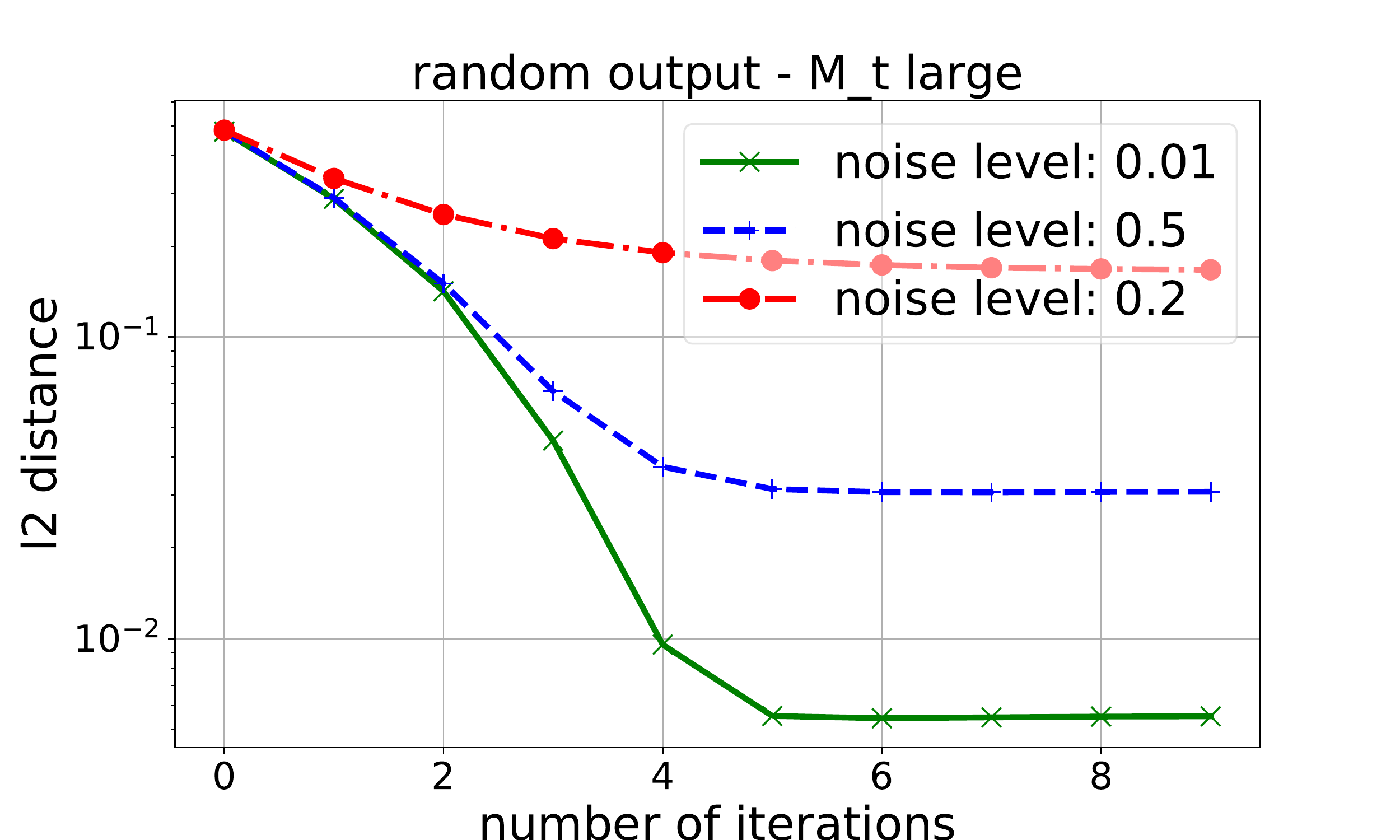}
		\caption*{(e)}
	\end{subfigure}
	\hfill 
	\begin{subfigure}{0.49\columnwidth}
		\centering
		\includegraphics[width=\linewidth]{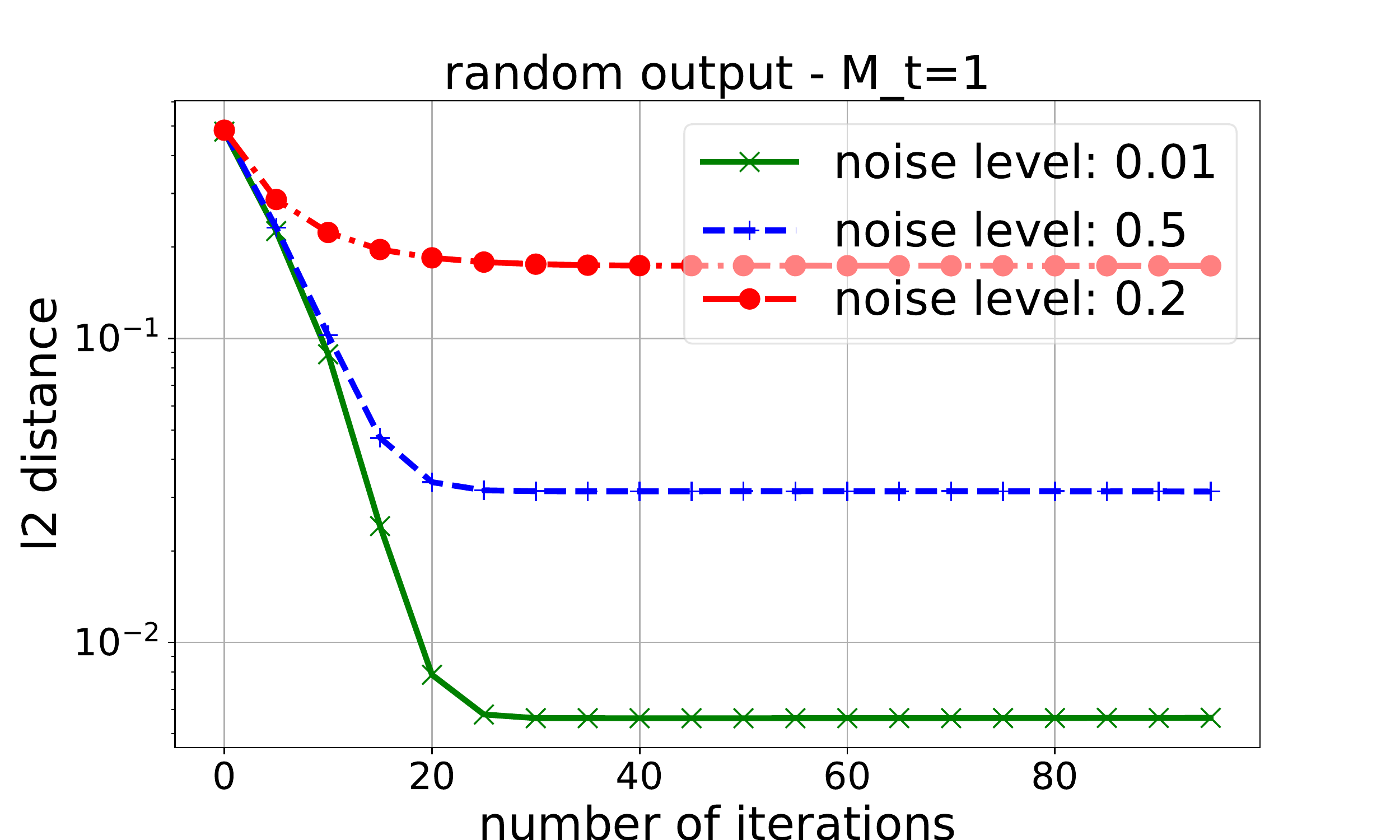}
		\caption*{(f)}
	\end{subfigure}	
	
	\caption{Synthetic experiments with random output: \textbf{(a): }  asymptotic performance under small measurement noise; \textbf{(b): } asymptotic performance under large measurement noise; \textbf{(c): }  performance under different good sample ratio; \textbf{(d): } the effect of mis-specification; 
	\textbf{(e): }  convergence rate of \algname with large $M_t$ (noise from $0.01$ to $0.2$ ; \textbf{(f): }  convergence rate of \algname with small $M_t$ (noise from $0.01$ to $0.2$ ).}
	\label{fig:sim-random}
\end{figure*}

\begin{figure*}[t]
	\begin{subfigure}{0.49\columnwidth}
		\centering
		\includegraphics[width=\linewidth]{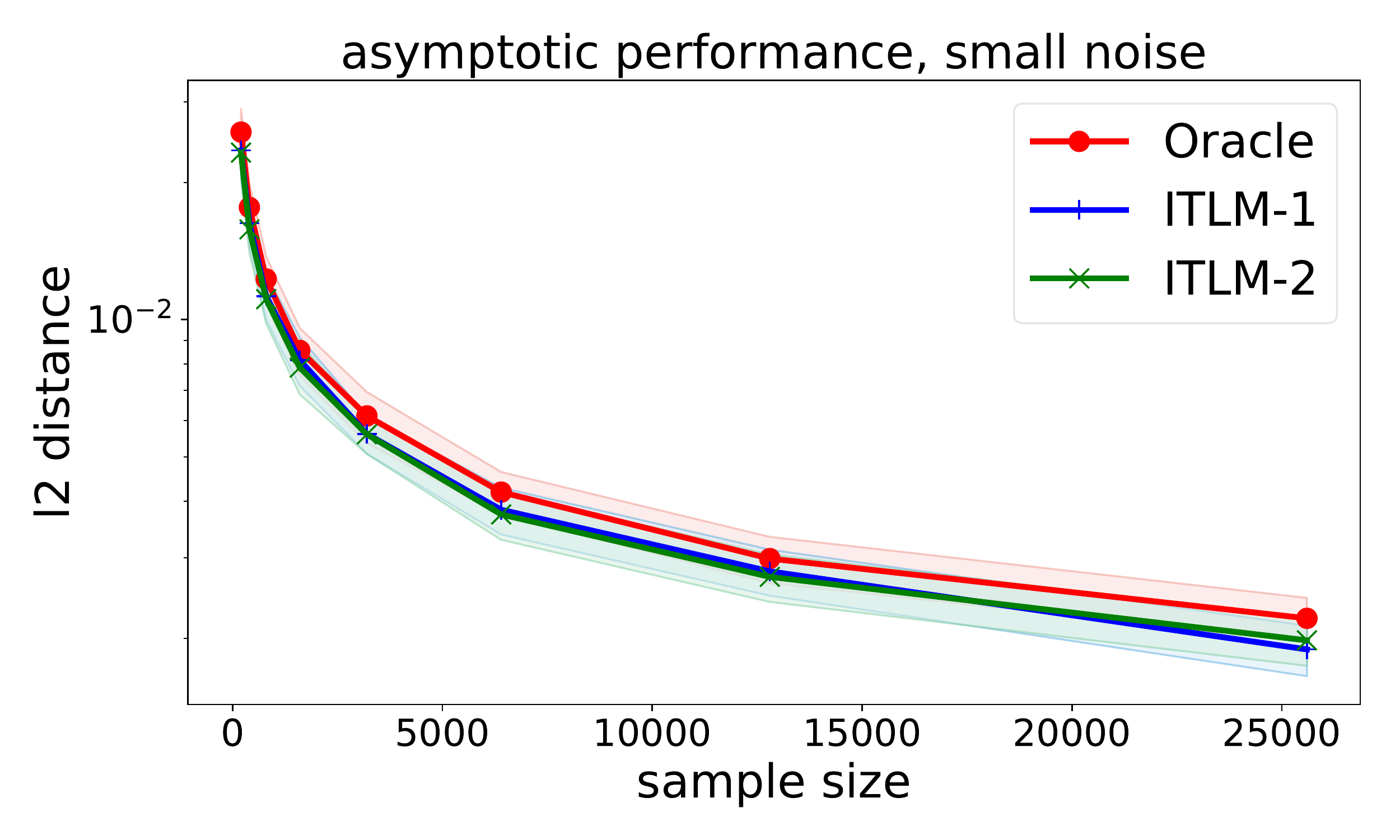}
		\caption*{(a)}
	\end{subfigure}
	\hfill 
	\begin{subfigure}{0.49\columnwidth}
		\centering
		\includegraphics[width=\linewidth]{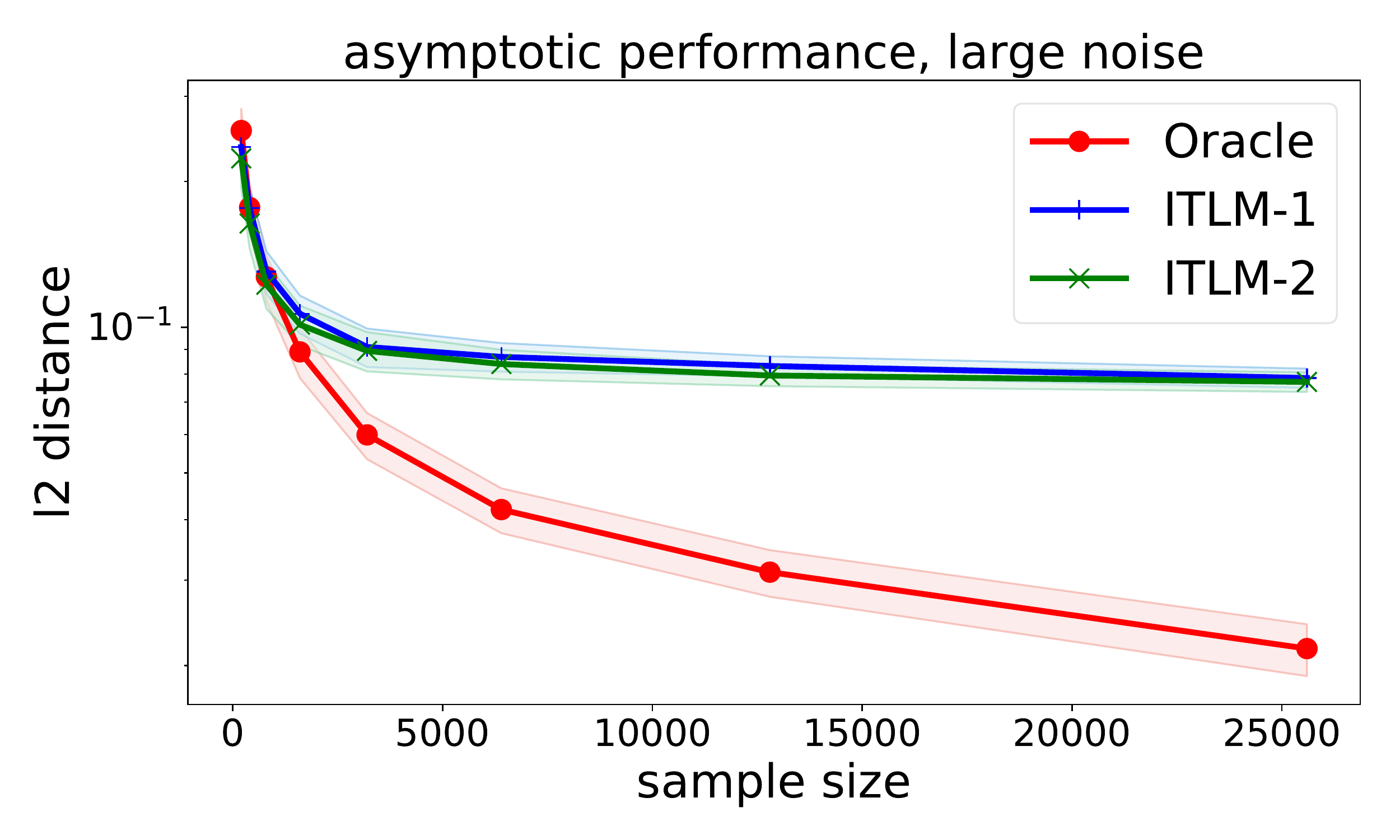}
		\caption*{(b)}
	\end{subfigure}
		
	\begin{subfigure}{0.49\columnwidth}
		\centering
		\includegraphics[width=\linewidth]{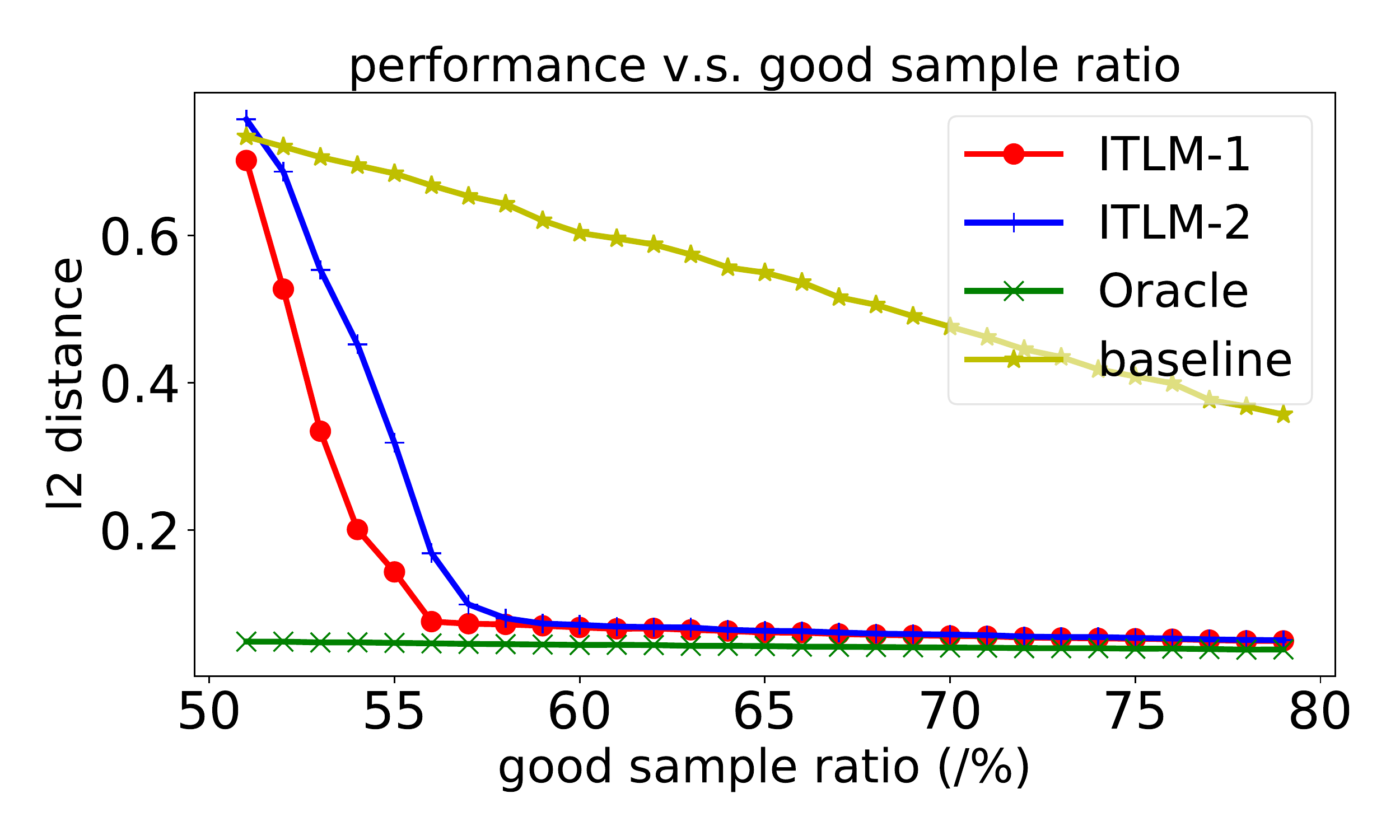}
		\caption*{(c)}
	\end{subfigure}
	\hfill 
	\begin{subfigure}{0.49\columnwidth}
		\centering
		\includegraphics[width=\linewidth]{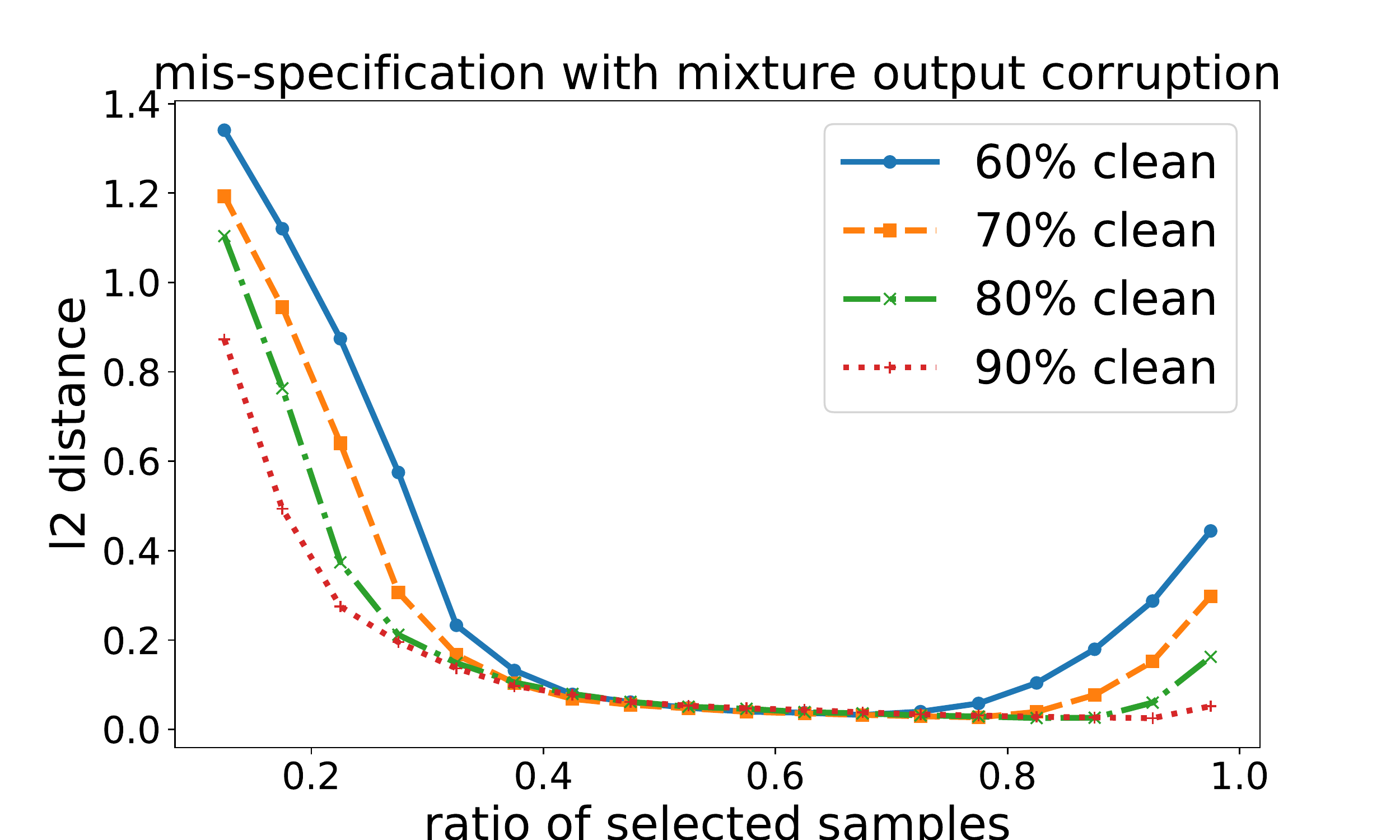}
		\caption*{(d)}
	\end{subfigure}
	
	\begin{subfigure}{0.49\columnwidth}
		\centering
		\includegraphics[width=\linewidth]{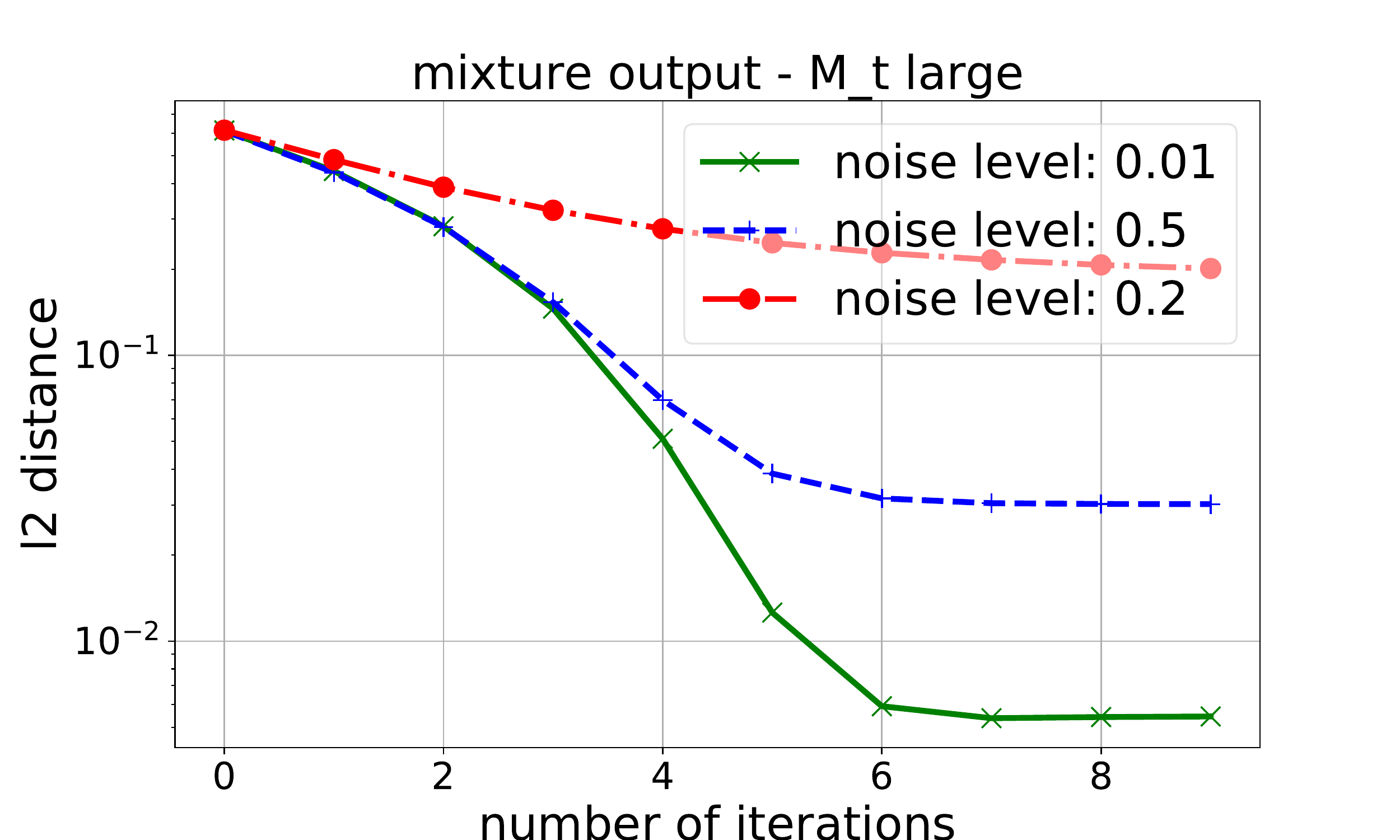}
		\caption*{(e)}
	\end{subfigure}
	\hfill 
	\begin{subfigure}{0.49\columnwidth}
		\centering
		\includegraphics[width=\linewidth]{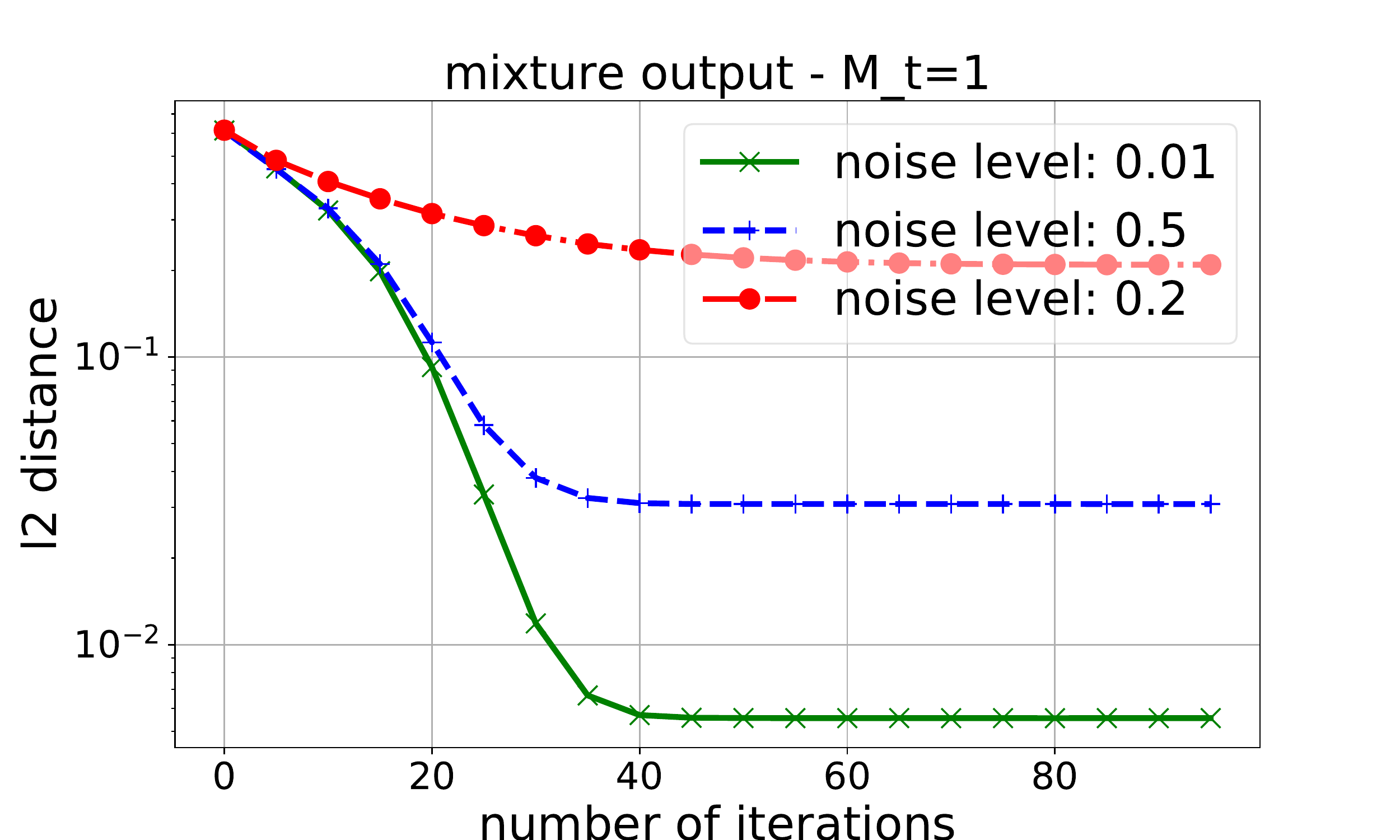}
		\caption*{(f)}
	\end{subfigure}	
	
	\caption{Synthetic experiments with mixture  output: \textbf{(a): }  asymptotic performance under small measurement noise; \textbf{(b): } asymptotic performance under large measurement noise; \textbf{(c): }  performance under different good sample ratio; \textbf{(d): } the effect of mis-specification; 
		\textbf{(e): }  convergence rate of \algname with large $M_t$ (noise from $0.01$ to $0.2$ ; \textbf{(f): }  convergence rate of \algname with small $M_t$ (noise from $0.01$ to $0.2$ ).}
	\label{fig:sim-mixture}
\end{figure*}

\begin{figure*}[t]
	\hfill 
	\begin{subfigure}{0.49\columnwidth}
		\centering
		\includegraphics[width=\linewidth]{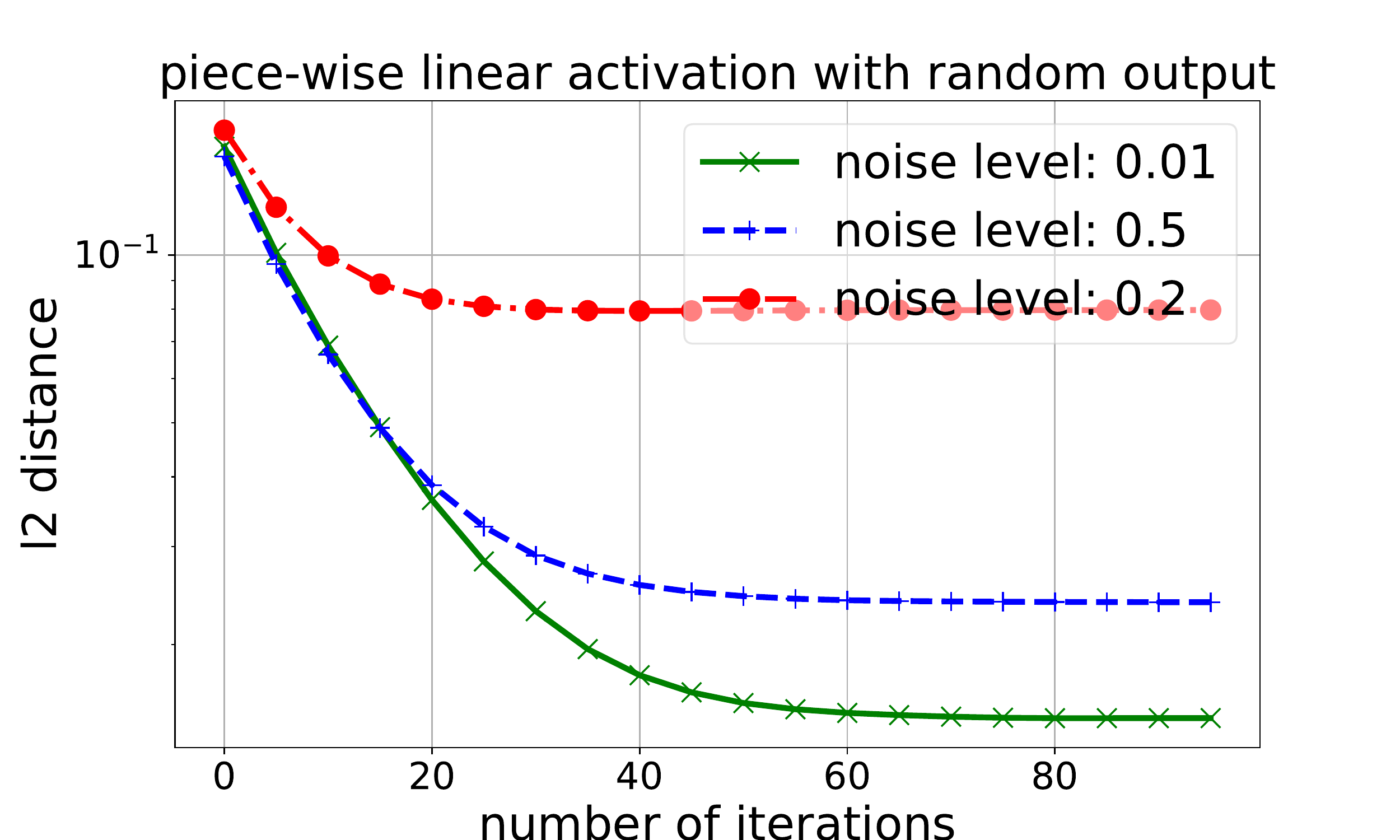}
		\caption*{(a)}
	\end{subfigure}
	\hfill 
	\begin{subfigure}{0.49\columnwidth}
		\centering
		\includegraphics[width=\linewidth]{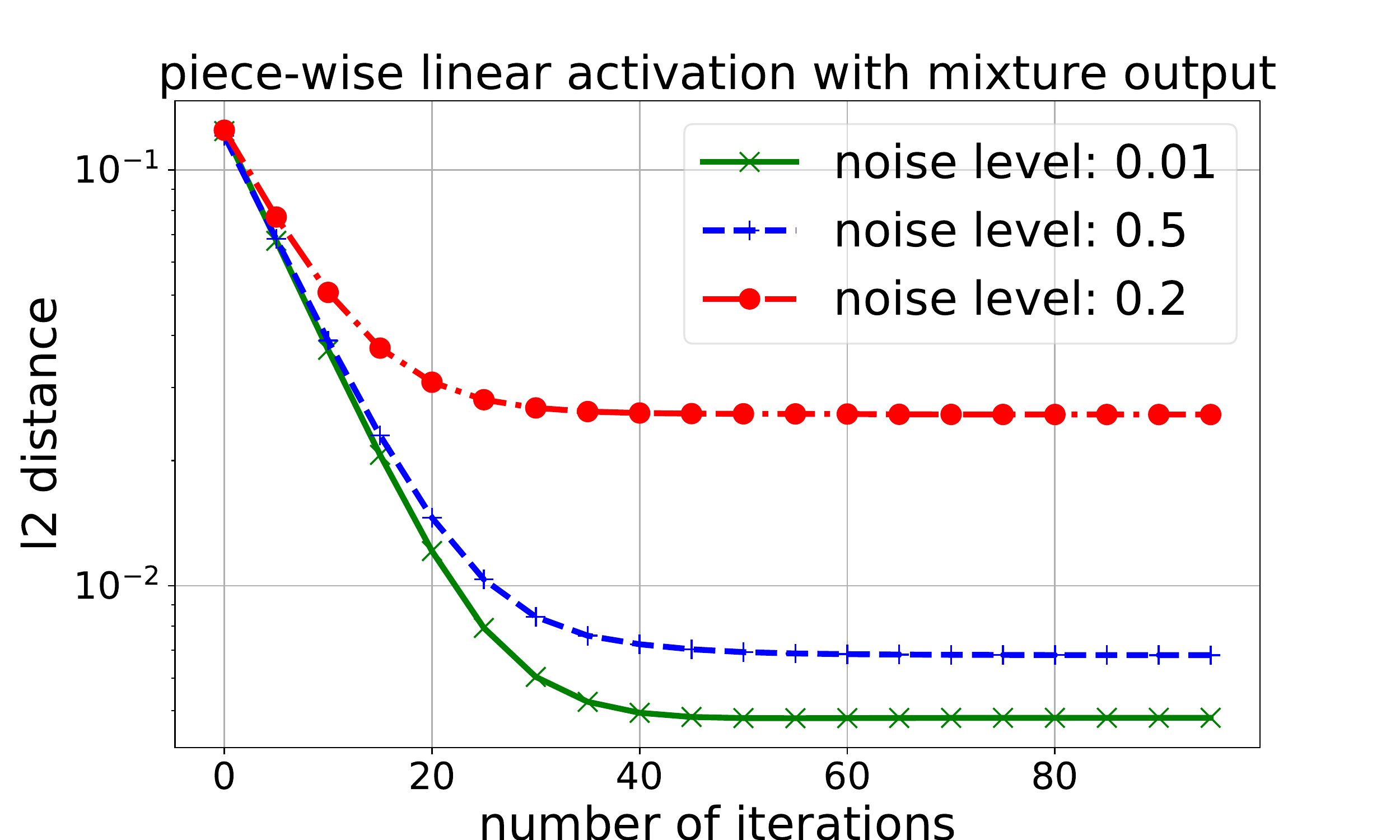}
		\caption*{(b)}
	\end{subfigure}
	
	\caption{Synthetic experiments with non-linear activation function: \textbf{(a): } $\|\theta^t - \theta^\star \|_2$ v.s. $t$ for random output setting; \textbf{(b): } $\|\theta^t - \theta^\star \|_2$ v.s. $t$  for  mixture output setting.}
	\label{fig:sim-nonlinear}
\end{figure*}

\section{Additional Synthetic Experiments}

In this section, we present the full results for the synthetic experiments, which aligns with our theoretic results in Section 5 (in the main paper). 
We focus on discussing behaviors for the linear case first, and then  provide results on the non-linear setting. 

\paragraph{Synthetic experiments for random output setting}
We generate the data according to (1), with $w(x) = x$, where we choose $\theta^\star$ to be a random unit vector with dimension $d=100$, every feature vector $\phi(x_i)$ is generated i.i.d. as a $d$-dimension normal spherical Gaussian. Random output $r_i$ is generated i.i.d. following $\mathcal{N}(0,1)$, which makes the distribution of both the bad and good outputs the same. We generate in total $n=1000$ samples, where $\alpha^\star$-fraction of them are clean samples and the rest are bad samples (with random output). The noise vector $e$ is generated i.i.d. Gaussian with variance $\sigma^2$. 

\paragraph{Synthetic experiments for mixed regression setting}
We generate the data following (2) with $w(x) = x$, for the settng of two components. The rest of the settings are similar to the random output setting, except for the bad samples, we select another $\theta_{(1)}$ with unit norm, orthogonal to $\theta^\star$. 

In Figure \ref{fig:sim-random} and Figure \ref{fig:sim-mixture}, we study:
\begin{itemize}
	\item (\textbf{Inconsistency}) The recovery performance as sample size increases, in both small-noise  and large-noise settings; 
	\item (\textbf{Recovery}) The recovery performance under different good sample ratios;
	\item (\textbf{Mis-specification}) The effect of mis-specified $\alpha$; 
	\item (\textbf{Convergence}) The convergence speed  under different noise levels, for both large and small $M_t$ settings.
\end{itemize}
All $y$-axis measures the $\ell_2$ distance, i.e., $\|\theta_t - \theta^\star\|_2$. 
Each data point  in the plots is based on $100$ runs of the same experiment to cancel out the random factors. 

\paragraph{Inconsistency} Figure \ref{fig:sim-random}-(a) \& (b) and Figure \ref{fig:sim-mixture}-(a) \& (b) show the result for asymptotic behavior. \algname-1 corresponds to our algorithm with large $M_t$, which corresponds to our analysis using the closed form solution at each update round. \algname-2 corresponds to our algorithm with $M_t=1$. The performance in both settings are quite similar: in the (b) plots with noise level $\sigma=1$,  as sample size increases, the oracle performance is getting better, while the performance of \algname does not keep improving, which shows the inconsistency of the algorithm. However, in the (a) plots with small noise ($\sigma = 0.1$), the difference between oracle and \algname is not significant, for sample size less than 25k. However, as sample size keeps getter larger, we will observe the behavior of inconsistency for \algname. The observation matches with our results in Theorem 7 \& 8, where our per-round convergence property will guarantee the recovered parameter is within a noise ball to the ground truth parameter. 

\paragraph{Recovery} Figure \ref{fig:sim-random}-(c) and Figure \ref{fig:sim-mixture}-(c) show the recovery performance when good sample ratio varies. \algname-1 and \algname-2 perform similarly. As good sample ratio gets larger, the algorithm is capable of recovering close to the ground truth with high probability. Here, noise level $\sigma=0.2$, $\alpha$ is set as $\alpha^\star -5\%$ by default. 

\paragraph{Mis-specification} In Figure \ref{fig:sim-random}-(d) and Figure \ref{fig:sim-mixture}-(d), we study the recovery behavior for different mis-specified $\alpha$s. We see that the recovery performance is not very sensitive to the selection of $\alpha$, especially when the dataset has more clean samples. 

\paragraph{Convergence} In Figure \ref{fig:sim-random}-(e) \& (f), and Figure \ref{fig:sim-mixture}-(e) \& (f), we see the convergence is more than linear before the learned parameter gets into the noise-level close to the ground truth, for both settings. 
This convergence behavior, for both the random output and mixture output settings, matches with our results in Theorem 7 and Theorem 8. 

\paragraph{Non-linear activation functions}
In Figure \ref{fig:sim-nonlinear}, we present convergence result for a non-linear setting: we choose $w()$ to be a piece-wise linear function, i.e., $w(x) = x$ if $x<0$, and $w(x) = 1.2x$ if $x\ge 0$. We keep all other settings exactly the same as in previous synthetic experiments. We see that the \algname has similar convergence behavior as in the linear setting.

\clearpage
\newpage 

\section{Additional Experiments and Implementation Details}

All experiments are implemented using MXNet and gluon. Here, we add more experimental details and supporting experimental results. 

\subsection{Details for the image classification task with random/systematic label errors}

\paragraph{Training details:} We use batch size $1000$ with learning rate $0.3$ for subsampled MNIST dataset, and batch size $256$ with learning rate $0.1$ for CIFAR-10 dataset, with naive sgd as the optimizer. We use $80$ epochs for naive training, and decrease step size at the $50$ epoch by $5$. The results for MNIST dataset is reported as the median of $5$ random runs. 
In all the experiments, there is \textbf{no clean sample} in both the training set and the validation set. The reported accuracy is based tested on the true validation set, but the algorithm saves the best model based on the accuracy on the \textbf{bad} validation set, which has the same corruption pattern as the training set. 

\subsection{Additional experiments for image generation}

\paragraph{Training details:} We use the popular DC-GAN architecture, and the loss for training is re-written in (\ref{eqt:gan-loss}), which is also used for the update step in \algname. 

\begin{equation} 
	L_S^{\mathtt{GAN}}(\theta^D,\theta^G) := \frac{1}{|S|}\sum_{i\in S} \log D_{\theta^D}(s_i) +  \mathbb{E}_{z\sim p_\mathcal{Z}(z)}\left[ \log(1-D_{\theta^D}(G_{\theta^G}(z))) \right] 
	\label{eqt:gan-loss}
\end{equation}
\begin{align}
	S_t \leftarrow \arg\min_{S : |S|=\alpha n} \sum_{i \in S} D_{\theta^D_t}(s_i) \label{eqt:gan-step1}
\end{align} 

\begin{figure}
	\begin{subfigure}{0.23\linewidth}
		\includegraphics[width=\linewidth]{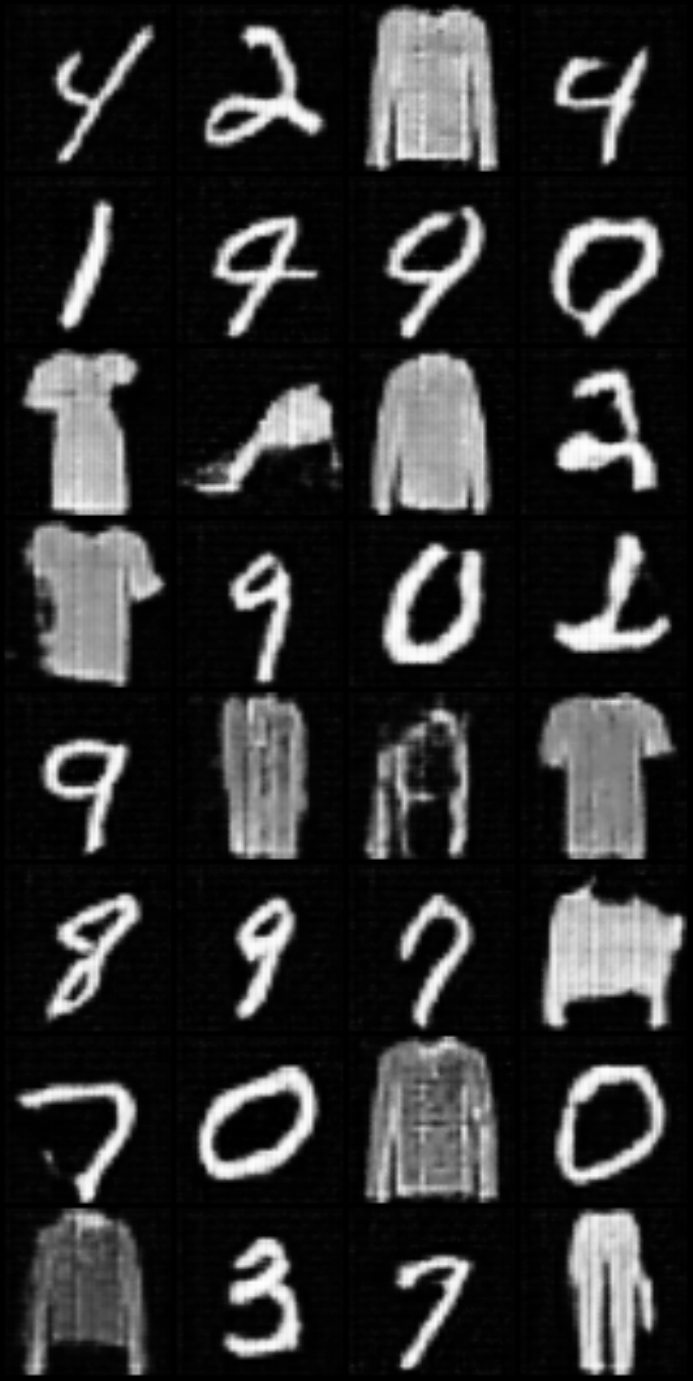}
		\caption{baseline} \label{fig:mnist-fashion-failure-a}
	\end{subfigure}
	\hspace*{\fill}
	\begin{subfigure}{0.23\linewidth}
		\includegraphics[width=\linewidth]{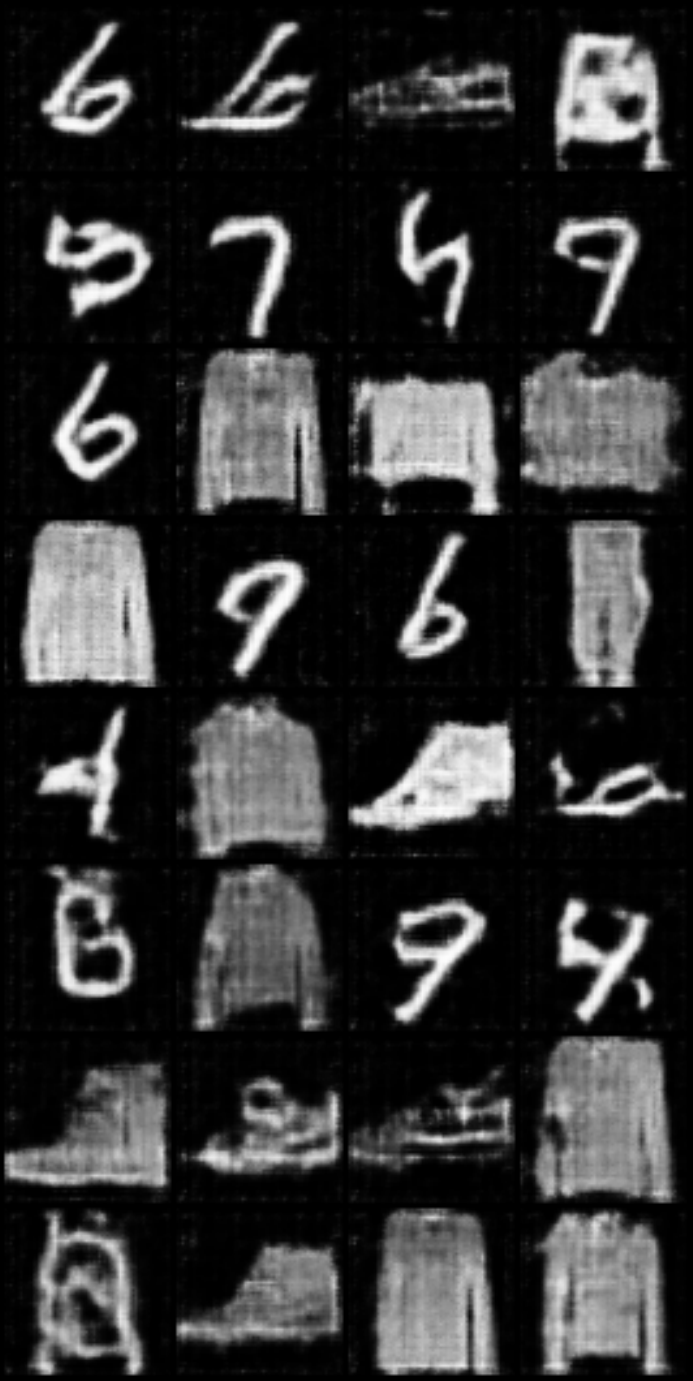}
		\caption{1st iteration} \label{fig:mnist-fashion-failure-b}
	\end{subfigure}
	\hspace*{\fill}
	\begin{subfigure}{0.23\linewidth}
		\includegraphics[width=\linewidth]{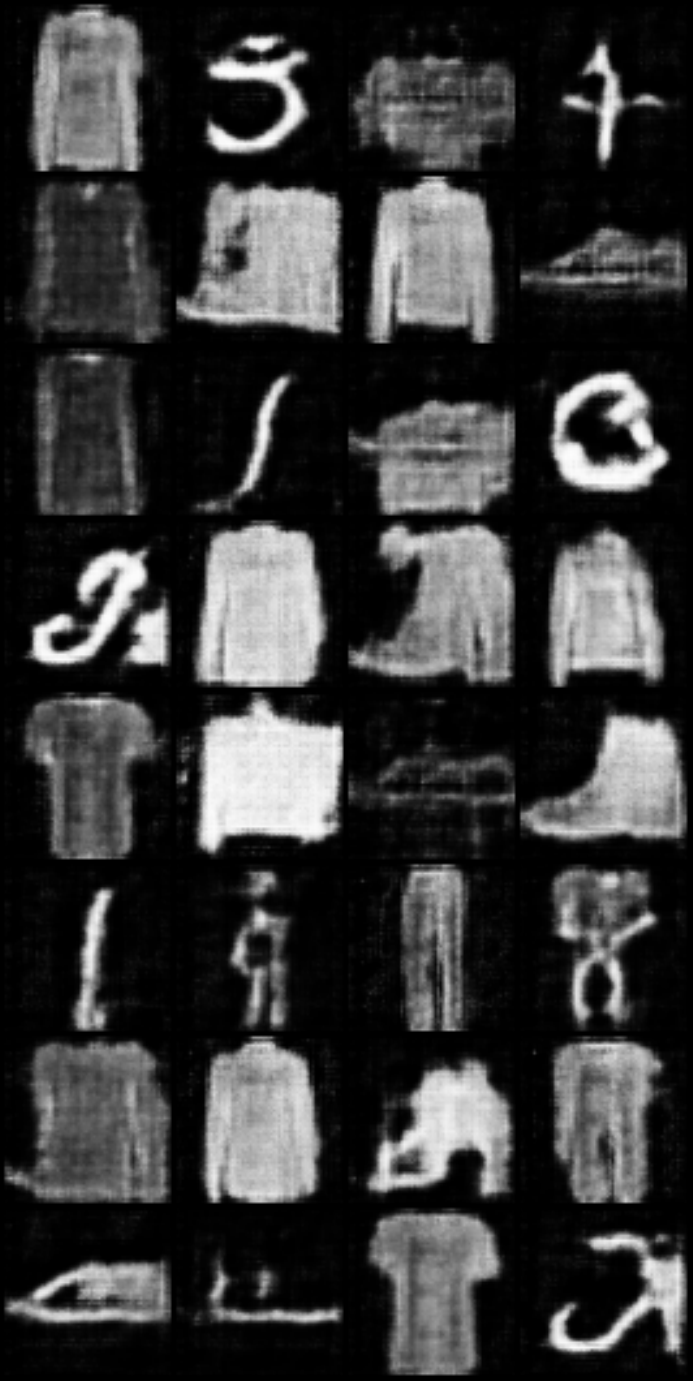}
		\caption{3rd iteration} \label{fig:mnist-fashion-failure-c}
	\end{subfigure}
	\hspace*{\fill}
	\begin{subfigure}{0.23\linewidth}
		\includegraphics[width=\linewidth]{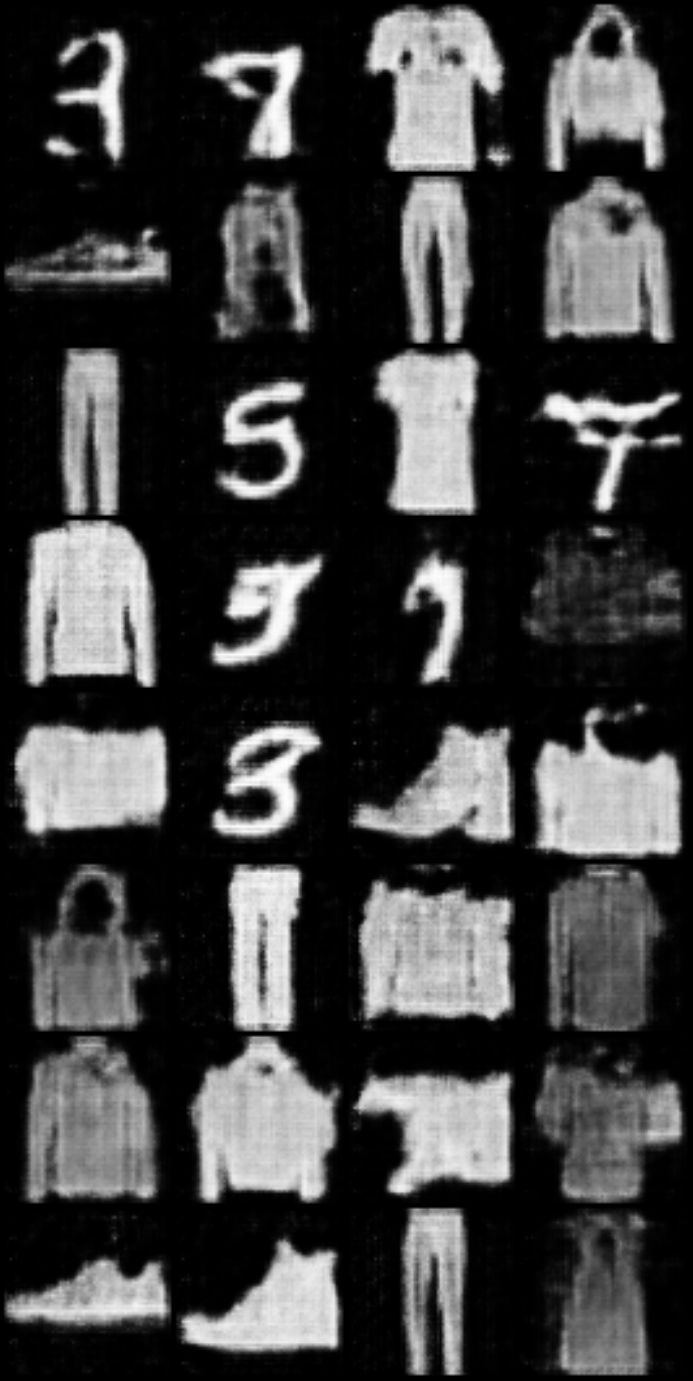}
		\caption{5th iteration} \label{fig:mnist-fashion-failure-d}
	\end{subfigure}
	\hspace*{\fill}
	\caption{{\bf Illustrative failure case:} This figure shows that when the fraction of bad samples is too large, ILFB cannot clean them out. The setting is exactly the same as in Figure 3 (in the main paper), but now with 60\% MNIST clean images + 40\% Fashion-MNIST bad images. We can see that now the $5^{th}$ iteration still retains the fake fashion images.}
	\label{fig:mnist-fashion-failure}
\end{figure}

\begin{table}[!ht]
	\centering
	\caption{MNIST GAN: comparison with other choices}
	\footnotesize
	\begin{tabular}{lccccccc}
		\toprule 
		dataset & \multicolumn{7}{c}{MNIST}  \\
		\midrule
		$\alpha^\star=\frac{\mbox{\# clean}}{\mbox{\# total}}$  & \textbf{Baseline} & \textbf{\algname} &  \textbf{Centroid} & \textbf{1-Step}  & \textbf{$\Delta\tau = 10\%$} & \textbf{$\Delta\tau = 15\%$} & \textbf{$\Delta\tau = 20\%$} \\
		\midrule
		70\% & 70 & 97.00 & 61.46 &  77.77 & 83.33 & 78.06 & 83.59  \\
		80\%  & 80 & 100.00 & 77.46  &  76.84&  98.80 & 99.56 & 97.77  \\
		90\% & 90 & 100.00 & 89.57  &  91.90 & 98.85 & 99.01 & 98.04   \\
		\bottomrule
	\end{tabular}
	\label{app:table-gan}
\end{table}

\paragraph{Experimental settings:} 
In this part, we present additional experimental results, in order to verify the performance of \algname under different parameter settings, and compare with other algorithms. More specifically, we present the results using the following methods/algorithms:

\begin{itemize}
	\item \textbf{Baseline}: naive trainig using all the samples; 
	\item \textbf{\algname}: our proposed iterative learning algorithm with $5$ iterations, using a mis-specified $\tau$ which is $5\%$ less than the true value; 
	\item \textbf{Centroid}: using the centroid of the input data to filter out outliers. For classification task, we calculate the centroids for the samples with the same label/class and filter each class separately; 
	\item \textbf{1-Step}: \textbf{\algname} algorithm with a single iteration;
	\item \textbf{$\Delta \tau = \tau^\star - \tau \in \{10\%, 15\%, 20\%\}$}: \textbf{\algname} under different mis-specified $\tau$ value,
\end{itemize}
under MNIST generation with Fashion-MNIST images.

For the generation task (Table \ref{app:table-gan}), we present the ratio of true MNIST samples selected by each method. For the baseline method, since the DC-GAN is trained using all samples, the reported value is exactly the $\tau^\star$. 

\paragraph{Results:}
Table \ref{app:table-gan} shows the performance of generation quality under different noise levels. We observe that centroid method does not work, which may due to the fact that all MNIST and Fashion-MNIST images are hard to be distinguished as two clusters in the pixel space. Notice that there are in fact $20$ clusters ($10$ from MNIST, and $10$ from Fashion-MNIST), and we are interested in $10$ of them. \algname works well since it automatically learns a clustering rule when generating on the noisy dataset. For example, for $\tau^\star = 80\%$, even with a mis-specified $\tau = 60\%$, \algname is capable of ignoring almost all bad samples. Again, we also observe significant improvement of \algname over its $1$-step counterpart. 

We also have results showing that \algname  works well for generation when the corrupted samples are pure Gaussian noise. However, we do not think it is a practical assumption, and the result is not presented here. 

In Figure \ref{fig:mnist-fashion-failure}, we present a result under large bad sample ratio: 60\% clean MNIST images with 40\% bad Fashion-MNIST images. The algorithm, after the 5-th iteration, tries to filter out  all digit-type images.

\end{document}